\newcommand{\cmark}{\ding{51}}
\newcommand{\xmark}{\ding{55}}
\newtheorem{theorem}{Theorem}
\newtheorem{lemma}[theorem]{Lemma}
\newtheorem{corollary}[theorem]{Corollary}
\DeclareMathOperator*{\argmin}{arg\,min}
\DeclareMathOperator{\diag}{diag}
\newcommand{\G}{\mathcal{G}}
\newcommand{\E}{\mathcal{E}}
\newcommand{\V}{\mathcal{V}}
\newcommand{\bG}{\bar{\mathcal{G}}}
\newcommand{\bF}{\bar{\mathcal{F}}}
\newcommand{\bE}{\bar{\mathcal{E}}}
\newcommand{\bV}{\bar{\mathcal{V}}}
\newcommand{\D}{\mathcal{D}}
\newcommand{\W}{\mathcal{W}}
\renewcommand{\AA}{\mathcal{A}}
\renewcommand{\L}{L}
\newcommand{\LL}{\mathcal{L}}
\newcommand{\M}{M}
\newcommand{\MC}{\mathcal{M}}
\newcommand{\hMC}{\hat{\mathcal{M}}}
\newcommand{\HH}{\mathcal{H}}
\newcommand{\hHH}{\hat{\mathcal{H}}}
\newcommand{\B}{B}
\newcommand{\RR}{R}
\newcommand{\OO}{\Omega}
\newcommand{\tB}{\widetilde{\B}}
\newcommand{\vx}{\bm{x}}
\newcommand{\vz}{\bm{z}}
\newcommand{\vu}{\bm{u}}
\newcommand{\vm}{\bm{m}}
\newcommand{\vn}{\bm{n}}
\newcommand{\vs}{\bm{s}}
\newcommand{\vv}{\bm{v}}
\newcommand{\vy}{\bm{y}}
\newcommand{\bomega}{\bar{\omega}}
\def\bell{\bar{\ell}}
\begin{document}

\title{How is Distributed ADMM Affected by Network Topology?}

\author{Guilherme Fran\c ca}
\email{guifranca@gmail.com} 
\affiliation{Boston College, Computer Science Department}
\affiliation{Johns Hopkins University, Center for Imaging Science}
  
\author{Jos\' e Bento}
\email{jose.bento@bc.edu}
\affiliation{Boston College, Computer Science Department}

\begin{abstract}
When solving consensus optimization problems over a graph, there is 
often an explicit characterization of the convergence rate of 
Gradient Descent (GD) using the spectrum of the graph Laplacian.
The same type of problems under the Alternating Direction Method
of Multipliers (ADMM) are, however, poorly understood. 
For instance, simple but important non-strongly-convex
consensus problems have not yet being analyzed, 
especially concerning the dependency
of the convergence rate on the graph topology.
Recently, for a non-strongly-convex consensus problem, 
a connection between distributed ADMM and lifted Markov chains
was proposed,
followed by a conjecture that ADMM 
is faster than GD by a square root factor in its convergence time, 
in close analogy to the mixing
speedup
achieved by lifting several Markov chains. 
Nevertheless, a proof of such a claim is is still lacking.
Here we provide a full characterization of the convergence of  
distributed over-relaxed
ADMM for the same type of consensus problem in terms of the topology 
of the underlying graph. Our results provide explicit formulas for
optimal parameter selection in terms of the second largest eigenvalue
of the transition matrix of the graph's random walk.
Another consequence of our results is a proof of the 
aforementioned conjecture,
which interestingly, we show it is valid for any graph, 
even the ones whose random walks cannot be 
accelerated via Markov chain lifting.
\end{abstract}

\maketitle

\section{Introduction}

Optimization methods are at the core
of statistics and machine learning.
In this current age of ever-larger datasets, traditional in-memory 
methods do not scale, so distributed algorithms play a fundamental role.
The Alternating Direction Method of Multipliers (ADMM) 
is one such excellent example 
since it is extremely robust, for instance does not assume differentiability 
of the objective function, it is often easy to implement, 
and easily distributed \cite{Boyd}. Moreover, 
ADMM attains global linear convergence for separable and convex functions
\cite{Hong2017}, and 
is guaranteed to converge even for several 
non-convex problems \cite{Wang}, and empirically for 
many others \cite{Bento1,Bento2,Tom}. 
Nevertheless, its convergence rate is still, in general, not 
fully understood. Most existing results
only provide upper bounds on its asymptotic convergence rate without
tightness guarantees.
For more precise results, 
strong-convexity is usually assumed, even in centralized settings
\cite{FrancaBento,Jordan,giselsson2017linear, Deng2016}.
Among practitioners ADMM also has a fame of being hard to tune.

In this paper we analyze how
the \emph{exact and optimally tuned} asymptotic convergence 
rate of a distributed 
implementation of 
over-relaxed ADMM depends on the topology of an underlying network. Through this network, 
several agents solving local problems share messages to one another
with the common goal of solving a large optimization problem. 
One of our motivations is to understand, in a quantitative way, 
if ADMM is more or less sensitive to 
the network topology than distributed 
Gradient Descent (GD).
We focus on a non-strongly-convex quadratic consensus problem 
not previously analyzed under ADMM.

Since the convergence
rate of the algorithm 
may be dominated by many properties of the objective function, such as 
its curvature, and since
our goal is
to focus only on
the topology of the network, we 
choose
an objective function that emphasizes how variables
are shared among its terms.
Consider an undirected, connected, and simple graph
$\G = (\V, \E)$,
where
$\V$ is the set of vertices and $\E$ the set of edges. Let $\vz \in
\mathbb{R}^{|\V|}$ be the set of variables, 
where $z_i \in \mathbb{R}$ denotes the $i$th component of $\vz$ and
is associated to node $i \in \V$.
We study the following consensus problem over $\G$:
\begin{equation} \label{eq:gen_quad}
\min_{ \vz \in \mathbb{R}^{|\V|}} \bigg \{ 
f(\bm{z}) = \dfrac{1}{2} \sum_{(i,j)  \in \E} (z_i - z_j)^2 
\bigg \} .
\end{equation}
Our goal is to provide a precise answer on
how the convergence rate  of ADMM when 
solving problem \eqref{eq:gen_quad}
depends on properties of $\G$. 
We also want to compare the convergence rate of ADMM 
with the convergence rate of GD when
solving the same consensus problem.

The optimization problem \eqref{eq:gen_quad} is deceptively simple, having
the trivial solution $x_i = x_j$ if $i$ and $j$ belong to the
same connected component of $\G$. However, 
it is not immediately obvious 
to which of these infinitely many possible solutions a given distributed 
algorithm will converge to.
Different agents of the distributed implementation 
have to communicate to agree on the final solution, 
and the speed at which they reach consensus is a non-trivial problem. 
For instance, if we solve \eqref{eq:gen_quad} through ADMM we have one agent 
per term of the objective function, and each agent has 
local copies of all the variables involved. 
The final solution is a vector where each component equals 
the average of the initial values of these local variables. 
Therefore, unsuspectingly, we have solved a distributed-average 
consensus problem, although in different form than typically studied, see e.g. \cite{erseghe2011fast,Ghadimi2014averaging}.
Moreover, the objective function \eqref{eq:gen_quad} naturally
appears in several interesting problems. A classical example is 
the graph interpolation problem \cite{zhu2003semi}, where one 
solves \eqref{eq:gen_quad} subject to $z_i = c_i$ for $i \in \V'$ where
$\V' \subset \V$ and $c_i$ is a fixed constant.
The final solution
has values on each node of $\G$ such that the nodes in $\V'$
have the pre-assigned values, and the remaining nodes in $\V \backslash \V'$
have values close to the values of their neighboring nodes 
depending on the topology of $\G$. Our analysis of ADMM to \eqref{eq:gen_quad}
may provide insights into other problems such as graph interpolation.
Furthermore, it can give insights on how one can optimally split
a decomposable objective function for a given optimization problem.

Let us formalize our problem.
Define the asymptotic convergence rate, $\tau$, of
an algorithm by
\begin{equation}\label{eq:conv_def}
\log \tau \equiv \lim_{t\to \infty} 
\max_{\| \vz^0 \| \le 1} \left\{
\dfrac{1}{t} 
\log \| \vz^{\star} - \vz^t  \|\right\},
\end{equation}
where $t$ is the iteration time, and $\vz^\star$ is a minimizer of
\eqref{eq:gen_quad} that the iterate $\vz^t$ converges to.
Denote $\tau_A$ and $\tau_G$ be the convergence rates of
ADMM and GD, respectively. We want to obtain the dependence
$\tau_A = \tau_A(\G)$ and $\tau_G = \tau_G(\G)$, and also be able
to compare the optimal rates,
$\tau^\star_A$ and $\tau^\star_G$,
when the parameters of both algorithms are optimally chosen.

The present work is motivated by an
interesting idea recently proposed in \cite{FrancaBentoMarkov},
which relates distributed ADMM to \emph{lifted} Markov chains.
It was shown that
\begin{enumerate*}[label=(\roman*)]
\item ADMM is related to a quasi-Markov chain $\hMC$,
\item  GD is related to a Markov chain $\MC$, and
\item $\hMC$ is a lifting of $\MC$.
\end{enumerate*}
In general, a lifted Markov chain $\hMC$ is obtained from
the base Markov chain $\MC$ by expanding its state space in 
such a way that
it is possible to \emph{collapse}
$\hMC$ into $\M$ and $\hat{\pi}$ into $\pi$, where $\hat{\pi}$ and $\pi$ are
the respective
stationary distributions (see \cite{chen1999lifting} for details).
The hope is that if $\MC$ is slow mixing 
one can sample from $\pi$ 
by collapsing samples from $\hat{\pi}$, where $\hMC$ mixes faster than $\MC$.
A measure of the time required for $\MC$ to reach stationarity
is given by the mixing time, denoted by $\HH$.
For many useful cases, the mixing time of the lifted chain, 
$\hHH$,
is smaller than $\HH$.
However, the achievable speedup is limited. 
For example, if $\MC$ is irreducible then
$\hHH \ge C \sqrt{\HH}$ for some constant $C \in (0,1)$, and
there are several cases that actually
achieve the lower bound,
$\hHH \approx C \sqrt{\HH}$.
The gain can be marginal if both
$\MC$ and $\hMC$ are reversible, where one has $\hHH \ge C \HH$. 
Furthermore,
for some graphs, for example graphs with \emph{low conductance},
lifting never produces a significant speedup.

In \cite{FrancaBentoMarkov} the quantity $(1 - \tau_A)^{-1}$ plays the role
of $\hHH$, while $(1-\tau_G)^{-1}$ plays the role of $\HH$. 
Based on the lifting
relations between ADMM and GD, and 
the many cases where $\hHH \approx C \sqrt{\HH}$, it was conjectured that 
\begin{equation}\label{eq:conjecture}
1-\tau_A^\star \ge C \sqrt{1 - \tau_G^\star}
\end{equation}
for some constant $C$. Above, 
$\tau^\star$ denotes the optimal convergence rate,
attained under optimal parameter selection.
It is important that both algorithms are optimally tuned 
since a poorly tuned ADMM can be slower than a well-tuned GD. 
The inequality
\eqref{eq:conjecture}
was supported by empirical evidence but its proof remains lacking.
As pointed out \cite{FrancaBentoMarkov}, 
the inequality \eqref{eq:conjecture} is much stronger than the analogous 
relation in lifted Markov chains theory. The
claim is that it holds for \emph{any}
graph $\G$, even the ones whose Markov chains do not accelerate
via lifting.

The rest of the paper is organized as follows.
In Section~\ref{sec:related_work} we mention related work and point
out the differences and novelty in our approach.
In Section~\ref{sec:admm_message} we introduce important notation and
concepts by explicitly writing distributed over-relaxed ADMM as
a message passing algorithm.
We then present our main contributions, which in short
are:
\begin{enumerate*}[label=(\roman*)]
\item in Section \ref{sec:admm_spectrum},
we prove a relation between the spectrum of a nonsymmetric 
matrix related to the evolution of ADMM and the spectrum of the 
transition matrix of a random walk on $\G$. This relates
ADMM to random walks on $\G$, capturing the topology of the
graph. \item We also prove explicit formulas for optimal parameter
selection, yielding interesting relations to the second largest
eigenvalue of the transition matrix of the graph $\G$ and the spectral gap.
\item 
In Section \ref{sec:admm_vs_gd}, we resolve the conjectured 
inequality \eqref{eq:conjecture}, and moreover, provide an upper bound.
\end{enumerate*}
The proofs of our main results are in the Appendix.

\section{Related Work}
\label{sec:related_work}

Although problem \eqref{eq:gen_quad} is simple
our results cannot be directly derived
from any of the many existing results on the convergence of ADMM.
First of all, we compute the exact
asymptotic convergence rate when distributed ADMM is optimally tuned, while
the majority of previous works only compute non-optimal upper bounds for
the global convergence rate.
Second, our convergence rate is linear, and most works able
to prove tight linear convergence assume strong convexity 
of at least some of the functions in the objective; see for instance
\cite{shi2014linear,giselsson2014diagonal,giselsson2017linear,Deng2016}. 
It is unclear if we can cast our non-strongly-convex problem in 
their form and recover our results from their bounds, given
especially that most of these results are not simple or explicit 
enough for our purposes. 
These bounds often have a complex dependency on problem's parameters,
but can be numerically optimized 
as suggested by \cite{giselsson2014diagonal,giselsson2017linear}. 
It is also unknown if these numerical procedures lead to optimal
rates of convergence.
Linear convergence rates were proven without strong convexity 
\cite{Hong2017}, but these bounds are too general and not tight 
enough for our purposes.
Moreover, many results not requiring strong convexity focus on 
the convergence rate of the objective function, as opposed to this paper
which studies the convergence rate of
the variables; see for example \cite{davis2014convergence, davis2017faster}.


In \cite{erseghe2011fast,Ghadimi2014averaging}  
ADMM is applied to the
consensus problem
$f(\vz) = \sum_{i \in \V }\sum (z_i - c_i)^2$,
subject to  $z_i = z_j$ if $(i,j) \in \E$, where $c_i > 0$ are
constants. This problem, which is related to
several optimization-based distributed averaging algorithms, is strongly-convex and not
equivalent to
\eqref{eq:gen_quad}.
Several papers consider $f(\vz) = \sum_i f_i(\vz)$ with
ADMM updates that are insensitive to
whether or not $f_i(\vz)$ depends on a subset
of the components of $\vz$; see
\cite{wei2012distributed, ling2015dlm, 
makhdoumi2014broadcast,makhdoumi2017convergence,
ling2016communication} and references therein.
In our
setting,
distributed ADMM is a message-passing algorithm where
the messages between agents $i$ and $j$
are related only to the variables shared by functions
$f_i$ and $f_j$. Thus, our implementation is fully local, and not only the
processing but also the data is distributed. 
These papers solve $\min_{\bf z} \sum_i f_i(\vz)$ 
over a communication network
by recasting the problem as  $\min \sum_i f_i(\vx_i)$
subject to $\vx_i = \vx_j$ if $(i,j)$ are edges in the network.
Slight variations of this transformation and the 
definition  of the communication
network exist.
Several of these works try to understand how topology of the network
affects convergence, for instance
\cite{ling2015dlm,makhdoumi2014broadcast,makhdoumi2017convergence,
ling2016communication}.
The results of 
\cite{makhdoumi2014broadcast,makhdoumi2017convergence}
are applicable to non-strongly-convex objectives but
linear convergence rates are not proven.
An interesting 
adaptive ADMM for a general convex consensus problem was recently proposed
\cite{TomConsensus}, with a sublinear convergence guarantee. However,
no dependence on the underlying graph was considered.
It is important to note that, even for GD, the dependency of
the convergence rate on $\G$ for variants 
of problem \eqref{eq:gen_quad} have only 
being studied in the past 
decade \cite{olfati2004consensus, tron2008distributed, 
tron2013riemannian}.

For quadratic problems
there are explicit results on convergence rate and
optimal parameters 
\cite{teixeira2013optimal,ghadimi2015optimal,iutzeler2014linear,
iutzeler2016explicit}.
However, the required assumptions do not hold
for the distributed consensus problem considered in this paper.
Moreover, there are very few results comparing
the optimal convergence rate of ADMM as a function of
the optimal convergence rate of GD.
An explicit comparison is
provided in \cite{FrancaBento}, but assumes
strong convexity and considers a centralized setting.
The authors in \cite{mokhtari2015decentralized} study a variant
of the ADMM where
the iterations deal with the second order expansion of the
objective, and strong-convexity is assumed.
In \cite{raghunathan2014optimal,raghunathan2014admm} bounds
on the convergence rate were proven, which are subsequently
tuned for ADMM applied to a quadratic program of the kind
$\min_{\bf z} {\bf z}^\top Q{\bf z} + c^\top {\bf z}$ 
subject to $A{\bf z} = b$, and also
assume strong convexity. 
The work \cite{boley2013local} focuses on quadratic programs 
that are not necessarily strongly-convex. To the best of our knowledge,
this is the only work that, just like we do here, analyzes ADMM for 
quadratic problems in a setting where the important eigenvalues 
of the transition matrix might be complex numbers. 
However, no optimal bounds explicitly dependent on
$\G$ are provided. The authors of \cite{han2013local} also study  
ADMM for quadratic programs that might not be strongly-convex. 
They define their error rate in a different way compared to us, 
and it is not clear if they are comparable. 
Also, their bounds are generic and are not optimally tuned. 

The problem of determining optimal rates of convergence is related 
to 
optimal parameter selection. Apart from the tuning rules mentioned above,
several adaptive 
schemes exist, and 
some of these come with convergence guarantees
\cite{he2000alternating,Boyd,xu2016adaptive,xu2017adaptive}.
However, these are designed for very general problems 
and do not recover our results. 
We consider ADMM's parameters fixed across iterations.

Our work makes connections between ADMM, GD, and Markov chains. 
In particular,
lifted Markov chains were previously employed
to speedup convergence of
distributed averaging and gossip algorithms
\cite{jung2007fast, li2010location, jung2010distributed}, 
but these algorithms are not related to ADMM.
Finally, the present work is highly motivated by \cite{FrancaBentoMarkov} 
where
a close relation between ADMM and lifted Markov chains was proposed.
The main outcome was
conjecture \eqref{eq:conjecture}, which
is inspired by the speedup on the mixing time of several lifted Markov
chains. This inequality will be proven
in this paper as a consequence of our main analysis.

\section{Distributed ADMM as a Message Passing Algorithm}
\label{sec:admm_message}

Let us start by introducing the factor graph 
$\bG$ associated to the base graph $\G$ of problem \eqref{eq:gen_quad}. 
The factor graph $\bG = (\bF, \bV, \bE)$ is a bipartite and undirected 
graph, where the edges in $\bE$ can only connect vertices 
in $\bF$ to vertices in $\bV$. 
The $a$th vertex in $\bF$ is the $a$th term $f_a$ in the objective
\eqref{eq:gen_quad}. 
In other words, we have a function vertex $f_a$ for every edge in $\E$.  
Vertices in $\bF$ are called function nodes. The $b$th vertex in $\bV$ is  
the $b$th component $z_b$ of $\vz$.  We have a variable vertex per 
dimension of $\vz$. Vertices in $\bV$ are called variable nodes. 
The edges in $\bE$ are of the form $(f_a, z_b)$.
The crucial point in defining $\bG$ is that 
the edge $(f_a, z_b)$ is present in $\bE$ if and only if
$f_a$ depends on the $z_b$ component of $\vz$. To simplify the notation, 
sometimes we  interchangeably refer to vertices and edges only by 
their labels, 
thus we might write $(f_a, z_b) = (a, b)$ with $a \in \bF$ and $b \in \bV$.  
Therefore, $\bV = \V$ and $|\bE| = 2 |\E|$. We refer 
to Fig.~\ref{fig:alice} for an illustration.

The neighborhood a  
function node $a$ is denoted by
$N_a \equiv \{ b \in \bV : (a, b) \in \bE \}$.
Analogously, the neighborhood a variable node $b \in \bV$
is $N_b \equiv \{ a \in \bF : (a, b ) \in \bE \}$.
For example, in the case of Figure~\ref{fig:alice}b we have
$N_a = \{ z_b, z_d \}$ and $N_d = \{ f_a, f_c\}$.
Let us introduce the row stochastic matrix
$S \in \mathbb{R}^{|\bE|\times |\bV|}$ 
defined by
\begin{equation}\label{eq:Sdef}
S_{eb} = \begin{cases}
1 & \mbox{if $e \in \bE$ is incident on $b \in \bV$,} \\
0 &  \mbox{otherwise.}
\end{cases}
\end{equation}
The action of $S$ on a vector $\vz \in \mathbb{R}^{|\bV|}$ is
to produce an $|\bE|$-dimensional vector whose
$e$th component, for $e \in \bE$, is equal to $z_b$ if
$e$ is incident on $b \in \bV$.
Through the paper, we often index
the components of a vector
$\bm{y} \in \mathbb{R}^{|\bE|}$ by 
the edges of the factor graph, such as
$y_{ab}$, where $e=(a,b) \in \bE$. 
For any vector $\bm{w} \in \mathbb{R}^{|\bV|}$,
we often index its components by
the variable nodes as
$w_b$, where $b \in \bV$ (see Fig.~\ref{fig:alice}b).

Now let $\vx \in \mathbb{R}^{|\bE|}$ with components $x_{ab}$.
For each function node $a \in \bF$ we define the vector
$\vx_a\in\mathbb{R}^{|N_a|}$ with components 
in $\{ x_{ab}:  b \in N_a \}$.
We can rewrite problem \eqref{eq:gen_quad} by introducing
the decoupled objective
\begin{equation}\label{eq:quadratic}
f(\vx) = \dfrac{1}{2} \vx^\top Q \, \vx = 
\sum_{a \in \bF} f_a(\vx_a) = 
\dfrac{1}{2} \sum_{a \in \bF} \vx_a^\top Q_a \, \vx_{a},
\end{equation}
where $Q$ is a block diagonal matrix with blocks in the form 
$Q_a = \left( \begin{smallmatrix} +1 & -1 \\ -1 & +1 
\end{smallmatrix}\right)$, and
adding the constraint
\begin{equation}
\vx = S \vz.
\end{equation}
The idea is that the ADMM can exploit
this decoupled objective function and solve problem \eqref{eq:gen_quad}
in a distributed manner, by coordinating local messages 
that are computed only based on each $f_a$.


We can decentralize the standard over-relaxed ADMM updates \cite{Boyd} 
with the help of the so-called \emph{message passing} variables
$\vm \in \mathbb{R}^{|\bE|}$ and 
$\vn \in \mathbb{R}^{|\bE|}$, and the \emph{dual variable} 
$\vu \in \mathbb{R}^{|\bE|}$:
%
\begin{subequations} 
\label{eq:admm_msg_updates}
\begin{align} 
\vx_{a}^{t+1}  & \leftarrow \argmin_{\vx_{a}} 
\bigg\{ f_a\left( \vx_a \right) + 
\dfrac{\rho}{2} \sum_{b \in N_a} \left( x_{ab} - n_{ab}^t \right)^2 \bigg\} 
& &\mbox{for all $a \in \bF$},
\label{eq:msg1} \\
m_{ab}^{t+1} &\leftarrow \gamma x_{ab}^{t+1} + u_{ab}^t 
& &\mbox{for all $(a,b) \in \bE$} ,
\label{eq:msg2} \\
z_{b}^{t+1} &\leftarrow (1-\gamma)z_{b}^{t} + 
\dfrac{1}{|N_b|}\sum_{a \in N_b} m^{t+1}_{a b} 
& &\mbox{for all $b\in \bV$},
\label{eq:msg3} \\
u_{ab}^{t+1} &\leftarrow u_{ab}^t + \gamma x_{ab}^{t+1} - z_{b}^{t+1} +
(1-\gamma) z_{b}^t  
& &\mbox{for all $(a,b) \in \bE$},
\label{eq:msg4} \\
n_{ab}^{t+1} &\leftarrow z_{b}^{t+1} - u_{ab}^{t+1} 
& &\mbox{for all $(a,b) \in \bE$} .
\label{eq:msg5}
\end{align}
\end{subequations}
Above, $\gamma \in (0, 2)$ is the over-relaxed parameter,
$\rho > 0$ is the penalty parameter, and $t$ is the iteration time.
One can check that \eqref{eq:admm_msg_updates} is consistent
with the standard non-distributed over-relaxed ADMM updates 
\cite{Boyd}.
We can see the above updates as a message passing algorithm as 
illustrated in Fig.~\ref{fig:alice}b. 
The only messages shared through the network are
$m_{ab}$ and $n_{ab}$, and every node keeps and updates a 
copy of the components of $\vu$  
corresponding to edges incident on itself. 
All the updates only require local information. 
This scheme is on the same
lines as the one proposed in \cite{Bento1,Bento2}.

\begin{figure}
\begin{minipage}{.49\textwidth}
\centering
\vspace{.5cm}
\includegraphics[scale=1]{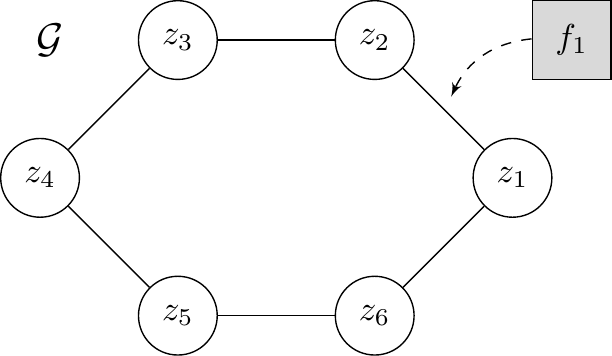}
\end{minipage}
\begin{minipage}{.49\textwidth}
\centering
\includegraphics[scale=1]{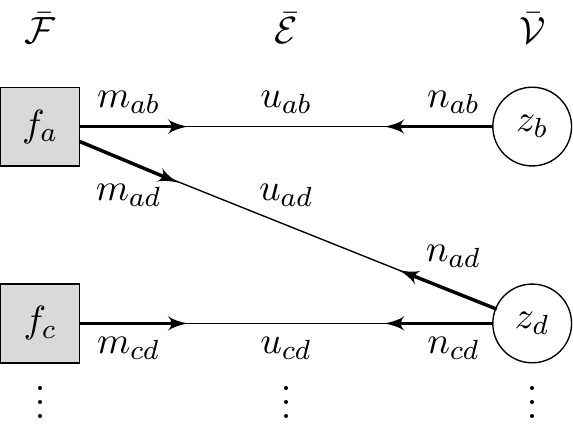}
\end{minipage}
\put(-122,-70){(b)}
\put(-360,-70){(a)}
\caption{
(a) An example of graph $\G$. The corresponding factor graph $\bG$ is obtained
by attaching a function node to every edge of $\G$.
(b) Distributed over-relaxed ADMM as a message passing algorithm over the 
factor graph. Messages $m_{ab}$ and $n_{ab}$ are shared
by agents solving local problems.
}
\label{fig:alice}
\end{figure}

Replacing the decoupled objective \eqref{eq:quadratic} 
explicitly into the updates \eqref{eq:admm_msg_updates},
and introducing the
variable $\vs = S \vz$, 
the above scheme can be 
written in the following matrix form:
%
\begin{align}\label{eq:admm_matrix}
\vx^{t+1} &=  A \vn^t , 
&\vu^{t+1} &= \vu^t + \gamma \vx^{t+1} + (1-\gamma) \vs^t - \vs^{t+1},  \\
\vm^{t+1} &= \gamma \vx^{t+1} + \vu^t ,
&\vs^{t+1} &= (1-\gamma) \vs^t + B \vm^{t+1} , \qquad \qquad
\quad \vn^{t+1} = \vs^{t+1} - \vu^{t+1} , \nonumber
\end{align}
%
where $S$ is defined in \eqref{eq:Sdef} and we have introduced the operators
\begin{equation}
\label{eq:AB}
A = \left(I + \rho^{-1} Q\right)^{-1},  \qquad 
B = S (S^\top S)^{-1} S^\top. 
\end{equation}
Note that $B=B^\top$ is symmetric and moreover
$B^2 = B$, thus it is 
an orthogonal  projection operator.
Its orthogonal complement is denoted by $B^\perp \equiv I - B$, 
satisfying
$BB^\perp = B^\perp B = 0$.

Although the updates \eqref{eq:admm_matrix} have a total of
$5 \times |\bE|$ dimensions, a result from
\cite{FrancaBentoMarkov} shows that these can be reduced to the
following linear
system in only $|\bE|$ dimensions:
\begin{equation}
\label{eq:TA}
\vn^{t+1} = T_A \vn^{t}, \qquad T_A = I - \gamma(A + B - 2BA),
\end{equation}
where all the other variables in \eqref{eq:admm_matrix} 
depend only on $\vn^t$.

We are interested in computing the convergence rate $\tau$ 
defined in \eqref{eq:conv_def}.
A straightforward adaptation of standard results from
Markov chain theory 
gives us the following.
\begin{theorem}[See \cite{MarkovChainsPeres}]\label{th:aux_the_mark_chain}
Consider the linear system 
$\bm{\xi}^{t+1} = T \bm{\xi}^{t}$, and let $\bm{\xi}^\star$ be a fixed point. 
If the spectral radius is $\rho(T)=1$
and it is attained by the eigenvalue $\lambda_1(T)=1$ with
multiplicity one,
then $\bm{\xi}^t = T^t\bm{\xi}^0$ converges to $\bm{\xi}^\star $ and satisfies
$\| \bm{\xi}^t - \bm{\xi}^\star\| = \Theta(|\lambda_2|^t)$, where $\lambda_2 =
\lambda_2(T)$ is the second largest eigenvalue of $T$ in absolute value
(the largest is $\lambda_1(T) = 1$).
\end{theorem}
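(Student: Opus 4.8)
The plan is to derive this from the Jordan normal form of $T$, by splitting the space according to the simple eigenvalue $1$; write $n$ for the dimension of $\bm{\xi}^t$. First, since $\rho(T)=1$ is attained \emph{only} by $\lambda_1=1$, every other eigenvalue $\mu$ of $T$ satisfies $|\mu|<1$, and in particular $|\lambda_2|<1$ strictly. Because $\lambda_1=1$ is simple (algebraic multiplicity one), $\mathbb{R}^{n}$ admits a $T$-invariant direct-sum decomposition $\mathbb{R}^{n}=E_1\oplus W$, where $E_1=\ker(T-\id)=\vspan\{\bm{v}_1\}$ with $T\bm{v}_1=\bm{v}_1$, and $W$ is the unique $T$-invariant complement — the sum of the generalized eigenspaces of all eigenvalues $\ne 1$, equivalently $W=\image(T-\id)$. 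Writing $T_W$ for the restriction $T|_W$, one has $\rho(T_W)=|\lambda_2|<1$.

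Next I would identify the limit. Expanding $\bm{\xi}^0=c\,\bm{v}_1+\bm{w}_0$ with $c\in\mathbb{R}$ and $\bm{w}_0\in W$, and using that $\bm{v}_1$ is fixed and $W$ is invariant, $\bm{\xi}^t=T^t\bm{\xi}^0=c\,\bm{v}_1+T_W^{\,t}\bm{w}_0$. By Gelfand's formula $\|T_W^{\,t}\|^{1/t}\to\rho(T_W)=|\lambda_2|<1$, so $T_W^{\,t}\bm{w}_0\to\vzero$ and hence $\bm{\xi}^t\to c\,\bm{v}_1$. On the other hand every fixed point of $T$ lies in $\ker(T-\id)=E_1$, so it is a scalar multiple of $\bm{v}_1$; since $\bm{\xi}^\star$ is by hypothesis the fixed point the iterates approach and we have just shown they approach $c\,\bm{v}_1$, necessarily $\bm{\xi}^\star=c\,\bm{v}_1$, and therefore $\bm{\xi}^t-\bm{\xi}^\star=T_W^{\,t}\bm{w}_0$.

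Finally, for the rate I would read the growth of $T_W^{\,t}$ off its (real) Jordan form: a block of size $k$ with eigenvalue $\mu$, $|\mu|\le|\lambda_2|$, contributes entries of order $t^{k-1}|\mu|^t$, so $\|\bm{\xi}^t-\bm{\xi}^\star\|=\|T_W^{\,t}\bm{w}_0\|=O(t^{d-1}|\lambda_2|^t)$, where $d$ is the largest size of a Jordan block whose eigenvalue has modulus $|\lambda_2|$; this is $O(|\lambda_2|^t)$ when those eigenvalues are semisimple, the case relevant to \eqref{eq:TA}. For the matching lower bound I would take $\bm{\xi}^0=\bm{\xi}^\star+\bm{w}_0$ with $\bm{w}_0$ an eigenvector for an eigenvalue of modulus $|\lambda_2|$ (its real part, expanded over a conjugate pair, when $\lambda_2\notin\mathbb{R}$); then $\|T_W^{\,t}\bm{w}_0\|$ stays between fixed positive multiples of $|\lambda_2|^t$, and equals $|\lambda_2|^t\|\bm{w}_0\|$ exactly for a genuine real eigenvector. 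Combining gives $\|\bm{\xi}^t-\bm{\xi}^\star\|=\Theta(|\lambda_2|^t)$ for the worst-case initialization — the quantity entering the $\max$ in \eqref{eq:conv_def} — and in every case $\tfrac1t\log\|\bm{\xi}^t-\bm{\xi}^\star\|\to\log|\lambda_2|$. The one delicate point is that $T$ is generally non-symmetric (e.g.\ $T_A$ in \eqref{eq:TA} is not), so $\lambda_2$ may be complex with nontrivial Jordan blocks; the resulting polynomial-in-$t$ prefactors never change the exponential rate $|\lambda_2|$, but they are the reason the bare ``$\Theta(|\lambda_2|^t)$'' should be read up to sub-exponential factors (or under semisimplicity of the subdominant eigenvalue), and handling complex $\lambda_2$ in the lower bound via the real Jordan form is the main technical nuisance.
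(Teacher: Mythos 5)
Your proof is correct. The paper does not actually prove this statement --- it is quoted as a standard fact with a citation to Levin--Peres--Wilmer --- so there is no in-paper argument to compare against; what you give is the standard spectral-decomposition proof, and every step checks out: the splitting $\mathbb{R}^n=\ker(T-I)\oplus\image(T-I)$ is valid precisely because the eigenvalue $1$ is simple (so its Jordan block is $1\times1$ and $T-I$ is invertible on the complementary generalized eigenspace), Gelfand's formula gives convergence, and the identification $\bm{\xi}^\star=c\,\bm{v}_1$ correctly uses the paper's convention that $\bm{\xi}^\star$ is the fixed point the iterates approach. The two caveats you flag are real and worth flagging: as literally stated the theorem is slightly imprecise, since (i) a nontrivial Jordan block at the subdominant modulus would introduce a $t^{d-1}$ prefactor (harmless for the logarithmic rate in \eqref{eq:conv_def}, and absent for $T_A$ since $U$ is diagonalizable by Lemma~\ref{th:remark_about_simul_diag}), and (ii) the lower bound in the $\Theta$ only holds for worst-case or generic $\bm{\xi}^0$, which is exactly the $\max_{\|\vz^0\|\le1}$ in the paper's definition of $\tau$; your handling of complex $\lambda_2$ via the real span of $\bm{u},\bm{v}$ (linearly independent for a non-real eigenvalue, so $\|\cos(t\theta)\bm{u}-\sin(t\theta)\bm{v}\|$ is bounded below) closes the lower bound correctly.
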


Since $T_A$ in \eqref{eq:TA} is nonsymmetric, its eigenvalues can be
complex. We thus order them by magnitude:
\begin{equation}
|\lambda_1(T_A)| \ge |\lambda_2(T_A)| \ge \dotsm \ge |\lambda_{|\bE|}(T_A)|.
\end{equation}
When the order of a particular eigenvalue is not important, we drop the 
index and simply write $\lambda(T_A)$.

Notice  that the optimization problem \eqref{eq:gen_quad} 
is convex and has solution $\vz^\star = c \bm{1}$ for any constant
$c$, which spans a linear space of dimension one. 
It is straightforward to check
that $\lambda_1(T_A) = 1$ is unique (with eigenvector being the all-ones 
vector)
and every other
eigenvalue satisfies $|\lambda_n(T)| < 1$, for $n=2,\dotsc,|\bE|$. 
Due to Theorem \ref{th:aux_the_mark_chain}, the 
asymptotic convergence rate of ADMM is thus
determined by the second largest eigenvalue
$\tau_A =   |\lambda_2(T_A)|$.

\section{Computing the Spectrum of ADMM}
\label{sec:admm_spectrum}

As explained above, our problem boils down to finding the spectrum
of $T_A$. First, we write this operator in a more convenient form
(the proof can be found in Appendix~\ref{sec:lemma2}).
\begin{lemma}
\label{th:simplified_TA}
The matrix $T_A$ defined in \eqref{eq:TA} can be written as
\begin{equation} 
\label{eq:Ta_new}
T_A =  \left( 1 - \dfrac{ \gamma}{2}   \right) I 
+ \dfrac{\gamma}{\rho+2} U \qquad \mbox{where } \qquad
U = \OO + \dfrac{\rho}{2} \tB,
\end{equation}
with $\tB =\tB^\top = 2B - I$, $\OO = \tB \RR$, and $R = R^\top = I - Q$.
In particular, $\OO$ is orthogonal, i.e.
$\OO^\top \OO = \OO \, \OO^\top = I$,
and the other symmetric matrices satisfy $\tB^2 = I$ and $R^2 = I$.
\end{lemma}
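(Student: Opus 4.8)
The plan is to start from the definition $T_A = I - \gamma(A + B - 2BA)$ in \eqref{eq:TA} and substitute the explicit form of $A = (I + \rho^{-1}Q)^{-1}$ from \eqref{eq:AB}. First I would rewrite $A$ by clearing the inverse: since $Q_a = \left(\begin{smallmatrix} 1 & -1 \\ -1 & 1\end{smallmatrix}\right)$ satisfies $Q_a^2 = 2Q_a$, the full block-diagonal matrix obeys $Q^2 = 2Q$, so $Q$ has eigenvalues only $0$ and $2$. This lets me compute $A = (I + \rho^{-1}Q)^{-1}$ in closed form via a partial-fraction / polynomial identity: one checks directly that $\bigl(I + \rho^{-1}Q\bigr)\bigl(I - \tfrac{1}{\rho+2}Q\bigr) = I$, hence
\begin{equation}
A = I - \frac{1}{\rho+2}\,Q.
\end{equation}
With $R = I - Q$ this reads $A = \tfrac{\rho}{\rho+2}I + \tfrac{1}{\rho+2}R$, a convenient affine-in-$R$ expression.

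Next I would plug this into $A + B - 2BA$. Writing everything in terms of $R$ and $\tB = 2B - I$ (so $B = \tfrac12(\tB + I)$), I expand $A + B - 2BA = (I - 2B)A + B = -\tB A + \tfrac12(\tB + I)$. Substituting $A = \tfrac{\rho}{\rho+2}I + \tfrac{1}{\rho+2}R$ gives $-\tfrac{\rho}{\rho+2}\tB - \tfrac{1}{\rho+2}\tB R + \tfrac12\tB + \tfrac12 I$. Collecting the coefficient of $\tB$: $-\tfrac{\rho}{\rho+2} + \tfrac12 = \tfrac{2 - \rho}{2(\rho+2)}$, wait — I should be careful and just carry the algebra so that it lands on $A + B - 2BA = \tfrac12 I - \tfrac{1}{\rho+2}U$ with $U = \tB R + \tfrac{\rho}{2}\tB = \OO + \tfrac{\rho}{2}\tB$, after which $T_A = I - \gamma\bigl(\tfrac12 I - \tfrac{1}{\rho+2}U\bigr) = (1 - \tfrac{\gamma}{2})I + \tfrac{\gamma}{\rho+2}U$, matching \eqref{eq:Ta_new}. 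The $\tfrac{\rho}{2}\tB R$ versus $\tfrac{1}{\rho+2}\tB R$ bookkeeping is the one place where a sign or a factor can slip, so I would double-check by evaluating both sides on the eigenspaces of $Q$ (the $0$-eigenspace and the $2$-eigenspace), where everything is scalar.

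Finally I would verify the stated algebraic properties. That $\tB = \tB^\top$, $R = R^\top$ is immediate since $B$ and $Q$ are symmetric. For $\tB^2 = I$: $B^2 = B$ gives $(2B-I)^2 = 4B^2 - 4B + I = I$. For $R^2 = I$: $R = I - Q$ so $R^2 = I - 2Q + Q^2 = I - 2Q + 2Q = I$ using $Q^2 = 2Q$. Orthogonality of $\OO = \tB R$ then follows: $\OO^\top\OO = R\tB\tB R = R^2 = I$ and likewise $\OO\OO^\top = \tB R^2 \tB = \tB^2 = I$. I expect no genuine obstacle here — the only "hard part" is keeping the constants straight through the substitution of $A$; the cleanest safeguard is the remark that $Q^2 = 2Q$ makes $A$, $R$, $\tB$ all behave like reflections/idempotents, reducing every identity to a one-line scalar check on the two eigenspaces of $Q$.
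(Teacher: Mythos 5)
Your approach is exactly the paper's (the paper simply asserts $A = I - \tfrac{1}{\rho+2}Q$, writes $Q = I - R$, and "reorganizes"; your $Q^2 = 2Q$ observation is a clean way to justify the inversion that the paper leaves implicit), and all the closing verifications ($\tB^2 = I$, $R^2 = I$, orthogonality of $\OO$) are correct. The one flaw is the arithmetic slip you yourself flagged with "wait": from $A = I - \tfrac{1}{\rho+2}Q = I - \tfrac{1}{\rho+2}(I-R)$ the identity coefficient is $1 - \tfrac{1}{\rho+2} = \tfrac{\rho+1}{\rho+2}$, not $\tfrac{\rho}{\rho+2}$. With the correct constant the coefficient of $\tB$ in $A + B - 2BA = -\tB A + \tfrac12(\tB + I)$ becomes $-\tfrac{\rho+1}{\rho+2} + \tfrac12 = -\tfrac{\rho}{2(\rho+2)}$, so that $A + B - 2BA = \tfrac12 I - \tfrac{1}{\rho+2}\bigl(\tB R + \tfrac{\rho}{2}\tB\bigr) = \tfrac12 I - \tfrac{1}{\rho+2}U$, and $T_A = I - \gamma(A+B-2BA)$ lands exactly on \eqref{eq:Ta_new}; no hand-waving is needed once that constant is fixed.
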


Notice that the spectrum of $T_A$ can be easily determined once we know
the spectrum of $U$. In particular, 
if $\rho=0$ then $U=\Omega$ is orthogonal and its eigenvalues
lie on the unit circle in the complex plane. Thus, we 
may expect that for $\rho$
sufficiently small, the eigenvalues of $U$ lie in a perturbation of this
circle. It turns out that, in general, 
the eigenvalues of $U$ either lie on a circle in the complex plane, with
center at $1-\gamma/2$ and radius 
$\tfrac{\gamma}{2}\sqrt{(2-\rho)/(2+\rho)}$,
or on the real line. Furthermore, by exploring properties
of the matrices of Lemma~\ref{th:simplified_TA} 
we can calculate the spectrum of $U$ 
exactly for any $\rho > 0$ in terms of the spectrum
of the original graph $\G$. This is one of our main results, whose
proof is in Appendix~\ref{sec:ADMM_W}.

\begin{theorem}[ADMM and random walks on $\G$]
\label{th:omega_w}
Let $\W = \D^{-1} \AA$ be the probability transition matrix 
of a random walk on the graph
$\G$, where $\D$
is the degree matrix and $\AA$ the adjacency matrix.
For each eigenvalue $\lambda(\W) \in (-1,1)$, the matrix 
$T_A$ in \eqref{eq:Ta_new} has a pair of eigenvalues given by
\begin{equation}
\label{eq:EigenTA_random}
\lambda^{\pm}(T_A) = \left( 1 - \dfrac{\gamma}{2}\right)
+ \dfrac{\gamma}{2+\rho}\left(\lambda(\W) \pm i 
\sqrt{1-\dfrac{\rho^2}{4} - \lambda^2(\W)}\right).
\end{equation}
Conversely, 
any eigenvalue $\lambda(T_A)$ is of the 
form \eqref{eq:EigenTA_random} for some
$\lambda(\W)$.
\end{theorem}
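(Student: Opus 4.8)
The plan is to diagonalize $U = \OO + \tfrac{\rho}{2}\tB$ by relating it to the transition matrix $\W$ of the random walk on $\G$. The starting point is Lemma~\ref{th:simplified_TA}, which tells us that $\OO = \tB\RR$ with $\tB^2 = \RR^2 = I$ and $\tB = 2B - I$, $\RR = I - Q$. Since $T_A = (1-\gamma/2)I + \tfrac{\gamma}{\rho+2}U$ is an affine function of $U$, it suffices to find the spectrum of $U$; the formula \eqref{eq:EigenTA_random} then follows by substituting the eigenvalues of $U$ into this affine map. So the whole problem reduces to computing the eigenvalues of $U$ in terms of those of $\W$.

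First I would make the connection between the projection $B = S(S^\top S)^{-1}S^\top$ and the graph $\G$ explicit. The matrix $S$ is the $|\bE|\times|\bV|$ incidence-type matrix of the factor graph, $S^\top S = \D$ is the degree matrix (each variable node $b$ appears in $|N_b|$ edges), and $S S^\top$ acts on edge-indexed vectors. A short computation identifies $S^\top \tB S$, or more usefully the compression of $\OO = \tB\RR$ to a suitable subspace, with something like $\D^{-1}\AA$ up to conjugation. The key structural fact to exploit is that $\OO$ is a product of two involutions $\tB$ and $\RR$; a product of two reflections is a rotation, and its eigenvalues come in conjugate pairs $e^{\pm i\theta}$ determined by the principal angles between the two $(+1)$-eigenspaces of $\tB$ and $\RR$. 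I would set $\cos\theta$ in terms of $\lambda(\W)$ by computing the relevant Gram/overlap matrix between these eigenspaces, which is where the matrix $B R B$ (or $S^\top Q S$-type expressions) reduces to $\W$ acting on $\mathbb{R}^{|\bV|}$.

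Then I would handle the full matrix $U = \OO + \tfrac{\rho}{2}\tB$ rather than just $\OO$. The trick is that $\tB$ and $\OO$ are tied together: $\tB\OO = \RR$ and $\OO\tB = \tB\RR\tB$, so on each two-dimensional invariant subspace spanned by a conjugate pair of $\OO$-eigenvectors, both $\tB$ and $\OO$ act as explicit $2\times 2$ matrices (a reflection and a rotation by $\theta$). On such a block, $U$ is a fixed $2\times 2$ matrix whose entries depend only on $\theta$ and $\rho$; diagonalizing it gives eigenvalues $\lambda(\W) \pm i\sqrt{1 - \rho^2/4 - \lambda^2(\W)}$ after writing $\cos\theta = \lambda(\W)\cdot(\text{normalization})$. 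Plugging into the affine map yields \eqref{eq:EigenTA_random}. For the converse direction, I would argue that the whole space $\mathbb{R}^{|\bE|}$ decomposes into these $\OO$-invariant two-dimensional blocks plus the $\pm 1$ eigenspaces of $\OO$ (which correspond to $\lambda(\W) = \pm 1$ or to the kernel/degenerate directions), so every eigenvalue of $T_A$ must arise this way; a dimension count using $|\bE| = 2|\E|$ versus the spectrum of $\W$ on $\mathbb{R}^{|\bV|}$ closes the argument, with care taken for the multiplicity-one eigenvalue $\lambda_1 = 1$ and for any eigenvalue $\lambda(\W) = \pm 1$ lying outside $(-1,1)$.

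The main obstacle I expect is bookkeeping the subspace decomposition cleanly: the factor graph's edge space $\mathbb{R}^{|\bE|}$ is larger than the vertex space $\mathbb{R}^{|\bV|}$, so the map from $\W$-eigenvalues to $U$-eigenvalues is one-to-two on the "generic" part but one must correctly account for the extra $|\bE| - 2|\bV| + (\text{stuff})$ dimensions sitting in the $\pm 1$ eigenspaces of $\tB$ and $\RR$, i.e. the edge directions that are either constant or alternating on each $Q_a$ block and simultaneously in or out of the range of $S$. Getting the correspondence exactly right — in particular proving the converse, that \emph{every} $\lambda(T_A)$ has the stated form with no spurious eigenvalues left over — is the delicate step; the rest is the essentially mechanical $2\times 2$ computation on each rotation block.
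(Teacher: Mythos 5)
Your proposal is correct, but it reaches the formula by a genuinely different route than the paper. The paper's engine is the algebraic identity $\OO_S \equiv \tfrac{1}{2}(\OO+\OO^\top)=\tfrac{1}{2}\left(U+\eta U^{-1}\right)$ with $\eta=1-\rho^2/4$ (Lemma~\ref{th:inverse_of_U_and_relation_to_V}): this Joukowski-type relation forces every eigenvalue $u$ of $U$ to solve $u^2-2vu+\eta=0$ for some eigenvalue $v$ of $\OO_S$ (Lemma~\ref{th:useful}), after which a chain of lemmas identifies the spectrum of $\OO_S$ on $(-1,1]$ with that of $\W$ via $BR$ and the decomposition $\OO_S=BRB-B^\perp RB^\perp$, and a separate continuation-in-$\rho$ argument (Corollary~\ref{th:pairs}) is needed to certify that \emph{both} roots $u^\pm$ actually occur. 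You instead exploit that $\tB$ and $\RR$ are involutions and invoke the two-subspace (Halmos/CS) decomposition: on each generic $2\times 2$ block one gets $\tB=\mathrm{diag}(1,-1)$, $\RR$ a reflection at angle $\theta$, hence $U$ is an explicit $2\times 2$ matrix with trace $2\cos\theta$ and determinant $1-\rho^2/4$, yielding the same quadratic, and the block structure makes both the pairing of eigenvalues and the converse (no spurious eigenvalues) automatic, since the 1D blocks with $\tB=\pm1$, $\RR=\pm1$ account exactly for the $\lambda(\W)=\pm1$ cases. The step you leave as "a short computation" is the real content shared by both proofs -- identifying $\cos\theta$ with $\lambda(\W)$ -- and it does go through: $\AA=S^\top\RR S$ and $S^+S=I$ give $S^+(BRB)S=S^+\RR S=\D^{-1}\AA=\W$, so $BRB$ restricted to $\range(S)$ is conjugate to $\W$ and its eigenvalues in $(-1,1)$ are exactly the cosines of the rotation angles. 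Net comparison: your route buys a cleaner completeness argument and dispenses with Corollary~\ref{th:pairs} and the $U^{-1}$ computation, at the cost of importing the two-subspace decomposition; the paper's route is purely algebraic and avoids that machinery but must work harder to show both members of each conjugate pair are present.
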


In general, the eigenvalues of $T_A$ are complex.
However, $T_A$ always has the largest
eigenvalue $\lambda_1(T_A) = 1$, which can also be obtained from
\eqref{eq:EigenTA_random} if we  
replace $\lambda_1(\W) = 1$
and pick the negative sign in   \eqref{eq:EigenTA_random}.  
Another important real eigenvalue is the following (see
Appendix~\ref{sec:proof_tuning}).

\begin{lemma}
\label{th:one_gamma_eigen}
The matrix $T_A$ has eigenvalue $\lambda(T_A) = 1-\gamma$
if and only if the graph $\G$ has a cycle of even length.
\end{lemma}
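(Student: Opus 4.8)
The plan is to pin down the eigenvectors of $\TA$ attached to the candidate value $1-\gamma$ by a rigidity argument, and then read off the topological condition. By Lemma~\ref{th:simplified_TA}, $\TA\vv=(1-\gamma)\vv$ is equivalent to $U\vv=-\tfrac{2+\rho}{2}\vv$, and since $U+\tfrac{2+\rho}{2}I=\OO+\tfrac{\rho}{2}(\tB+I)+I=\OO+I+\rho\B$, this is the same as $(\OO+I+\rho\B)\vv=\vzero$. Pairing with $\vv$ and using $\B=\B^2=\B^\top$ (so $\langle\B\vv,\vv\rangle=\|\B\vv\|^2$) gives $\langle\OO\vv,\vv\rangle+\|\vv\|^2+\rho\|\B\vv\|^2=0$; since $\OO$ is orthogonal the first two terms already sum to a nonnegative number, so both $\langle\OO\vv,\vv\rangle+\|\vv\|^2=0$ and $\B\vv=\vzero$. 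The first equality is the equality case of Cauchy--Schwarz for the orthogonal $\OO$, hence $\OO\vv=-\vv$. Using $\OO=\tB\RR$ and $\tB\vv=2\B\vv-\vv=-\vv$, the pair $(\OO\vv=-\vv,\ \B\vv=\vzero)$ is equivalent to $(\RR\vv=\vv,\ \B\vv=\vzero)$, and conversely this pair yields $\OO\vv=\tB\RR\vv=\tB\vv=-\vv$ and hence $U\vv=-\tfrac{2+\rho}{2}\vv$. Therefore
\[
\TA \text{ has eigenvalue } 1-\gamma \quad\Longleftrightarrow\quad \exists\,\vv\neq\vzero:\ \RR\vv=\vv \ \text{ and }\ \B\vv=\vzero.
\]

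Next I translate the right-hand side into a statement about $\G$. The matrix $\RR=I-Q$ is block diagonal with one $2\times2$ coordinate-swap block per function node $a\in\bF$ (equivalently, per edge of $\G$), so $\RR\vv=\vv$ holds precisely when $v_{ab}$ depends only on $a$; write $v_{ab}=w_a$ with $\bm{w}\in\mathbb{R}^{|\bF|}=\mathbb{R}^{|\E|}$. Since $S^\top S$ is the diagonal degree matrix of $\G$, hence invertible, $\B=S(S^\top S)^{-1}S^\top$ annihilates $\vv$ iff $S^\top\vv=\vzero$, and $(S^\top\vv)_b=\sum_{a\in N_b}v_{ab}=\sum_{a\ni b}w_a$. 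Hence $\TA$ has eigenvalue $1-\gamma$ iff $\G$ admits a nonzero real labeling $\bm{w}$ of its edges whose labels sum to zero around every vertex, i.e.\ iff the unsigned vertex--edge incidence matrix of $\G$ has nontrivial kernel.

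It remains to establish the combinatorial equivalence between this condition and $\G$ possessing a cycle of even length. For the ``if'' direction, given an even cycle $u_0u_1\cdots u_{2k-1}u_0$, set $\bm{w}$ equal to $+1,-1,+1,\dots$ alternately on its edges and $0$ elsewhere: the two cycle-edges meeting at each $u_i$ carry opposite labels, so every vertex sum vanishes and $\bm{w}\neq\vzero$. For the converse one argues contrapositively: when $\G$ has no even cycle, peel off degree-one vertices (a pendant edge is forced to label $0$, and its removal preserves both the absence of even cycles and the vertex-sum constraints), reducing to the $2$-core; then propagating the forced labels around an odd cycle returns the negated starting value, forcing all labels to $0$, so the kernel is trivial. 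This last step is where one must invoke the structure of graphs all of whose cycles are odd.

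I expect the extremal step to be the crux: one has to recognize that $1-\gamma$ realizes the extreme value $-\tfrac{2+\rho}{2}$ of $\langle U\vv,\vv\rangle/\|\vv\|^2$, which rigidly forces the eigenvector to be a common $(-1)$-eigenvector of the orthogonal part $\OO$ and of $\tB$; note that whether $1-\gamma$ occurs as an eigenvalue is thereby seen to be independent of the parameters $\rho,\gamma$, consistent with the purely topological nature of the conclusion. Everything after that is bookkeeping on the factor graph $\bG$ together with the classical combinatorics of the unsigned incidence matrix, with the ``only if'' half of the combinatorial step requiring the most care.
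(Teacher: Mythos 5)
Your algebraic reduction is correct and is genuinely different from, and simpler than, the paper's route. You observe that $\TA\vv=(1-\gamma)\vv$ is equivalent to $(\OO+I+\rho\B)\vv=\vzero$, and then the quadratic-form identity $\vv^\top\OO\vv+\|\vv\|^2+\rho\|\B\vv\|^2=0$ (with $\vv$ taken real without loss of generality, the eigenvalue and the matrix being real), together with $|\vv^\top\OO\vv|\le\|\vv\|^2$ and $\rho>0$, rigidly forces $\B\vv=\vzero$ and $\OO\vv=-\vv$, hence $\RR\vv=\vv$. The paper arrives at the same pair of conditions ($\RR\vv=\vv$ and $\B\vv=\vzero$) only after passing through the simultaneous diagonalizability of $U$ and $\Omega_S$ (Lemma~\ref{th:remark_about_simul_diag}) and a perturbation-theoretic computation of ${\rm d}u/{\rm d}\rho$ at $\rho=0$ (Lemma~\ref{th:eigen_value_pert}); your one-paragraph rigidity argument replaces all of that and makes the independence of the phenomenon from $\rho$ and $\gamma$ transparent. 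This part is a real improvement.

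The gap is in the combinatorial converse. You correctly reduce the question to whether the unsigned vertex--edge incidence matrix of $\G$ has a nontrivial kernel, and the direction ``even cycle $\Rightarrow$ nontrivial kernel'' is fine. But the contrapositive you sketch --- peel to the $2$-core and propagate forced labels around an odd cycle --- only propagates the relation $w_{e'}=-w_{e}$ through vertices of degree exactly $2$; at a vertex of higher degree the constraint is a single linear relation on all incident labels and forces nothing edge by edge. In fact the equivalence you assert is false with ``cycle'' meaning simple cycle: take $\G$ to be two triangles sharing one vertex. It has no even cycle, yet labelling the six edges alternately $+1,-1$ around the figure-eight gives a nonzero kernel vector (the four labels at the shared vertex are $+1,+1,-1,-1$, summing to zero). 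The correct characterization for a connected graph is that the kernel is nontrivial iff $\G$ contains an even cycle \emph{or} two edge-disjoint odd cycles, equivalently $|\E|>|\V|-b$ with $b$ the indicator of bipartiteness; so the converse cannot be completed as proposed. (For what it is worth, the paper's own proof of this direction shares the weak point: its path construction asserts $v_{e_2}=-v_{e_1}$ at every step, which is justified only at degree-$2$ vertices, and the object it builds is an even closed walk rather than a simple cycle.)
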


In the results to follow we assume that
$T_A$ has the eigenvalue $1-\gamma$ since this encloses the most interesting
cases. We can still carry out the analysis when this is not the case, however
we omit these results for conciseness and simplicity.

Henceforth, we always assume that $\G$ has at least one cycle of
even length. 
Observe that, for many families of randomly generated graphs, 
this occurs with overwhelming probability. 
Consider sampling $\G$ from an Erd\" os-R\' enyi model with $n$ 
vertices and edge probability $p$.  
There are 
$C(n,k) = \left(\begin{smallmatrix} n \\ k \end{smallmatrix}\right)$ 
ways of choosing
$k$ vertices, and the probability that each set of $k$ nodes forms a cycle is at least $p^k$. Therefore,
the probability that there will be no $k$-cycle in $\G$ is upper bounded by
$(1-p^k)^{ C(n,k) }$ which is extremely small.

A few observations about $\W$ are in order.
It is known that the eigenvalues of $\W$ are
in the range $\lambda(\W) \in [-1,1]$. 
The second largest eigenvalue of 
$\W$, denoted by
$w^\star \equiv \lambda_2(\W)$, and the corresponding eigenvalue of $T_A$
from formula 
\eqref{eq:EigenTA_random},
play an important role in computing the optimal convergence rate $\tau^*_A$.
Moreover, the second largest eigenvalue $w^\star$ is related to the mixing time
of the Markov chain associated to $\W$, and also to the conductance
$\Phi \in [0,1]$ of the graph $\G$ by the Cheeger bound \cite{Cheeger}:
\begin{equation}
1-2\Phi \le
\omega^\star \le 1- \Phi^2/2. 
\end{equation}
The conductance $\Phi$
tells us whether or not $\G$ has bottlenecks,
and higher $\Phi$ implies a fast mixing chain 
\cite{lovasz1999faster}. 
The conductance of $\G$ is defined by
\begin{equation}\label{eq:conductance_for_random_walk}
\Phi =  \min_{\mathcal{S} \subset \V}
\dfrac{C(\mathcal{S})}{\sum_{i\in \V} d_i} \qquad \mbox{such that} \qquad
\sum_{i\in \V} d_i \leq |\V|
\end{equation}
where $d_i$ is the degree of node $i$ and $C(\mathcal{S})$ is cut-value
induced by $\mathcal{S}$, i.e. the number of edges that cross from
$\mathcal{S}$
to $\V \backslash \mathcal{S}$. 

In the context
of \cite{FrancaBentoMarkov}, which motivated this paper,
the most interesting cases
are Markov chains that have low conductance 
and are known to not 
speedup via lifting.
Therefore, we will present our results for graphs $\G$ where
the second largest eigenvalue of the transition matrix $\W$ lies in the range
$0 \leq \omega^\star < 1$, which is implied by $\Phi \leq 1/2$. 

We now discuss the behaviour of the eigenvalues of $T_A$.
From the formula \eqref{eq:EigenTA_random} we just need to analyse 
the complex eigenvalues of the operator $U$, defined
in \eqref{eq:Ta_new}, which are
given by $\lambda(U) = \lambda(\W) \pm i
\sqrt{1-\rho^2/4 - \lambda^2(\W)}$. 
Therefore, $\lambda(U)$
lies on a circle
of radius $\sqrt{1-\rho^2/4}$, and each eigenvalue becomes
real when $\rho^2/4+\lambda^2(\W) \ge 1$. All eigenvalues become real
when $\rho > 2$. When $\rho=0$ the circle has unit radius, and as $\rho$
increases the radius of the circle shrinks. Note that only the
imaginary part of $\lambda(U)$ changes with $\rho$, so every complex conjugate
pair of eigenvalues move vertically downwards until they fall on the 
real line, one moving to the left and
the other to the right. We illustrate this behaviour in 
Fig.~\ref{fig:eigenvalues} where we show the corresponding eigenvalues of $T_A$.
The eigenvalues marked in red move on the vertical dashed line
as we increase $\rho$.
Notice also from \eqref{eq:EigenTA_random} that 
as $\rho\to \infty$ all eigenvalues
tend to either $\lambda(T_A) \to 1$ or $\lambda(T_A) \to 1-\gamma$.

To tune ADMM we need to minimize the second largest, in 
absolute value, 
eigenvalue of $T_A$.
The minimum will come from either the conjugate pairs in
\eqref{eq:EigenTA_random} with $\omega^\star = \lambda_2(\W)$,
marked in red in Fig.~\ref{fig:eigenvalues}, or from
the real eigenvalue $1-\gamma$ of Lemma~\ref{th:one_gamma_eigen},
marked in green in Fig.~\ref{fig:eigenvalues}. We can keep increasing
$\rho$ to make the radius of the circle the smallest possible, which happens
when these complex eigenvalues have vanishing imaginary part. This determines
the best parameter $\rho^\star$. Now we can fix $\gamma^\star$ by making
$|1-\gamma|$ the same size as the norm of the previous complex conjugate 
eigenvalues.
Using these ideas we obtain our next result, whose proof is contained in
Appendix~\ref{sec:proof_tuning}.

\begin{figure}
\includegraphics[width=\textwidth]{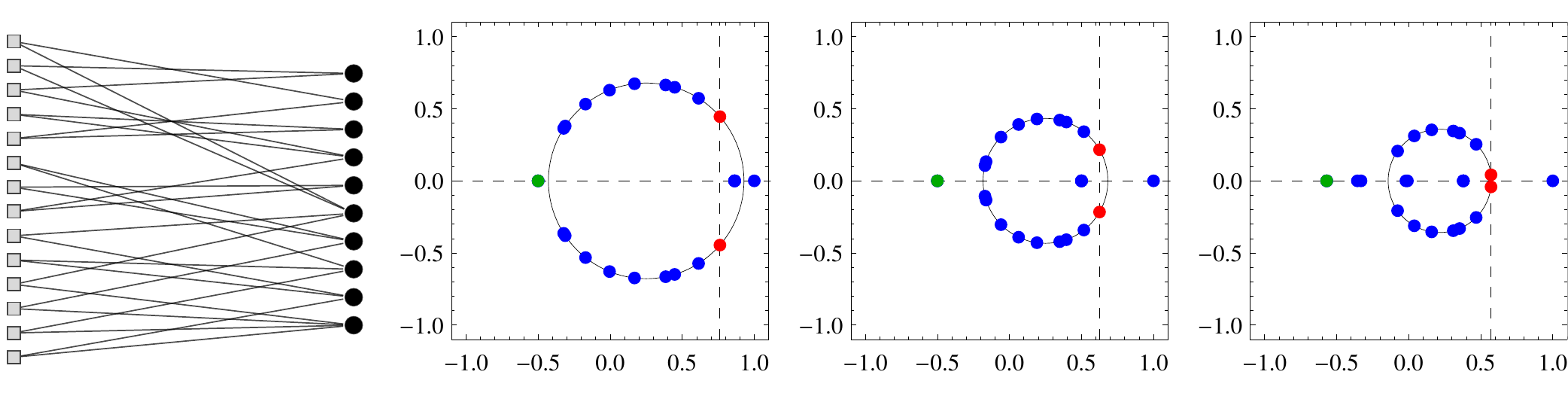}
\put(-415,-9){(a)}
\put(-290,-9){(b)}
\put(-170,-9){(c)}
\put(-50,-9){(d)}
\caption{
\label{fig:eigenvalues}
(a) The factor graph of a randomly generated
graph. The
squares are the functions $f_a \in \bF$, and the circles
are the variables
$z_b \in \bV$, where here $|\bF|=17$, $|\bV| = 10$, and $|\bE|=28$.
We numerically compute the eigenvalues of
\eqref{eq:Ta_new}. 
The complex eigenvalues lie on a circle of
radius 
$\tfrac{\gamma}{2}\sqrt{(2-\rho)/(2+\rho)}$, centered
at $1-\gamma/2$, as described
by the formula in Theorem~\ref{th:omega_w}. The red dots
correspond to the second largest, in magnitude, eigenvalues in 
\eqref{eq:EigenTA_random}. The green dot
is the eigenvalue $1-\gamma$ of
Lemma~\ref{th:one_gamma_eigen}. We have the following parameters:
(b) $\gamma = 1.5$, $\rho = 0.2$;
(c) $\gamma = 1.5$, $\rho = 1$;
(d) $\gamma = 1.59$, $\rho = 1.31$. This is close to the optimal
parameters of 
Theorem~\ref{th:ADMM_convergence_rate}, which are
$\gamma^\star = 1.56976$ and
$\rho^\star = 1.32181$, obtained with $\omega^\star = 0.75047$ which
is determined by the graph.
}
\end{figure}

\begin{theorem}[Optimal convergence rate for ADMM]
\label{th:ADMM_convergence_rate}
Assume that the graph $\G$ has at least one cycle of even length, and
conductance $\Phi \leq 1/2$.
Let $\W = \D^{-1} \AA$ be the transition matrix of a random walk on $\G$,
and denote its second largest
eigenvalue by $\omega^\star = \lambda_2(\W) \in (0,1)$.
Let $\lambda_2(T_A)$ be the second largest, in absolute value,
eigenvalue of $T_A$.
The best possible convergence rate of ADMM is thus given by
\begin{equation}
\label{eq:optimal_rate_ADMM}
\tau^\star_A \equiv \min_{\gamma, \rho} |\lambda_2(T_A)| = \gamma^\star - 1,
\end{equation}
where
\begin{equation}
\label{eq:optimal_gamma_rho_ADMM}
\gamma^\star = \dfrac{4}{3-\sqrt{(2-\rho^\star)/(2+\rho^\star)}} 
\qquad \mbox{ and } \qquad
\rho^\star = 2 \sqrt{1 - ({\omega^{\star}})^2}.
\end{equation}
\end{theorem}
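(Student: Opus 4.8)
The plan is to reduce everything to a two-parameter optimization over the locus of eigenvalues described by Theorem~\ref{th:omega_w}. By Theorem~\ref{th:omega_w}, every eigenvalue of $T_A$ other than $\lambda_1=1$ arises from some $\lambda(\W)\in(-1,1)$ via formula~\eqref{eq:EigenTA_random}; when $\rho^2/4+\lambda^2(\W)<1$ the pair is genuinely complex with modulus
\[
|\lambda^\pm(T_A)|^2 = \Bigl(1-\tfrac{\gamma}{2}\Bigr)^2 + \Bigl(1-\tfrac{\gamma}{2}\Bigr)\tfrac{\gamma}{2+\rho}\lambda(\W)\cdot 2 + \Bigl(\tfrac{\gamma}{2+\rho}\Bigr)^2\bigl(1-\tfrac{\rho^2}{4}\bigr),
\]
wait—more cleanly, one checks directly from~\eqref{eq:EigenTA_random} that in the complex regime $|\lambda^\pm(T_A)|$ depends on $\lambda(\W)$ only through a monotone expression, so among the complex eigenvalues the largest modulus is attained at $\omega^\star=\lambda_2(\W)$ (the relevant endpoint of the admissible range, using $0\le\omega^\star<1$ guaranteed by $\Phi\le 1/2$). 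In the real regime the modulus is maximized either at the same $\omega^\star$ branch once it has collapsed to the real axis, or at the distinguished eigenvalue $1-\gamma$ of Lemma~\ref{th:one_gamma_eigen}. So the whole problem becomes: minimize over $(\gamma,\rho)$ the maximum of two scalar quantities — the modulus $r(\gamma,\rho)$ of the $\omega^\star$-eigenvalue and $|1-\gamma|$.

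First I would record the closed form of $r(\gamma,\rho)$. From~\eqref{eq:Ta_new}, the complex eigenvalues of $U$ lie on the circle of radius $\sqrt{1-\rho^2/4}$ through $\lambda(\W)\pm i\sqrt{\,\cdot\,}$, so $\lambda^\pm(T_A)$ lies on a circle centered at $1-\gamma/2$ of radius $\tfrac{\gamma}{2+\rho}\sqrt{1-\rho^2/4}=\tfrac{\gamma}{2}\sqrt{(2-\rho)/(2+\rho)}$, and its modulus is
\[
r(\gamma,\rho)^2 = \Bigl(1-\tfrac{\gamma}{2}+\tfrac{\gamma}{2+\rho}\omega^\star\Bigr)^2 + \tfrac{\gamma^2}{(2+\rho)^2}\Bigl(1-\tfrac{\rho^2}{4}-(\omega^\star)^2\Bigr),
\]
valid while the bracketed term under the root is nonnegative, i.e. $\rho\le 2\sqrt{1-(\omega^\star)^2}$. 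Next I would argue the minimax is attained at a \emph{balanced} point where $r(\gamma,\rho)=|1-\gamma|$ and, moreover, where the imaginary part of the $\omega^\star$-eigenvalue has just vanished, i.e. $\rho=\rho^\star:=2\sqrt{1-(\omega^\star)^2}$. The reasoning: for fixed $\gamma$, $r$ is decreasing in $\rho$ on $[0,\rho^\star]$ (only the negative-sign imaginary contribution shrinks the radius), while $|1-\gamma|$ is independent of $\rho$; pushing $\rho$ beyond $\rho^\star$ does not help because the $\omega^\star$-branch then splits into two real eigenvalues, the outer one moving \emph{away} from $1-\gamma/2$ toward $1$, so its modulus starts increasing again. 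Hence $\rho^\star$ is optimal, and at $\rho=\rho^\star$ one has $r(\gamma,\rho^\star)=\bigl|1-\tfrac{\gamma}{2}+\tfrac{\gamma}{2+\rho^\star}\omega^\star\bigr|$, a linear function of $\gamma$; setting it equal to $\gamma-1$ (the right sign since we expect $\gamma^\star>1$) and solving gives $\gamma^\star=4\big/\bigl(3-\sqrt{(2-\rho^\star)/(2+\rho^\star)}\bigr)$ after substituting $\tfrac{2}{2+\rho^\star}\omega^\star=\tfrac12\sqrt{(2-\rho^\star)/(2+\rho^\star)}$ (which follows from $(\omega^\star)^2=1-(\rho^\star)^2/4$). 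Then $\tau_A^\star=\gamma^\star-1$.

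I would then close two loose ends. (i) \emph{Why no other $\lambda(\W)$ matters.} For $\lambda(\W)$ with $\lambda^2(\W)\ge 1-(\rho^\star)^2/4$ the eigenvalue is already real at $\rho^\star$ and sits inside the interval $[1-\gamma,1]$-ish region; a short monotonicity check in $\lambda(\W)$ on~\eqref{eq:EigenTA_random} shows its modulus is dominated by $\max\{|1-\gamma|,r\}$. The negative eigenvalues of $\W$ contribute symmetrically and are handled the same way; here the hypothesis $\omega^\star\ge 0$ (from $\Phi\le 1/2$ via Cheeger) is what guarantees the extremal complex eigenvalue is the $\lambda_2(\W)$ one and not some negative $\lambda(\W)$. (ii) \emph{Joint optimality.} A clean way to finish is to verify that at $(\gamma^\star,\rho^\star)$ any admissible perturbation increases $\max\{r,|1-\gamma|\}$: decreasing $\gamma$ raises $|1-\gamma|$... no, raises $r$ actually and lowers $|1-\gamma|$ — so I would check the two one-sided derivatives have opposite signs at the balanced point, certifying a minimax. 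The main obstacle I anticipate is precisely this last bookkeeping: rigorously ruling out that some interior $\lambda(\W)$, at some off-diagonal $(\gamma,\rho)$ with $\rho<\rho^\star$, gives a smaller minimax — i.e. proving the reduction "only $\omega^\star$ and $1-\gamma$ matter" is uniform over the whole parameter region, not just near the optimum. Everything else is routine algebra once the geometric picture of Fig.~\ref{fig:eigenvalues} (circles shrinking vertically, the green real eigenvalue fixed at $1-\gamma$) is made precise.
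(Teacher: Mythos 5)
Your proposal follows essentially the same route as the paper's proof: reduce to balancing the modulus of the complex pair at $\omega^\star=\lambda_2(\W)$ (which dominates all other complex eigenvalues because $|\lambda^\pm(T_A)|^2$ is increasing in $\lambda(\W)$ for $\gamma<2$, and dominates the remaining real eigenvalues because they sit strictly inside the circle of radius $\tfrac{\gamma}{2}\sqrt{(2-\rho)/(2+\rho)}$) against the controllable real eigenvalue $1-\gamma$, then set $\rho^\star$ so the imaginary part vanishes and solve $r(\gamma,\rho^\star)=\gamma-1$. The "loose ends" you flag are exactly the points the paper also treats with the same brief monotonicity/geometry arguments, and your algebra (including $\tfrac{\omega^\star}{2+\rho^\star}=\tfrac12\sqrt{(2-\rho^\star)/(2+\rho^\star)}$ leading to $\gamma^\star=4/(3-\sqrt{(2-\rho^\star)/(2+\rho^\star)})$) matches the stated result.
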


The above theorem provides optimal parameter selection for over-relaxed
ADMM in terms of the second largest eigenvalue $\omega^\star = \lambda_2(\W)$ 
of the transition matrix, which captures the topology of $\G$. Recall
that $\omega^\star$ is also related to the well-known spectral gap.

\subsection{Graphs without even cycles}

We can still solve the analogous of Theorem~\ref{th:ADMM_convergence_rate}
when the graph $\G$ does not have
even length cycles, or $\G$ has high conductance. However, this does not
introduces new insights and slightly complicates the analysis. To be concrete,
we just state one of such cases below.

Consider a case where $\G$ does not have
a cycle of even length, for example when $\G$ is a tree, thus
$\lambda(T_A) = 1-\gamma$ does not exist.
The most interesting case is for slow 
mixing chains, $\Phi \leq 1/2$, and 
analogously to 
Theorem~\ref{th:ADMM_convergence_rate}
we obtain the following result.

\begin{theorem}
\label{th:ADMM_convergence_rate2}
Assume that the graph $\G$ has no cycles of even length, and has
conductance $\Phi \leq 1/2$.
Let $\W = \D^{-1} \AA$ be the transition matrix of a random walk on $\G$.
Denote the  second largest eigenvalue of the transition matrix $\W$ by 
$\omega^\star \in (0,1)$,
and $\bar{\omega}$ its smallest eigenvalue different than $-1$. Assume 
that $|\bomega| \geq \omega^*$. 
Let $\lambda_2(T_A)$ be the second largest, in absolute value,
eigenvalue of $T_A$.
The best possible convergence rate of ADMM is given by
\begin{equation}
\label{eq:optimal_rate_ADMM2}
\tau^\star_A \equiv \min_{\gamma, \rho} |\lambda_2(T_A)| = 
1 - \dfrac{\gamma^\star}{2}\left( 1 - \dfrac{2}{2+\rho^\star}\omega^\star
\right)
\end{equation}
where
\begin{equation}
\label{eq:optimal_gamma_rho_ADMM2}
\gamma^\star = 4 \left( 2- 
\dfrac{\omega^\star + \bomega - \sqrt{ \bomega^2 - (\omega^\star)^2
}}{1+\sqrt{1-(\omega^\star)^2}}
\right)^{-1}
\qquad \mbox{ and } \qquad
\rho^\star = 2 \sqrt{1 - ({\omega^{\star}})^2}.
\end{equation}
\end{theorem}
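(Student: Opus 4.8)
The idea is to convert $\min_{\gamma,\rho}|\lambda_2(T_A)|$ into an explicit scalar optimization, following the heuristic described just before Theorem~\ref{th:ADMM_convergence_rate}, and then verify an equioscillation condition at the claimed point $(\gamma^\star,\rho^\star)$. Since $\G$ has no cycle of even length, Lemma~\ref{th:one_gamma_eigen} rules out $1-\gamma$ as an eigenvalue of $T_A$, so by Theorem~\ref{th:omega_w} (together with the multiplicity bookkeeping in its proof) every eigenvalue of $T_A$ besides $\lambda_1(T_A)=1$ equals $c+k\big(w\pm\sqrt{1-\rho^2/4-w^2}\big)$ for some eigenvalue $w$ of $\W$, where $c\equiv 1-\gamma/2>0$ and $k\equiv\gamma/(2+\rho)>0$; the endpoints $w=\pm1$ need slightly separate treatment, which under our hypotheses never yields the second-largest eigenvalue. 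For $\rho^2/4+w^2\le 1$ this is a complex-conjugate pair of squared modulus $c^2+2ckw+k^2(1-\rho^2/4)$, and otherwise it is the pair of reals $c+k\big(w\pm\sqrt{w^2-1+\rho^2/4}\big)$.

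\textbf{Locating the binding eigenvalues and fixing $\rho^\star$.} In the complex regime the modulus is strictly increasing in $w$ (as $c,k>0$), so among complex eigenvalues the dangerous pair is attached to the largest admissible value $w=\omega^\star$; on the negative side, once $w$ enters the real regime the branch $c+k\big(w-\sqrt{w^2-1+\rho^2/4}\big)$ is negative with modulus strictly increasing as $w$ decreases, so the dangerous real branch is attached to $w=\bomega$. Hence $|\lambda_2(T_A)|=\max\{g_{\omega^\star},g_{\bomega},\dots\}$, where $g_w$ denotes the modulus of the worst eigenvalue attached to $w$, and one also has to show that the intermediate eigenvalues $w\in(\bomega,\omega^\star)$ and the $w=-1$ eigenvalue are dominated at the optimum. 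Shrinking the eigenvalue circle of radius $\tfrac{\gamma}{2}\sqrt{(2-\rho)/(2+\rho)}$ as much as possible while preventing $g_{\omega^\star}$ from turning into a large negative real is achieved by taking $\rho$ so that the $\omega^\star$-pair just reaches the real axis, i.e.\ $1-\rho^2/4-(\omega^\star)^2=0$, giving $\rho^\star=2\sqrt{1-(\omega^\star)^2}$; then $g_{\omega^\star}=c+k\omega^\star=1-\tfrac{\gamma}{2}\big(1-\tfrac{2}{2+\rho^\star}\omega^\star\big)$, which is the right-hand side of \eqref{eq:optimal_rate_ADMM2}. One verifies that $\rho<\rho^\star$ makes the (now complex) $\omega^\star$-pair strictly larger, and that $\rho>\rho^\star$ makes the $\bomega$-branch grow at least as fast as $g_{\omega^\star}$ decays, so $\rho^\star$ is optimal for each fixed admissible $\gamma$.

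\textbf{Fixing $\gamma^\star$ by balancing.} At $\rho=\rho^\star$ the $\bomega$-pair is real precisely when $\bomega^2\ge(\omega^\star)^2$, i.e.\ under the hypothesis $|\bomega|\ge\omega^\star$, and its worst branch is $c+k\big(\bomega-\sqrt{\bomega^2-(\omega^\star)^2}\big)<0$. Balancing $g_{\omega^\star}$ against the absolute value of this branch, i.e.\ imposing $2c+k\big(\omega^\star+\bomega-\sqrt{\bomega^2-(\omega^\star)^2}\big)=0$, and substituting $2+\rho^\star=2\big(1+\sqrt{1-(\omega^\star)^2}\big)$, one solves for $\gamma$ and recovers exactly the expression for $\gamma^\star$ in \eqref{eq:optimal_gamma_rho_ADMM2}. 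Since on the relevant interval $g_{\omega^\star}$ is decreasing in $\gamma$ whereas the modulus of the $\bomega$-branch is increasing in $\gamma$, this common value is the minimum over $\gamma$, and inserting $\gamma^\star$ into $g_{\omega^\star}$ gives the stated $\tau^\star_A$.

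\textbf{Expected main difficulty.} The algebra above is routine; the real work is the global lower bound: showing that at $(\gamma^\star,\rho^\star)$ no other eigenvalue family exceeds the balanced value, and that the monotonicity claims above hold \emph{jointly} in $(\gamma,\rho)$ rather than merely coordinatewise. In particular one must exclude the possibility that a different $\rho$ allows a much larger $\gamma$ (hence a smaller real $\omega^\star$-eigenvalue) while the $\bomega$-family stays controlled, and one must dominate the intermediate eigenvalues $w\in(\bomega,\omega^\star)$ and the bipartite $w=-1$ eigenvalue uniformly. I expect this to be handled as for Theorem~\ref{th:ADMM_convergence_rate}: write $w\mapsto g_w(\gamma,\rho)$ as a maximum of finitely many explicit functions and verify that $(\gamma^\star,\rho^\star)$ is a min-max point of the resulting two-variable problem.
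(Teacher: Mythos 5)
Your proposal is correct and follows essentially the route the paper intends: the paper omits this proof, stating it is analogous to that of Theorem~\ref{th:ADMM_convergence_rate}, and your adaptation — using Lemma~\ref{th:one_gamma_eigen} to discard the $1-\gamma$ eigenvalue, choosing $\rho^\star$ to collapse the $\omega^\star$ pair onto the real axis, and then balancing $1-\tfrac{\gamma}{2}+\tfrac{\gamma}{2+\rho^\star}\omega^\star$ against the negative real branch $1-\tfrac{\gamma}{2}+\tfrac{\gamma}{2+\rho^\star}\bigl(\bomega-\sqrt{\bomega^{2}-(\omega^\star)^{2}}\bigr)$ in place of $|1-\gamma|$ — is exactly the intended modification and reproduces both formulas in \eqref{eq:optimal_gamma_rho_ADMM2}. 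The residual verifications you flag (domination of intermediate $w$, the bipartite $w=-1$ case, and joint rather than coordinatewise optimality) are real but are treated at the same level of detail in the paper's own proof of Theorem~\ref{th:ADMM_convergence_rate}.
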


Notice that if we replace $\bomega=-1$ in the above formulas
we recover the results from 
Theorem~\ref{th:ADMM_convergence_rate}.
Furthermore, a straightforward calculation shows that the rate 
\eqref{eq:optimal_rate_ADMM} is always an upper bound for
the rate \eqref{eq:optimal_rate_ADMM2}. 
In fact, we can show that \eqref{eq:optimal_rate_ADMM} 
is always an upper bound for $\tau^*_A$
regardless of the topology of $\G$. 
We omit these results for simplicity, as well as a proof of 
Theorem~\ref{th:ADMM_convergence_rate2}
since it is analogous to the proof
of 
Theorem~\ref{th:ADMM_convergence_rate}.

\subsection{Numerical examples}

We provide some numerical experiments illustrating
our theoretical results by considering the graphs shown
in Table~\ref{tb:graphs}. The second largest eigenvalue of the transition
matrix, $\omega^\star$, is determined by the graph. The conductance $\Phi$
is computed by direct inspection of $\G$ and 
\eqref{eq:conductance_for_random_walk}. The other quantities are computed
from our theoretical predictions, e.g.
Theorem~\ref{th:ADMM_convergence_rate} and 
Theorem~\ref{th:ADMM_convergence_rate2}.
For each graph, in Fig.~\ref{fig:numerical} we show the corresponding
convergence rates from a numerical computation of the second largest eigenvalue of
$T_A$, denoted by $\lambda_2$. We fix several values of $\gamma$ and 
plot $|\lambda_2|$ versus $\rho$. The solid blue lines in the plots
correspond to our theoretical prediction for the optimal convergence 
rate $\tau_A^\star$, whose values are in Table~\ref{tb:graphs}. The red lines
show the convergence rate as function of $\rho$ for 
optimal $\gamma = \gamma^*$ from a numerical computation. 
Both curves touch under optimal parameter tuning, confirming our
theoretical predictions. The remaining curves show suboptimal rates.

For the graphs in (a) and (b) of Table~\ref{tb:graphs} the assumptions
of Theorem~\ref{th:ADMM_convergence_rate} 
hold, thus we can find optimal parameters through the formulas
\eqref{eq:optimal_rate_ADMM} and
\eqref{eq:optimal_gamma_rho_ADMM}, whose values are indicated.
In Fig.~\ref{fig:numerical}a and Fig.~\ref{fig:numerical}b
we can see that the optimal numerical rates (red lines) match
the prediction of formula \eqref{eq:optimal_rate_ADMM} (blue lines).
We also included two other curves
using the values $\gamma=1.3$ and $\gamma=1.6$ to show that the rates becomes
suboptimal if $(\rho,\gamma) \neq (\rho^*,\gamma^*)$.

For the graph in (c) the assumptions
of Theorem~\ref{th:ADMM_convergence_rate} do not hold since the conductance
$\Phi > 1/2$. A similar analysis as of
Theorem~\ref{th:ADMM_convergence_rate} 
shows that, for all graphs with even cycles and high conductance 
we have $\tau^*_A = 1/3$, $\gamma^* = 4/3$ and $\rho^* = 2$, which are the
values indicated in Table~\ref{tb:graphs}. We omit this proof for simplicity
of presentation. In Fig.~\ref{fig:numerical}c we show that a numerical
calculation matches this prediction (blue and red lines). The curves
with $\gamma_1=1.2$ and $\gamma_2=1.5$ give suboptimal rates.
A misapplication of 
Theorem~\ref{th:ADMM_convergence_rate}  gives $\rho_3 \approx 1.886$, 
$\gamma_3\approx1.414$ and $\tau_A^{(3)} \approx 0.414$, which still 
gives an upper bound on
the optimal $\tau_A^\star=1/3$. 
Using the value of $\gamma_3$ to numerically
compute $\lambda_2(\rho)$ yields the curve shown in dashed line. 

Theorem~\ref{th:ADMM_convergence_rate2} holds to the case of the graph
in item (d), whose predictions
are shown in Table~\ref{tb:graphs}. These agree with the numerical
results shown in Fig.~\ref{fig:numerical}d.
A misapplication 
of Theorem~\ref{th:ADMM_convergence_rate} 
yields $\rho_3\approx1.351$, $\gamma_3\approx1.563$ 
and $\tau_A^{(3)}\approx0.659$, which still
upper bounds $\tau_A^\star \approx 0.536$. Using $\gamma_3$ to compute
$\lambda_2(\rho)$ yields
the suboptimal rate shown in dashed line.

\begin{table}
\def\arraystretch{1.2}
\begin{tabular}{@{}m{.6cm}m{1cm}|m{.8cm}m{2.2cm}m{1.2cm}m{1.2cm}m{1.2cm}m{1.5cm}m{1.5cm}@{}}
\toprule[1pt]
& $\bm{\G}$ & $\bm{\Phi}$ & $\bm{\omega^\star}$ & $\bm{\rho^\star}$ & 
$\bm{\gamma^\star}$ & $\bm{\tau_A^\star}$ 
& \textbf{Thm.~\ref{th:ADMM_convergence_rate}} 
& \textbf{Thm.~\ref{th:ADMM_convergence_rate2}} 
\\
\midrule[0.5pt]
(a) & \includegraphics[width=.7cm,height=.7cm,trim=0 .6cm .6cm 0]{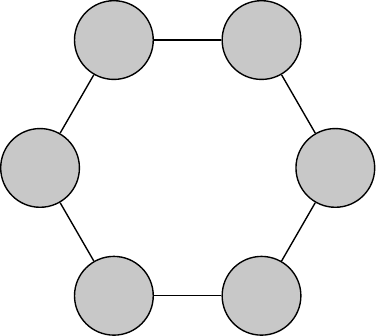} & 
$1/3$ & $1/2$ & $1.732$ & $1.464$ & $0.464$ & \cmark & \xmark \\
(b) & \includegraphics[width=.7cm,height=.7cm,trim=0 .6cm .6cm 0]{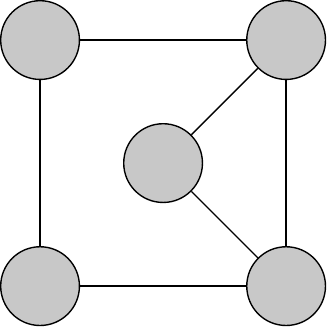} & 
$1/2$ & $1/3$ & $1.886$ & $1.414$ & $0.414$ & \cmark & \xmark \\
(c) & \includegraphics[width=.7cm,height=.7cm,trim=0 .6cm .6cm 0]{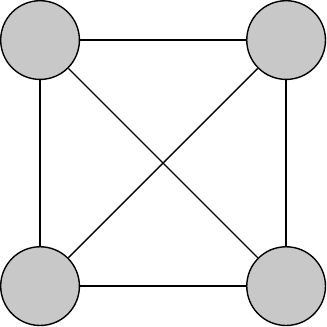}
&
$1$ & $-1/3$ & $2$ & $4/3$ & $1/3$ & \xmark & \xmark \\
(d) & \includegraphics[width=.7cm,height=.7cm,trim=0 .6cm .6cm 0]{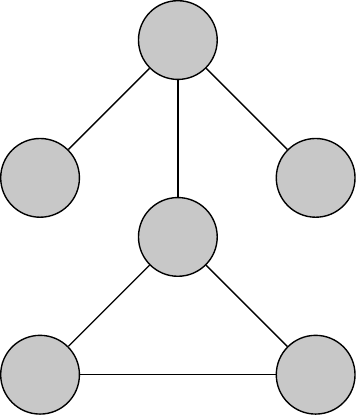}  &
$1/5$ & $\tfrac{1}{12}(\sqrt{97}-1)$ & $1.351$ & $1.659$ & $0.536$ & \xmark &
\cmark \\
\bottomrule[1pt]
\end{tabular}
\caption{\label{tb:graphs}
Application of Theorem~\ref{th:ADMM_convergence_rate} for the graphs 
in (a) and (b). For the graph in (c) the assumptions of 
Theorem~\ref{th:ADMM_convergence_rate} do not hold, but similar analysis
give the above results.
The graph in (d) has no even cycle, however 
Theorem~\ref{th:ADMM_convergence_rate2} applies 
with $\bar{\omega} = -\tfrac{1}{12}(\sqrt{97}+1)$, which is fixed by $\G$.
}
\end{table}
\begin{figure}
\includegraphics[width=\textwidth]{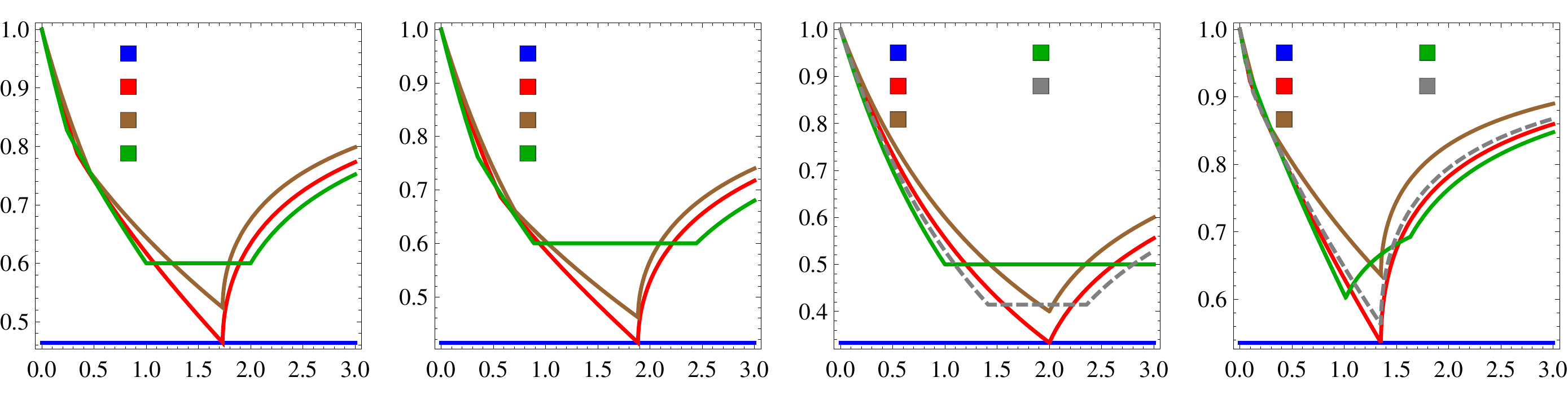}
\put(-415,-7){(a)}
\put(-295,-7){(b)}
\put(-175,-7){(c)}
\put(-58,-7){(d)}
\put(-425,101){\scriptsize{$\tau_A^\star$}}
\put(-425,91){\scriptsize{$|\lambda_2(\rho)|_{\gamma^\star}$}}
\put(-425,81){\scriptsize{$|\lambda_2(\rho)|_{\gamma_1}$}}
\put(-425,71){\scriptsize{$|\lambda_2(\rho)|_{\gamma_2}$}}
\put(-306,101){\scriptsize{$\tau_A^\star$}}
\put(-306,91){\scriptsize{$|\lambda_2(\rho)|_{\gamma^\star}$}}
\put(-306,81){\scriptsize{$|\lambda_2(\rho)|_{\gamma_1}$}}
\put(-306,71){\scriptsize{$|\lambda_2(\rho)|_{\gamma_2}$}}
\put(-195,101){\scriptsize{$\tau_A^\star$}}
\put(-195,91){\scriptsize{$|\lambda_2(\rho)|_{\gamma^\star}$}}
\put(-195,81){\scriptsize{$|\lambda_2(\rho)|_{\gamma_1}$}}
\put(-154,101){\scriptsize{$|\lambda_2(\rho)|_{\gamma_2}$}}
\put(-154,91){\scriptsize{$|\lambda_2(\rho)|_{\gamma_3}$}}
\put(-81,101){\scriptsize{$\tau_A^\star$}}
\put(-81,91){\scriptsize{$|\lambda_2(\rho)|_{\gamma^\star}$}}
\put(-81,81){\scriptsize{$|\lambda_2(\rho)|_{\gamma_1}$}}
\put(-37,101){\scriptsize{$|\lambda_2(\rho)|_{\gamma_2}$}}
\put(-37,91){\scriptsize{$|\lambda_2(\rho)|_{\gamma_3}$}}
\caption{
\label{fig:numerical}
Numerical results of
$\tau_A(\rho,\gamma) = |\lambda_2(\rho)|_{\gamma}$, where
$\lambda_2=\lambda_2(T_A)$ is the second largest eigenvalue of $T_A$, versus
$\rho$. We fix $\gamma$ for each curve, and
each figure corresponds to a graphs in Table~\ref{tb:graphs}.
Solid blue lines correspond to theoretical predictions for $\tau_A^\star$.
(a,b) $\gamma_1=1.3$ and $\gamma_2=1.6$.
(c) $\gamma_1 = 1.2$ and $\gamma_2 = 1.5$.
Formula \eqref{eq:optimal_gamma_rho_ADMM}
gives $\gamma_3 = 1.414$ and the suboptimal rate shown in dashed line. 
(d) 
$\gamma_1 = 1.3$ and $\gamma_2=1.8$.
A misapplication of 
\eqref{eq:optimal_rate_ADMM} gives $\gamma_3=1.659$, shown in dashed line.
}
\end{figure}

\section{Comparison with Gradient Descent}
\label{sec:admm_vs_gd}

We now compare the optimal convergence rates of distributed ADMM and GD,
denoted by $\tau_A^\star$ and $\tau_G^\star$, respectively. 
Let us first recall that the convergence rate of GD is related to eigenvalues
of the graph Laplacian.
We can write the objective function \eqref{eq:gen_quad} explicitly as 
\begin{equation}\label{eq:new_f_quad}
f(z) = \dfrac{1}{2} \sum_{i \in \V} \Big\{
d_i z_i^2  - 2 z_i \sum_{j \in N_i} z_j
+ \sum_{j \in N_i} z_j^2\Big\}
\end{equation}
where $N_i$ is the neighboring set of node $i \in \V$, and $d_i$ is its
degree.
Using the component form of GD update,
$z_k^{t+1} = z_k^t - \alpha\partial_{z_k}f(z^t)$, and noticing that
the last term of \eqref{eq:new_f_quad}
does not contribute since $i\ne j$, we obtain
%
$z_k^{t+1} = z_k^t - \alpha \big( 
d_k z_k^{t} - \sum_{j \in N_k} z_j^t \big)$.
%
Notice also that $\sum_{j\in N_k}z_j = \sum_{j \in \V}\AA_{kj}z_j$, 
where $\AA$ is the adjacency
matrix of $\G$. Therefore,  writing this result in matrix
form 
we have
\begin{equation}\label{eq:gd_laplacian}
\vz^{t+1} = T_G \vz^t, \qquad T_G = I - \alpha \L,
\end{equation}
where $\L \equiv \D - \AA$ is the Laplacian of $\G$.
Since the eigenvalues of $\L$ are real, we assume the following ordering:
$
\lambda_1(\L) \ge \lambda_2(\L) \ge \dotsc \ge 
\lambda_{|\E|-1}(\L) \ge \lambda_{|\E|}(\L)=0.
$

Let $\bell = \lambda_1(\L)$ be the largest eigenvalue of $\L$, 
and $\ell^\star = \lambda_{|\E|-1}$ be the second smallest and nonzero eigenvalue
of $\L$. We have
$\tau^\star_G = \min_{\alpha} \max 
\left\{ |1- \alpha \, \bell |  ,  |1 - \alpha \, \ell^\star| \right\} $
whose solution is
\begin{equation}
\label{eq:tau_g_star}
\tau_G^\star
= \dfrac{\bell - \ell^\star}{\bell + \ell^\star}.
\end{equation}
To relate this result with the transition matrix $\W$, 
note that
%
$\D^{1/2} \W \D^{-1/2} = \D^{-1/2} \AA \D^{-1/2} \equiv I - \LL$,
%
where $\LL$ is the 
\emph{normalized} Laplacian of $\G$. Thus,
both operators have the same eigenvalues,
%
$
\lambda(\W) =  1 - \lambda(\LL),
$
%
which are all real.
We now use the following
bounds \cite[Lemmas 2.12 and 2.21]{zumstein2005comparison}:
\begin{subequations}
\label{eq:bounds}
\begin{align}
d_{\textnormal{min}} \lambda_i(\LL) \leq  & \lambda_i(\L) \leq 
d_{\textnormal{max}} \lambda_i(\LL) \qquad (i = 1,\dotsc,|\E|),
\label{eq:bound1} \\
d_{\textnormal{max}} \leq & \lambda_{1}(\L) \leq 2 d_{\textnormal{max}},
\label{eq:bound2}
\end{align}
\end{subequations}
where $d_{\textnormal{max}}$ and $d_{\textnormal{min}}$ 
are the maximum and minimum degree 
of $\G$, respectively. 
Equation 
\eqref{eq:bound1} gives us
$ \ell^\star/d_{\max} \leq 1 - \omega^* \leq   \ell^\star/d_{\min}$, 
which together
with equation \eqref{eq:bound2} allows us to bound $\tau^*_G$ 
using $\omega^*$. Further
using an expansion of
\eqref{eq:optimal_rate_ADMM} around  $\omega^* = 1$ 
leads to the following result, whose proof can be found in
Appendix~\ref{sec:proof_conjecture}.

\begin{theorem}[ADMM speedup] 
\label{th:main_theorem_about_TA_and_TG}
Assume that the graph $\G$ has an even length cycle and $\Phi \leq 1/2$, such that 
Theorem~\ref{th:ADMM_convergence_rate} holds.
Then, there is
$C = 1 - \mathcal{O}\big(\sqrt{\delta}\big)$ such
that
\begin{equation}
\label{eq:ADMM_GD_Square}
C \big(1-\tau_G^\star\big) 
\le 
\big(1-\tau_A^\star\big)^2 \le
2 \Delta C \big(1-\tau^\star_G\big),
\end{equation}
where $\Delta = d_{\textnormal{max}}/d_{\textnormal{min}}$ is the ratio of
the maximum to the minimum degree of $\G$. 
Here $\delta = 1 - \omega^\star$ is the spectral gap.
\end{theorem}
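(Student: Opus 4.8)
The plan is to reduce everything to the single real parameter $\omega^\star=1-\delta$ and chase two chains of inequalities. First I would use Theorem~\ref{th:ADMM_convergence_rate}: since $\tau_A^\star=\gamma^\star-1$ and $\gamma^\star=4/\big(3-\sqrt{(2-\rho^\star)/(2+\rho^\star)}\big)$ with $\rho^\star=2\sqrt{1-(\omega^\star)^2}$, a direct substitution gives $\sqrt{(2-\rho^\star)/(2+\rho^\star)}$ as an explicit algebraic function of $\omega^\star$; plugging in yields a closed form for $1-\tau_A^\star=2-\gamma^\star$ purely in terms of $\delta$. Expanding this around $\omega^\star=1$ (i.e.\ $\delta\to 0$), I expect $1-\tau_A^\star = a\sqrt{\delta}\,(1+\mathcal{O}(\sqrt\delta))$ for an explicit constant $a$, so that $(1-\tau_A^\star)^2 = a^2\delta\,(1-\mathcal{O}(\sqrt\delta))$. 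This is where the factor $C=1-\mathcal{O}(\sqrt\delta)$ enters: it absorbs the lower-order terms of the expansion.

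Next I would control $1-\tau_G^\star$ in terms of the same $\delta$. From \eqref{eq:tau_g_star}, $1-\tau_G^\star = 2\ell^\star/(\bell+\ell^\star)$. Using \eqref{eq:bound2}, $\bell\in[d_{\max},2d_{\max}]$, and using \eqref{eq:bound1} together with $\lambda_{|\E|-1}(\LL)=1-\omega^\star=\delta$, I get $d_{\min}\delta\le \ell^\star\le d_{\max}\delta$. Combining these bounds sandwiches $1-\tau_G^\star$ between two constant multiples of $\delta$: roughly $\tfrac{1}{2}\,(d_{\min}/d_{\max})\,\delta \lesssim 1-\tau_G^\star \lesssim 2\delta$, up to the obvious lower-order corrections in the denominator since $\ell^\star\ll\bell$ for small $\delta$. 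The ratio $\Delta=d_{\max}/d_{\min}$ appears precisely because the Laplacian-to-normalized-Laplacian comparison is degree-weighted.

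Finally I would assemble the two-sided bound \eqref{eq:ADMM_GD_Square}. Writing $(1-\tau_A^\star)^2 = a^2\delta\,C'$ and $1-\tau_G^\star = \delta\cdot(\text{something in }[c_1,c_2])$ with $c_1\asymp d_{\min}/(2d_{\max})$, $c_2\asymp 2$, the ratio $(1-\tau_A^\star)^2/(1-\tau_G^\star)$ is pinned between $a^2 C'/c_2$ and $a^2 C'/c_1$; after checking that $a^2/2$ can be matched to the stated $C$ (by computing $a$ from the expansion) and that $a^2/c_1 \le 2\Delta C$, the inequalities follow, with all lower-order slack rolled into $C=1-\mathcal{O}(\sqrt\delta)$. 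The main obstacle I anticipate is the first step: carrying out the $\delta\to 0$ expansion of $2-\gamma^\star$ cleanly enough to identify the leading constant $a$ and to verify that the correction is genuinely $\mathcal{O}(\sqrt\delta)$ (not, say, $\mathcal{O}(\delta^{1/4})$) — this requires expanding $\sqrt{(2-\rho^\star)/(2+\rho^\star)}$ where $\rho^\star=2\sqrt{1-(\omega^\star)^2}=2\sqrt{\delta(2-\delta)}\approx 2\sqrt{2\delta}$, so $\rho^\star$ itself is $\Theta(\sqrt\delta)$ and one must track how that nested square root propagates through $\gamma^\star$. Everything after that is bookkeeping with the elementary bounds \eqref{eq:bounds}.
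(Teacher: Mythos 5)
Your proposal is correct and follows essentially the same route as the paper: expand $1-\tau_A^\star=2-\gamma^\star$ around $\omega^\star=1$ to get $(1-\tau_A^\star)^2=2\delta\bigl(1-\mathcal{O}(\sqrt{\delta})\bigr)$, then sandwich $1-\tau_G^\star$ between $\delta/\Delta$ and $2\delta$ (up to lower-order corrections) via the bounds \eqref{eq:bounds} applied to \eqref{eq:tau_g_star}, and combine. The leading constant you call $a$ is indeed $\sqrt{2}$, and the nested square root in $\rho^\star=2\sqrt{\delta(2-\delta)}$ propagates exactly as you anticipate, giving a correction of order $\sqrt{\delta}$ and not worse.
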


The lower bound in \eqref{eq:ADMM_GD_Square}
provides a proof of conjecture \eqref{eq:conjecture}, proposed in
\cite{FrancaBentoMarkov}.
Notice that the upper bound in \eqref{eq:ADMM_GD_Square}
implies that
ADMM cannot improve much more than this square root factor. 
However, this upper bound 
becomes more loose for very irregular graphs, which have $\Delta \gg 1$,
compared
to regular graphs, which have $\Delta = 1$. 
Moreover, as briefly mentioned before, since 
Theorem~\ref{th:ADMM_convergence_rate} provides an upper bound on 
$\tau_A^\star$ regardless of the topology of $\G$,
the lower bound in 
\eqref{eq:ADMM_GD_Square} still remains valid for \emph{any} graph.
Numerical results illustrating the lower bound in
\eqref{eq:ADMM_GD_Square}
were already provided in \cite{FrancaBentoMarkov}.

\section{Final Remarks}
\label{sec:conclusion}

We provided a thorough analysis of 
distributed over-relaxed ADMM when solving
the non-strongly-convex consensus
problem \eqref{eq:gen_quad} in terms
of spectral properties of the underlying graph $\G$;
see Theorem~\ref{th:omega_w}.
The \emph{exact} asymptotic convergence rate of ADMM depends
on the second largest eigenvalue of the transition matrix of $\G$.
This result directly relates distributed ADMM to a Markov chain.
We also provided
explicit formulas for optimal parameter selection; see
Theorem~\ref{th:ADMM_convergence_rate} and
Theorem~\ref{th:ADMM_convergence_rate2}.
Comparing the optimal convergence
rates of distributed ADMM and GD, we were able to prove 
a recent conjecture based on a close analogy with 
lifted Markov chains~\cite{FrancaBentoMarkov}.
We showed that, for problem \eqref{eq:gen_quad} over 
\emph{any} graph $\G$, when both algorithms are
optimally tuned, distributed ADMM always provides
a speedup given by a square root factor compared to GD;
see Theorem~\ref{th:main_theorem_about_TA_and_TG}.

We believe that our results and methods may shed a new light into
distributed optimization, in particular for ADMM. For instance, it provides the
first steps towards a better understanding of how distributed ADMM behaves
when splitting a decomposable objective function.
It would be certainly desirable, and interesting,
to extend our analysis to more general
settings. We hope the results presented here motivate future
research in this direction.

\bigskip

\begin{acknowledgments}
We thank Peter Kravchuk for a brief discussion.
This work is supported by NIH/1U01AI124302 grant.
\end{acknowledgments}

\newpage
\appendix

\section{Proof of Lemma~\ref{th:simplified_TA}}
\label{sec:lemma2}

We repeat, then prove, 
the statement of Lemma~\ref{th:simplified_TA} in the main
text  below.
\begin{lemma}
\label{thA:simplified_TA}
The matrix $T_A$ defined in \eqref{eq:TA} can be written as
\begin{equation} 
\label{eqA:Ta_new}
T_A =  \left( 1 - \dfrac{ \gamma}{2}   \right) I 
+ \dfrac{\gamma}{\rho+2} U \qquad \mbox{where } \qquad
U = \OO + \dfrac{\rho}{2} \tB,
\end{equation}
with $\tB =\tB^\top = 2B - I$, $\OO = \tB \RR$, and $R = R^\top = I - Q$.
In particular, $\OO$ is orthogonal, i.e.
$\OO^\top \OO = \OO \, \OO^\top = I$,
and the other symmetric matrices satisfy $\tB^2 = I$ and $R^2 = I$.
\end{lemma}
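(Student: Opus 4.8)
The plan is to reduce everything to two structural facts: $Q^2 = 2Q$, which holds because each diagonal block $Q_a = \left(\begin{smallmatrix} 1 & -1 \\ -1 & 1\end{smallmatrix}\right)$ satisfies $Q_a^2 = 2Q_a$, and $B^\top = B$, $B^2 = B$, which are already recorded after \eqref{eq:AB}. Both $A$ and $B$ are well defined: $I + \rho^{-1}Q$ is invertible since $\rho>0$ and $Q$ is positive semidefinite, and $S^\top S$ is diagonal with strictly positive entries (the edge-degrees of the connected graph $\G$). First I would compute $A$ in closed form. Writing $A = aI + bQ$ and imposing $(I+\rho^{-1}Q)A = I$, the identity $Q^2 = 2Q$ forces $a=1$ and $b = -1/(\rho+2)$, so $A = I - \tfrac{1}{\rho+2}Q$; since $R = I-Q$ this is the same as
\[
A = \dfrac{\rho+1}{\rho+2}\,I + \dfrac{1}{\rho+2}\,R .
\]

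Next I would substitute this expression for $A$ into $T_A = I - \gamma(A + B - 2BA)$ from \eqref{eq:TA} and just distribute: collecting the coefficients of $I$, $R$, $B$, and $BR$ gives
\[
A + B - 2BA = \dfrac{1}{\rho+2}\big[(\rho+1)I + R - \rho B - 2BR\big],
\qquad\text{hence}\qquad
T_A = I - \dfrac{\gamma}{\rho+2}\big[(\rho+1)I + R - \rho B - 2BR\big].
\]
Then I would expand the claimed right-hand side of \eqref{eqA:Ta_new}. With $\tB = 2B-I$, one has $U = \OO + \tfrac{\rho}{2}\tB = \tB R + \tfrac{\rho}{2}\tB = \tB\big(R + \tfrac{\rho}{2}I\big) = 2BR + \rho B - R - \tfrac{\rho}{2}I$, and absorbing $-\tfrac{\gamma}{2}I = -\tfrac{\gamma}{\rho+2}\cdot\tfrac{\rho+2}{2}I$ into the bracket yields
\[
\Big(1 - \dfrac{\gamma}{2}\Big)I + \dfrac{\gamma}{\rho+2}\,U
= I - \dfrac{\gamma}{\rho+2}\big[(\rho+1)I + R - \rho B - 2BR\big],
\]
which is exactly the expression for $T_A$ obtained above. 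This proves \eqref{eqA:Ta_new}.

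Finally I would verify the stated algebraic properties. Symmetry of $\tB$ and of $R$ is immediate from $B^\top = B$ and $Q^\top = Q$. From $B^2 = B$ we get $\tB^2 = 4B^2 - 4B + I = I$, and from $Q^2 = 2Q$ we get $R^2 = I - 2Q + Q^2 = I$. Consequently $\OO^\top\OO = R\,\tB^2\,R = R^2 = I$ and $\OO\,\OO^\top = \tB\,R^2\,\tB = \tB^2 = I$, so $\OO$ is orthogonal.

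I do not expect any real obstacle here: the entire argument is bookkeeping of scalar coefficients once the closed form $A = I - \tfrac{1}{\rho+2}Q$ is in hand. The only points needing a line of justification are the invertibility of $I+\rho^{-1}Q$ and of $S^\top S$ (handled above), and keeping the fractions $\tfrac{\rho+1}{\rho+2}$, $\tfrac{1}{\rho+2}$, $\tfrac{\rho}{\rho+2}$ straight when matching the two forms — the place where a sign or a factor of two is easiest to drop.
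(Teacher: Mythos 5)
Your proposal is correct and follows essentially the same route as the paper: derive the closed form $A = I - \tfrac{1}{\rho+2}Q = \tfrac{\rho+1}{\rho+2}I + \tfrac{1}{\rho+2}R$ using the block structure of $Q$, substitute into $T_A = I - \gamma(A+B-2BA)$, and match coefficients of $I$, $R$, $B$, $BR$ against the claimed form. The only difference is that you carry out the bookkeeping (and the verification of $\tB^2 = R^2 = I$ and orthogonality of $\OO$) explicitly, where the paper merely asserts that the terms "can be reorganized"; all of your intermediate identities check out.
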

\begin{proof}
Due to the block diagonal structure of $Q$, the matrix $A$ in \eqref{eq:AB}
can be written as
\begin{equation}
A =  I - \dfrac{1}{\rho + 2} Q.
\end{equation}
Write $Q = I - R$, where $R$ is block diagonal with each
block in the form
$R_a = \left( \begin{smallmatrix} 0 & 1 \\ 1 & 0  \end{smallmatrix}\right)$
for $a \in \bF$.
We therefore have 
\begin{equation}
A = \dfrac{\rho + 1}{\rho+2} \, I + \dfrac{1}{\rho + 2} \, R.
\end{equation}
Replacing this expression 
into  \eqref{eq:TA} it is possible to reorganize the terms
in the form \eqref{eq:Ta_new}.
\end{proof}

\section{Proof of Theorem~\ref{th:omega_w}}
\label{sec:ADMM_W}

We first present several intermediate results that will be necessary
to establish Theorem~\ref{th:omega_w} from the main text. 

\begin{lemma}
\label{th:useful}
Let $U$ be a nonsingular matrix, and let $V \equiv \tfrac{1}{2}
\left(U + \eta \, U^{-1}\right)$ for some constant $\eta$.
If $v$ is an eigenvalue of $V$, then
$U$ has at least one of the following eigenvalues:
\begin{equation}
\label{eq:eigen_upm}
u^{\pm}  = v \pm i \sqrt{\eta - v^2}.
\end{equation}
Conversely, every eigenvalue of $U$ has the form
\eqref{eq:eigen_upm} for either 
$u^+$, $u^-$ or both,
for some eigenvalue $v$ of $V$.
\end{lemma}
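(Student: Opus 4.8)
\textbf{Proof proposal for Lemma~\ref{th:useful}.}

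The plan is to exploit the fact that $U$ and $V=\tfrac12(U+\eta U^{-1})$ are simultaneously triangularizable, so that an eigenpair of $V$ forces a quadratic constraint on the corresponding eigenvalue of $U$. First I would argue that $U$ and $U^{-1}$ commute with each other, hence $V$ is a polynomial (more precisely a Laurent polynomial) in $U$; therefore $U$ and $V$ share a common basis in which both are upper triangular (put $U$ in Schur/Jordan form over $\mathbb{C}$ and note $V$ inherits that form). The eigenvalues of $V$ are then exactly the numbers $\tfrac12\left(u+\eta u^{-1}\right)$ as $u$ ranges over the eigenvalues of $U$, counted along the diagonal.

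Next, fix an eigenvalue $v$ of $V$. By the previous paragraph there is at least one eigenvalue $u$ of $U$ with $v=\tfrac12\left(u+\eta u^{-1}\right)$, i.e. $u^2 - 2vu + \eta = 0$. Solving this quadratic gives $u = v \pm \sqrt{v^2-\eta} = v \pm i\sqrt{\eta-v^2}$, which is precisely \eqref{eq:eigen_upm}; so $U$ must have $u^+$ or $u^-$ as an eigenvalue. Conversely, given any eigenvalue $u$ of $U$, set $v=\tfrac12\left(u+\eta u^{-1}\right)$, which is an eigenvalue of $V$ by the triangularization argument; then $u$ satisfies the same quadratic $u^2-2vu+\eta=0$, so $u\in\{u^+,u^-\}$ for this $v$. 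This establishes the "conversely" direction.

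The only genuinely delicate point is the claim that \emph{every} eigenvalue $v$ of $V$ arises from an eigenvalue of $U$ (and not, say, as some spurious combination) — this is where the simultaneous triangularization is essential rather than cosmetic, since over $\mathbb{C}$ a nonsingular $U$ always admits a Schur decomposition $U = P T P^{-1}$ with $T$ upper triangular, whence $V = P\,\tfrac12(T+\eta T^{-1})P^{-1}$ with $\tfrac12(T+\eta T^{-1})$ again upper triangular and diagonal entries $\tfrac12(t_{ii}+\eta t_{ii}^{-1})$. I expect this structural step to be the main obstacle to state cleanly; once it is in place, the rest is just solving a quadratic. A minor subtlety worth a remark: when $v^2=\eta$ the two roots $u^\pm$ coincide, and when $u^+ \ne u^-$ it can happen that only one of them is actually an eigenvalue of $U$ — which is exactly why the statement says "either $u^+$, $u^-$, or both."
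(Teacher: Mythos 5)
Your proof is correct, and the converse direction coincides with the paper's (apply $V$ to an eigenvector of $U$, obtain $v=\tfrac{1}{2}(u+\eta u^{-1})$, and note that $u$ then solves $u^2-2vu+\eta=0$). The forward direction, however, takes a genuinely different route. The paper uses a determinant factorization: from $V-vI=\tfrac{1}{2}U^{-1}(U-u^+I)(U-u^-I)$ one gets $\det(V-vI)=\det\big(\tfrac{1}{2}U^{-1}\big)\det(U-u^+I)\det(U-u^-I)$, and since $\det U^{-1}\neq 0$, the vanishing of the left side forces one of the two remaining factors to vanish. You instead Schur-triangularize $U=PTP^{-1}$ and read off that $\tfrac{1}{2}(T+\eta T^{-1})$ is upper triangular with diagonal entries $\tfrac{1}{2}(t_{ii}+\eta t_{ii}^{-1})$; this proves the stronger spectral-mapping statement that the eigenvalues of $V$, counted with multiplicity, are exactly the images of those of $U$ under $u\mapsto\tfrac{1}{2}(u+\eta u^{-1})$. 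Both arguments reduce to the same quadratic; yours uses slightly heavier machinery but delivers the multiplicity bookkeeping for free, whereas the paper's identity needs only multiplicativity of the determinant. Note that even your stronger statement does not decide \emph{which} of $u^{+},u^{-}$ actually occurs for a given $v$ --- that is settled separately in the paper (Corollary~\ref{th:pairs}) by a perturbation-in-$\rho$ argument --- and your closing caveat that possibly only one root is an eigenvalue is precisely the hedge built into the lemma's statement.
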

\begin{proof}
We have
$\det\left( V - v I\right) = 0$ if and only if $v$ is an eigenvalue of 
$V$. From the definition
of $V$ we can this write as
\begin{equation}
\det\left (\tfrac{1}{2} U^{-1} \right) 
\det\left( U - u^+ I \right)
\det\left( U - u^- I \right) = 0 .
\end{equation}
Since $\det U^{-1} \ne 0$ by assumption, at least one of the other
determinants must vanish, showing that either $u^+$ or $u^-$ (or both) are
eigenvalues of $U$.

For the second part, 
consider the eigenvalue equation 
$U \bm{u} = u \, \bm{u}$.
It follows that
$V \bm{u}  = (U + \eta U^{-1}) \bm{u} = \left(
\tfrac{1}{2}\left( u + \eta \, u^{-1} \right)\right) \bm{u}$, thus
for every eigenvalue $u$ we have that
$v = \tfrac{1}{2}\left( u + \eta \, u^{-1} \right)$ is an eigenvalue of $V$, 
or equivalently, 
$u$ satisfy the quadratic equation $u^2 - 2 v  u + \eta = 0$
for some eigenvalue $v$ of $V$. The roots of this equation are 
given by \eqref{eq:eigen_upm}, thus $u$ must be
equal to at least one of these roots. 
\end{proof}

\begin{lemma}
\label{th:inverse_of_U_and_relation_to_V}
Let
\begin{equation}
\label{eq:U1}
U = \Omega + \dfrac{\rho}{2} \widetilde{B},
\end{equation}
where $\Omega \equiv \widetilde{B} R$ is orthogonal, $\tB \equiv 2B - I$,
and the symmetric
operators 
$\widetilde{B}$ and $\widetilde{R}$ both satisfy 
$\widetilde{B}^2 = I$ and $R^2=I$.
The  inverse of $U$ is given by
\begin{equation}
U^{-1} = \bigg( 1-\dfrac{\rho^2}{4} \bigg)^{-1}
\bigg( \Omega^\top
- \dfrac{\rho}{2} \widetilde{B} \bigg).
\end{equation}
We also have the following relation for the symmetric part of $\Omega$:
\begin{equation}
\label{eq:V2} 
\Omega_S \equiv \dfrac{\Omega + \Omega^\top}{2}  = 
\dfrac{U + \eta \, U^{-1}}{2} \qquad \mbox{with}
\qquad \eta = 1 - \dfrac{\rho^2}{4}.
\end{equation}
\end{lemma}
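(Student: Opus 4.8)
The plan is to verify both identities by direct algebraic manipulation, exploiting the three structural facts $\tB^2 = I$, $R^2 = I$, and $\OO = \tB R$ (hence $\OO^\top = R^\top \tB^\top = R\tB$, using symmetry of $R$ and $\tB$). First I would establish the inverse formula. Since $\OO$ is orthogonal, $\OO^{-1} = \OO^\top = R\tB$. The natural guess is to look for $U^{-1}$ as a linear combination $a\OO^\top + b\tB$ and solve for $a,b$; equivalently, I would just multiply $U = \OO + \tfrac{\rho}{2}\tB$ on the right (or left) by the claimed expression $\bigl(1 - \tfrac{\rho^2}{4}\bigr)^{-1}\bigl(\OO^\top - \tfrac{\rho}{2}\tB\bigr)$ and check the product is $I$. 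Expanding,
\begin{equation}
\Bigl(\OO + \tfrac{\rho}{2}\tB\Bigr)\Bigl(\OO^\top - \tfrac{\rho}{2}\tB\Bigr)
= \OO\OO^\top - \tfrac{\rho}{2}\OO\tB + \tfrac{\rho}{2}\tB\OO^\top - \tfrac{\rho^2}{4}\tB^2.
\end{equation}
Here $\OO\OO^\top = I$ and $\tB^2 = I$ give $I - \tfrac{\rho^2}{4}I$, so it remains to show the cross terms cancel, i.e. $\OO\tB = \tB\OO^\top$. Using $\OO = \tB R$ we get $\OO\tB = \tB R \tB$, and $\tB\OO^\top = \tB R\tB$ as well (since $\OO^\top = R\tB$), so they are equal and the cross terms vanish. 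This gives the inverse formula; the left-inverse check is identical by symmetry.

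Next I would prove the relation \eqref{eq:V2}. Compute the symmetric part of $\OO$ directly: $\OO_S = \tfrac{1}{2}(\OO + \OO^\top) = \tfrac{1}{2}(\tB R + R\tB)$. On the other side, using the inverse formula just obtained,
\begin{equation}
\frac{U + \eta U^{-1}}{2} = \frac{1}{2}\left[\OO + \tfrac{\rho}{2}\tB + \Bigl(1 - \tfrac{\rho^2}{4}\Bigr)\Bigl(1 - \tfrac{\rho^2}{4}\Bigr)^{-1}\Bigl(\OO^\top - \tfrac{\rho}{2}\tB\Bigr)\right] = \frac{1}{2}\bigl(\OO + \OO^\top\bigr),
\end{equation}
where the $\tfrac{\rho}{2}\tB$ terms cancel exactly because $\eta = 1 - \tfrac{\rho^2}{4}$ is precisely the factor that undoes the normalization in $U^{-1}$. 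This is $\OO_S$, completing the proof.

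I do not anticipate a genuine obstacle here: the lemma is essentially a bookkeeping exercise once one notices the two algebraic coincidences — $\OO\tB = \tB\OO^\top$ (which makes the cross terms in $UU^{-1}$ cancel) and the cancellation of the $\tB$ contributions in $U + \eta U^{-1}$ forced by the specific value of $\eta$. The only point requiring slight care is keeping track of transposes: one must use that $R$ and $\tB$ are symmetric, so $\OO^\top = (\tB R)^\top = R^\top\tB^\top = R\tB$, and verify that $U$ is genuinely invertible, which follows since $\det U = \det\OO \cdot \det(I + \tfrac{\rho}{2}\OO^{-1}\tB)$ and the factor $1 - \tfrac{\rho^2}{4}$ appearing in the inverse is nonzero for $\rho \in (0,2)$. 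Once these are in place, the two displayed identities drop out immediately.
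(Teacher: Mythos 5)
Your proof is correct and takes the same route as the paper, which simply states that the lemma ``can be checked by direct substitution''; you have carried out that substitution explicitly, with the key cancellation $\OO\tB = \tB\OO^\top = \tB R \tB$ correctly identified. The only caveat worth recording is that the inverse formula requires $\rho \neq 2$, which you note.
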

\begin{proof}
This can be checked by direct substitution.
\end{proof}

\begin{lemma}
\label{th:range_omega_s}
The eigenvalues of $\Omega_S$ are in the range
$ [-1, 1]$.
\end{lemma}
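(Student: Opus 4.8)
The plan is to exploit that $\Omega$ is a (real) orthogonal matrix, so its operator $2$-norm equals $1$, and then deduce the bound on the eigenvalues of its symmetric part from a Rayleigh-quotient estimate. Concretely, since $\Omega = \tB R$ with $\tB = \tB^\top$, $R = R^\top$, $\tB^2 = I$, and $R^2 = I$ (all given in the statement and already used to write $U$), we have $\Omega^\top \Omega = R^\top \tB^\top \tB R = R\,\tB^2 R = R^2 = I$, so $\Omega$ is orthogonal and $\|\Omega\|_2 = 1$.

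Next, observe that $\Omega_S = \tfrac12(\Omega + \Omega^\top)$ is symmetric, hence diagonalizable with real eigenvalues. Let $\mu$ be any eigenvalue of $\Omega_S$ and let $\bm{v}$ be a corresponding unit eigenvector, $\|\bm{v}\| = 1$. Then
\begin{equation}
\mu = \bm{v}^\top \Omega_S\, \bm{v}
= \tfrac12\big(\bm{v}^\top \Omega\, \bm{v} + \bm{v}^\top \Omega^\top \bm{v}\big)
= \bm{v}^\top \Omega\, \bm{v},
\end{equation}
where the last equality uses that $\bm{v}^\top \Omega^\top \bm{v} = (\bm{v}^\top \Omega\, \bm{v})^\top = \bm{v}^\top \Omega\, \bm{v}$ since it is a scalar.

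Finally, by Cauchy--Schwarz and orthogonality of $\Omega$,
\begin{equation}
|\mu| = |\bm{v}^\top \Omega\, \bm{v}| \le \|\bm{v}\|\,\|\Omega\, \bm{v}\| = \|\bm{v}\|^2 = 1,
\end{equation}
so $\mu \in [-1,1]$. Since $\mu$ was an arbitrary eigenvalue of $\Omega_S$, this proves the claim.

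\textbf{Main obstacle.} There is essentially no obstacle here: the argument is a two-line norm estimate once orthogonality of $\Omega$ is recorded, and that fact is already part of the setup in Lemma~\ref{th:inverse_of_U_and_relation_to_V}. The only mild care needed is to note that the interval is closed (e.g.\ $\Omega = I$ attains $\mu = 1$), which is automatic from the bound $|\mu|\le 1$. This lemma is then exactly what is needed to feed into Lemma~\ref{th:useful} with $V = \Omega_S$ and $\eta = 1 - \rho^2/4$: knowing $v \in [-1,1]$ controls the sign of $\eta - v^2$ and hence whether the eigenvalues $u^\pm$ of $U$ are genuinely complex conjugates on a circle or collapse onto the real axis, which is the behaviour described after Theorem~\ref{th:omega_w}.
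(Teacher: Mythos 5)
Your proof is correct. It takes a slightly different route from the paper's: the paper notes that the eigenvalues of the orthogonal matrix $\Omega$ lie on the unit circle, $\lambda(\Omega)=e^{i\theta}$, and then reads off $\lambda(\Omega_S)=\cos\theta\in[-1,1]$ from $\Omega_S=\tfrac12(\Omega+\Omega^{-1})$ --- an argument that implicitly uses the fact that $\Omega$, being normal, is unitarily diagonalizable so that $\Omega$ and $\Omega^\top$ act on a common eigenbasis with conjugate eigenvalues. Your Rayleigh-quotient argument, $\mu=\bm{v}^\top\Omega_S\bm{v}=\bm{v}^\top\Omega\bm{v}$ followed by Cauchy--Schwarz and $\|\Omega\bm{v}\|=\|\bm{v}\|$, reaches the same conclusion without invoking the spectral structure of $\Omega$ at all, so it is marginally more elementary and self-contained; the paper's version has the small advantage of exhibiting the eigenvalues of $\Omega_S$ explicitly as $\cos\theta$, which matches the geometric picture (eigenvalues of $U$ on a circle) developed after Theorem~\ref{th:omega_w}. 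Your closing remark about attainment of the endpoint $\mu=1$ is not needed for the statement, which only asserts containment in $[-1,1]$, but it is harmless.
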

\begin{proof}
This follows trivially from \eqref{eq:V2} 
and orthogonality of $\Omega$. 
The eigenvalues of $\Omega$
have the form $\lambda(\Omega) = e^{i \theta}$ for 
$\theta \in (-\pi, \pi]$. Since $\Omega^\top = \Omega^{-1}$, we have
$\lambda(\Omega_S) = \cos\theta \in
[-1, 1]$. 
\end{proof}

From  Lemma~\ref{th:useful} and Lemma~\ref{th:inverse_of_U_and_relation_to_V}
we immediately know that all eigenvalues
of \eqref{eq:U1} have the form \eqref{eq:eigen_upm} with 
$v \to \lambda(\Omega_S)$ and $\eta \to 1-\rho^2/4$,
for either $u^+$ or $u^-$.
Now if we exclude the extremes of the interval where $\lambda(\Omega_S)$ lie,
according to Lemma~\ref{th:range_omega_s},
we have a stronger version of this result.

\begin{corollary}
\label{th:pairs}
If $w_S \in (-1, 1)$ is an eigenvalue of $\Omega_S$, then
the operator \eqref{eq:U1}
has a pair of eigenvalues given by
\begin{equation}
\label{eq:eigen_u1}
u^\pm = w_S \pm i \sqrt{1-\dfrac{\rho^2}{4} - {(w_S)}^2}.
\end{equation}
\end{corollary}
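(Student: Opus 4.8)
The plan is to diagonalize the situation geometrically: restrict attention to the eigenspace $W$ of the symmetric operator $\Omega_S$ corresponding to the given eigenvalue $w_S$, and show that $W$ breaks up into an orthogonal direct sum of two-dimensional subspaces that are invariant under $U$, on each of which $U$ is a $2\times 2$ matrix whose two eigenvalues are exactly $u^+$ and $u^-$. The key structural input is the dihedral-type identity $\widetilde{B}\,\Omega\,\widetilde{B} = \Omega^{-1}$, which is immediate from $\Omega = \widetilde{B}R$ together with $\widetilde{B}^2 = R^2 = I$; it implies in particular that $\widetilde{B}$ commutes with $\Omega_S = \tfrac12(\Omega+\Omega^{\top}) = \tfrac12(\Omega+\Omega^{-1})$, so that $W$ is $\widetilde{B}$-invariant as well as $\Omega$-invariant.

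First I would recall (as in the proof of Lemma~\ref{th:range_omega_s}) that the real orthogonal matrix $\Omega$ decomposes $\mathbb{R}^{|\bE|}$ as an orthogonal direct sum of $\ker(\Omega-I)$ and $\ker(\Omega+I)$, on which $\Omega_S$ acts as $+I$ and $-I$ respectively, together with two-dimensional rotation planes on which $\Omega$ acts by some angle $\theta\in(0,\pi)$ and $\Omega_S=(\cos\theta)I$. Consequently, since $w_S \in (-1,1)$, the eigenspace $W$ is the span of exactly those rotation planes with $\cos\theta = w_S$ (a single value of $\theta$, by injectivity of $\cos$ on $(0,\pi)$), and $\dim W =: m$ is even. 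On $W$ we have $\Omega = (\cos\theta)I + \tfrac12 J$ with $J := \Omega - \Omega^{-1}$, which is skew-symmetric, invertible (as $\sin\theta \neq 0$), and satisfies $J^2 = -4\sin^2\theta\,I$; moreover $\widetilde{B}$ anticommutes with $J$. Starting from any unit vector $v \in \ker(\widetilde{B}|_W - I)$ and putting $P_v := \vspan\{v, Jv\}$, one checks that $\widetilde{B}v = v$, $\widetilde{B}(Jv) = -Jv$, $v \perp Jv$, and $\Omega P_v = P_v$; hence $P_v$ is two-dimensional and invariant under both $\Omega$ and $\widetilde{B}$. Passing to $P_v^{\perp} \cap W$ (still $\Omega$- and $\widetilde{B}$-invariant, since both operators are orthogonal) and iterating decomposes $W$ into $m/2$ such planes.

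On any one of these planes $P$, pick an orthonormal basis in which $\widetilde{B}|_P = \diag(1,-1)$; then $\Omega|_P$ is a genuine planar rotation $R(\pm\theta)$ — it cannot be $\pm I$, else $(\Omega|_P)^2 = I$ would force $\theta \in \{0,\pi\}$ — so $U|_P = \Omega|_P + \tfrac{\rho}{2}\diag(1,-1)$ has $\Tr(U|_P) = 2\cos\theta = 2w_S$ and $\det(U|_P) = 1 - \tfrac{\rho^2}{4}$. Its characteristic polynomial is therefore $\lambda^2 - 2w_S\lambda + (1-\rho^2/4)$, whose roots are precisely $u^{\pm} = w_S \pm i\sqrt{1-\rho^2/4 - w_S^2}$. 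Thus both $u^+$ and $u^-$ occur as eigenvalues of $U$, each with multiplicity $m/2 \geq 1$ (when $1-\rho^2/4 = w_S^2$ the two coincide and the asserted pair degenerates to the single eigenvalue $w_S$). I expect the only real subtlety to be the splitting of $W$ into $\widetilde{B}$-invariant planes, equivalently that $\widetilde{B}|_W$ has $\pm 1$-eigenspaces of equal dimension; this is forced because $\widetilde{B}|_W$ anticommutes with the invertible skew form $J$, which sets up an isomorphism between those two eigenspaces. A lighter alternative handles the complex regime only: whenever $\rho$ is small enough that $1-\rho^2/4 > w_S^2$, the values $u^{\pm}$ are a nonreal conjugate pair, and then merely because $U$ is a real matrix, the appearance of one (guaranteed by Lemma~\ref{th:useful} and Lemma~\ref{th:inverse_of_U_and_relation_to_V}) forces the appearance of the other.
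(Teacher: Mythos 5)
Your proof is correct, and it takes a genuinely different route from the paper's. The paper's own argument is short and analytic: Lemma~\ref{th:useful} together with Lemma~\ref{th:inverse_of_U_and_relation_to_V} already gives that at least one of $u^{\pm}$ is an eigenvalue of $U$; for $\rho$ small the pair is non-real, so realness of $U$ forces both conjugates to appear; and since $\det(uI-U(\rho))$ is a polynomial in $(u,\rho)$ divisible by $u^2-2w_S u+(1-\rho^2/4)$ for all small $\rho$, that quadratic factor persists for every $\rho$. This is essentially the ``lighter alternative'' you sketch in your last sentence, completed by a polynomial-identity continuation step that you do not carry out. Your main argument is instead structural: the dihedral relation $\tB\,\OO\,\tB=\OO^{-1}$ makes the $w_S$-eigenspace of $\OO_S$ invariant under both $\OO$ and $\tB$, and the invertible skew operator $J=\OO-\OO^{-1}$, which anticommutes with $\tB$, forces that eigenspace to split into two-dimensional $U$-invariant planes on which $\Tr(U)=2w_S$ and $\det(U)=1-\rho^2/4$, so both roots $u^{\pm}$ are read off at once. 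The checks you flag as subtle (equality of the $\pm1$-eigenspace dimensions of $\tB$ on $W$, and that $\OO$ restricted to each plane is a rotation rather than a reflection --- the latter also follows because a reflection is symmetric and would force $\OO_S|_P$ to have eigenvalues $\pm1$) all go through. Your route is longer but buys more: it works uniformly in $\rho$ with no continuity or factorization argument, and it yields the multiplicity of each of $u^{\pm}$ as $\tfrac12\dim\ker(\OO_S-w_S I)$, which the paper's proof does not give. The only caveat, which you already note, is that the ``pair'' degenerates to a single real eigenvalue when $\rho^2/4+w_S^2=1$, consistent with the paper's reading of \eqref{eq:eigen_u1}.
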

\begin{proof}
Lemma~\ref{th:useful} already implies that $U$ have
eigenvalues \eqref{eq:eigen_u1} for at least one of the choices
$u^\pm$. It remains to show that both occur if $w_S \in  (-1,1)$.
First, consider the case where $\rho = 0$. We have $u^\pm = w_S \pm i
\sqrt{1-(w_S)^2}$, and since $|w_S| < 1$, both $u^{\pm}$ are a complex
conjugate pair. Since $U$ is real, its complex eigenvalues 
always occur in conjugate pairs, thus
both $u^\pm$ are eigenvalues of $U$. 

For small enough $\rho > 0$ the eigenvalues $u^\pm$ in \eqref{eq:eigen_u1}
are also complex, so both must be eigenvalues of $U$.
Therefore, for small enough $\rho$, the characteristic
polynomial of $U$ has a factor of the form
\begin{equation}
\label{eq:char_factor}
\det(U - u I) \sim (u - u^+)(u - u^-) = u^2 - 2 w_S u - (1-\rho^2/4).
\end{equation}
Now $\det(U - u I)$ is a polynomial in both $u$ and $\rho$, and  since
a polynomial is uniquely determined by its coefficients, the
same factors in \eqref{eq:char_factor} will be present in the characteristic
polynomial of $U$ for
any $\rho$, implying that
both $u^{\pm}$ are eigenvalues of $U$ for any $\rho > 0$.
\end{proof}

We will show that, 
if we restrict ourselves to the 
interval $(-1,1]$, the eigenvalues $w_S$ of $\Omega_S$
are the same as
the eigenvalues of the transition matrix $\W$ of the original graph 
$\G$. This establishes
the connection with the graph topology.
However, we first need several intermediate results.
We recall that $B$ and $R$ are defined by
\begin{equation}
\label{eq:BR}
B = S(S^\top S)^{-1} S^\top, \qquad 
R = \begin{pmatrix} 
\ddots &     & \\
       & R_a & \\
       &     & \ddots
\end{pmatrix},
\end{equation}
where $S$ is a row stochastic matrix defined in \eqref{eq:Sdef}, and
the blocks of $R$ 
have the form $R_a = \left(\begin{smallmatrix} 0 & 1 \\ 1 & 0
\end{smallmatrix}\right)$. Also, $S^\top S = \D$ is the degree matrix
of $\G$. Moreover, $B^2 = B$ and $R^2 = I$.

\begin{lemma}
\label{th:spec_BR_spec_W}
If $\omega$ is an eigenvalue of the transition matrix $\W$,
then  $\omega$ is also an eigenvalue of the operator $BR$.
Conversely, if $\omega \neq 0$ is an eigenvalue of $BR$,
then  $\omega$ is also an eigenvalue of $\W$.
\end{lemma}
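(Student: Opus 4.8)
The plan is to relate the operator $BR$ to the transition matrix $\W$ via the structure of $S$ and $R$. Recall $B = S(S^\top S)^{-1}S^\top = S\D^{-1}S^\top$, so $BR = S\D^{-1}S^\top R$. First I would compute $S^\top R$ explicitly in terms of the graph structure. Each row of $S$ (indexed by an edge $e = (a,b)$) has a single $1$ in column $b$. The block $R_a$ swaps the two entries $x_{ab_1}, x_{ab_2}$ associated to function node $a$ (which corresponds to an edge $\{b_1, b_2\}$ of $\G$). So the action of $R$ on the edge-indexed space sends the coordinate indexed by $(a,b_1)$ to the one indexed by $(a,b_2)$ and vice versa. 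Combining this with $S$ and $S^\top$, I expect to find that $S^\top R\, S = \AA$, the adjacency matrix of $\G$: the $(b_1,b_2)$ entry counts the function nodes $a$ adjacent to both $b_1$ and $b_2$ after the swap, which is exactly the number of edges between $b_1$ and $b_2$ in the simple graph $\G$, hence $1$ if $\{b_1,b_2\}\in\E$ and $0$ otherwise. This is the main computational step, and the key observation I expect to be slightly delicate is correctly tracking how the block-swap $R_a$ interacts with the incidence pattern of $S$.

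Granting $S^\top R\, S = \AA$, the rest follows by a standard ``eigenvalue transfer'' argument between $BR = S\D^{-1}S^\top R$ and $\W = \D^{-1}\AA = \D^{-1}S^\top R\, S$. These two matrices are of the form $S M$ and $M S$ with $M = \D^{-1}S^\top R$ (for the first identification) — more precisely $BR = S\,(\D^{-1}S^\top R)$ and $\W = (\D^{-1}S^\top R)\,S$, so $BR$ and $\W$ are $XY$ and $YX$ with $X = S$, $Y = \D^{-1}S^\top R$. It is a classical fact that $XY$ and $YX$ have the same nonzero eigenvalues with the same multiplicities. This immediately gives: if $\omega \ne 0$ is an eigenvalue of $\W = YX$, it is an eigenvalue of $BR = XY$; and if $\omega \ne 0$ is an eigenvalue of $BR = XY$ it is an eigenvalue of $\W = YX$.

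For the ``forward'' direction including $\omega = 0$, I would argue directly: if $\W\vv = \omega\vv$ for $\vv \in \mathbb{R}^{|\V|}$, set $\vy = S\vv \in \mathbb{R}^{|\bE|}$. Then $BR\,\vy = S\D^{-1}S^\top R\, S\vv = S\D^{-1}\AA\vv = S\W\vv = \omega\, S\vv = \omega\,\vy$. Since $S$ has full column rank (every variable node is incident to at least one edge, as $\G$ is connected with $|\V|\ge 2$), $\vy = S\vv \ne 0$ whenever $\vv \ne 0$, so $\omega$ is genuinely an eigenvalue of $BR$ with eigenvector $\vy$. This covers all eigenvalues of $\W$, including $\omega = 0$ if it occurs. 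The converse for $\omega \ne 0$: if $BR\,\vy = \omega\,\vy$ with $\omega \ne 0$ and $\vy \ne 0$, let $\vv = \D^{-1}S^\top R\,\vy$. Then $\W\vv = \D^{-1}S^\top R\, S\,\D^{-1}S^\top R\,\vy = \D^{-1}S^\top R\,(BR\,\vy) = \omega\,\D^{-1}S^\top R\,\vy = \omega\,\vv$, and $\vv \ne 0$ since otherwise $BR\,\vy = S\,\D^{-1}S^\top R\,\vy = S\D\vv = 0 = \omega\vy$ would force $\vy = 0$; here I use $S\D\vv = 0 \Rightarrow \vv = 0$ which again holds because $S$ has full column rank and $\D$ is invertible. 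Hence $\omega$ is an eigenvalue of $\W$.

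The main obstacle is the identity $S^\top R\, S = \AA$ and, relatedly, keeping the row/column conventions of the bipartite incidence matrix $S$ straight; once that is pinned down the eigenvalue-transfer step is routine linear algebra. I should also double-check the edge case that $\G$ being simple (no multi-edges) is what makes the off-diagonal entries of $S^\top R\, S$ equal to $0$ or $1$ rather than a larger multiplicity, and that this simplicity assumption is indeed in force from the problem setup in \eqref{eq:gen_quad}.
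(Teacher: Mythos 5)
Your proposal is correct and follows essentially the same route as the paper: both rest on the identity $\AA = S^\top R\, S$ (so $\W = \D^{-1}S^\top R\,S$ and $BR\,S = S\W$) together with the full column rank of $S$. The only difference is cosmetic — for the converse you invoke the standard fact that $XY$ and $YX$ share nonzero eigenvalues (equivalently, push a right eigenvector of $BR$ forward by $\D^{-1}S^\top R$), whereas the paper argues with left eigenvectors of $BR$ and the projection property of $B$; note also the harmless typo $S\D\vv$ where you mean $S\vv$.
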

\begin{proof}
The matrix $S$ defined in \eqref{eq:Sdef} has independent columns, 
therefore its left  pseudo-inverse is 
$S^+ = (S^\top S)^{-1}S^\top= \D^{-1} S^\top$, where $\D$ is the degree
matrix of $\G$. Note that
$B = SS^+$, and also that
the adjacency matrix of $\G$ is given by $\AA = S^\top R S$.
Hence
$\W \equiv \D^{-1} \AA = S^+ RS$, and we obtain the identity
\begin{equation}\label{eq:relation_W_BR}
BRS = S\W .
\end{equation}
Consider the eigenvalue equation 
$\W \bm{\omega} = \omega \,\bm{\omega}$, where $\bm{\omega} \neq \bm{0}$.
Acting with \eqref{eq:relation_W_BR} on $\bm{\omega}$ 
we have 
$BR(S\bm{\omega}) = S \W \bm{\omega} = \omega (S \bm{\omega})$. 
Since the columns of $S$
are independent we have that $S \bm{\omega} \neq \bm{0}$, therefore
$\omega$ is also an eigenvalue of $BR$. 

Consider the eigenvalue equation
$\bm{v}^\top (BR)  = b \, \bm{v}^\top$, 
where $b \ne 0$ and $\bm{v} \neq \bm{0}$.
Since $R=R^{-1}$ is invertible, $\vv^\top B 
= b \, \vv^\top R \neq \bm{0}$.
Now $B$ is a projection onto the columns of $S$, thus we also have
$\vv^\top S \neq \bm{0}^\top$. Multiplying \eqref{eq:relation_W_BR} 
by $\vv^\top$ on the left we conclude that
$\vv^\top BRS = b (\vv^\top S) = (\vv^\top S) \W$, and
$b$ is also an eigenvalue of $\W$.
\end{proof}

\begin{lemma}\label{th:spec_BR_spec_Omega_S}
We have that $\omega \notin \{-1,0,1\}$ is an 
eigenvalue of $BR$ if and only if it 
is an eigenvalue of $\Omega_S$.
\end{lemma}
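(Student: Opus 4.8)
The plan is to relate the eigenvalues of $BR$ to the symmetric matrix $\Omega_S = \tfrac12(\Omega + \Omega^\top)$, where $\Omega = \widetilde{B}R = (2B-I)R$. First I would compute $\Omega_S$ explicitly. Using $\widetilde{B} = 2B - I$, $R = R^\top = R^{-1}$, and $\widetilde{B}^\top = \widetilde{B}$, one gets $\Omega^\top = R\widetilde{B} = R(2B-I)$, so
\begin{equation}
\Omega_S = \frac{(2B-I)R + R(2B-I)}{2} = BR + RB - R .
\end{equation}
The key algebraic observation is then the following operator identity, which I would verify by direct computation using $B^2 = B$ and $R^2 = I$: writing $M \equiv BR$, one has $M$ restricted to the range of $B$ behaves symmetrically after conjugation. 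More precisely, since $B$ is the orthogonal projection onto $\range(S)$ and $BR$ maps $\range(B)$ into itself (because $B(BR) = BR$), the spectrum of $BR$ is the disjoint union of $\{0\}$ (from the kernel, i.e. $\range(B)^\perp$, on which $BR$ acts as $0$ up to the obvious block structure) together with the spectrum of the compression $B R B$ acting on $\range(B)$. The plan is to show $BRB$ (on $\range(B)$) and $\Omega_S$ have the same nonzero, non-$(\pm1)$ eigenvalues.

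The mechanism I expect to use is a similarity/compression argument. On $\range(B)$, for any eigenvector $\vv$ of $BR$ with $B\vv = \vv$ and $BR\vv = \omega\vv$, apply $\Omega_S = BR + RB - R$ to $\vv$: we get $\Omega_S \vv = \omega\vv + RB\vv - R\vv = \omega\vv + R\vv - R\vv = \omega\vv$ — wait, $RB\vv = R\vv$ only if $B\vv=\vv$, which holds; so $\Omega_S\vv = \omega \vv$ directly, showing every eigenvalue of $BR$ coming from an eigenvector in $\range(B)$ is an eigenvalue of $\Omega_S$. For the converse, given $\Omega_S \vw = \omega\vw$ with $\omega \notin\{-1,0,1\}$, I would decompose $\vw = B\vw + B^\perp\vw$ and use $\Omega_S = BR+RB-R$ together with $B R B^\perp$, $B^\perp R B$ cross-terms to argue that the $B^\perp$-component is forced to be compatible, ultimately extracting an honest eigenvector of $BR$ in $\range(B)$; the exclusion of $\pm 1$ is exactly what rules out the degenerate cases where $\Omega_S\vw$ lives partly in $\range(B)^\perp$ (note $\Omega = \widetilde{B}R$ is orthogonal with eigenvalues $e^{i\theta}$, and $\lambda(\Omega_S) = \cos\theta = \pm1$ corresponds to real eigenvectors of $\Omega$, the boundary case). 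Combining with Lemma~\ref{th:spec_BR_spec_W}, which already links $BR$ to $\W$ away from $0$, gives the claimed equivalence.

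The main obstacle I anticipate is the careful bookkeeping of the cross terms $BRB^\perp$ and $B^\perp R B$ in the converse direction: showing that an eigenvector of $\Omega_S$ with eigenvalue $\omega \notin\{-1,0,1\}$ can be "projected" onto $\range(B)$ without destroying the eigenvalue. The cleanest route is probably to observe that $\Omega_S$ commutes with neither $B$ nor $B^\perp$ in general, so instead I would pass through $\Omega$ itself: $\Omega \vw = e^{i\theta}\vw$ gives $\widetilde{B}R\vw = e^{i\theta}\vw$, hence $R\vw = e^{-i\theta}\widetilde{B}^{-1}\vw = e^{-i\theta}\widetilde{B}\vw$; writing $\vw = \vw_B + \vw_\perp$ for the $B$/$B^\perp$ split and using $\widetilde{B}\vw_B = \vw_B$, $\widetilde{B}\vw_\perp = -\vw_\perp$, one relates $R$ acting across the decomposition to a $2\times 2$ block system whose off-diagonal blocks are built from $S$; the condition $\cos\theta \ne \pm 1$ guarantees $\vw_B\ne 0$ and $\vw_\perp\ne 0$, and eliminating $\vw_\perp$ yields precisely $BRB\,\vw_B = \cos\theta\,\vw_B$ — hence an eigenvalue of $BR$, and by Lemma~\ref{th:spec_BR_spec_W} of $\W$. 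I would present it in this $\Omega$-centred form since it avoids the messy non-commuting projections and makes the role of the excluded values $\{-1,0,1\}$ transparent.
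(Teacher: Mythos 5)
Your proposal is correct, and its converse direction takes a genuinely different route from the paper's. The forward direction (an eigenvalue $\omega\neq 0$ of $BR$ is an eigenvalue of $\Omega_S$) is essentially the same in both: a $BR$-eigenvector with nonzero eigenvalue lies in $\range(B)$, whence $\Omega_S\vv=(BR+RB-R)\vv=BR\vv=\omega\vv$. For the converse, the paper stays entirely with real vectors: it writes $\Omega_S=BRB-B^\perp R B^\perp$, notes that an eigenvector of $\Omega_S$ has a nonzero component in $\range(B)$ or in $\range(B^\perp)$, and proves an intertwining fact --- $\vu\equiv B^\perp R\vv=(R-\omega I)\vv$ carries $\omega$-eigenvectors of $BRB$ to $\omega$-eigenvectors of $-B^\perp RB^\perp$ and is nonzero precisely because $\omega\neq\pm1$ --- to transfer the eigenvalue back to $BRB$, and then to $BR$. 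You instead exploit that $\Omega_S=\tfrac12(\Omega+\Omega^{-1})$ is a function of the orthogonal matrix $\Omega$, pass to a common complex eigenvector $\bm{w}$ with $\Omega\bm{w}=e^{i\theta}\bm{w}$, and eliminate the $B^\perp$-component. This does work: the relation is $R\bm{w}=e^{+i\theta}\tB\bm{w}$ (not $e^{-i\theta}$ as you wrote, a harmless slip since $e^{-i\theta}$ is also an eigenvalue of the real matrix $\Omega$), and combining it with the conjugate relation $R\tB\bm{w}=e^{-i\theta}\bm{w}$ gives $R(B\bm{w})=\cos\theta\,(B\bm{w})-i\sin\theta\,(B^\perp\bm{w})$, hence $BR(B\bm{w})=\cos\theta\,(B\bm{w})$ after applying $B$. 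The one step you must actually write out is the nonvanishing of $B\bm{w}$: if $B\bm{w}=0$ then $\tB\bm{w}=-\bm{w}$ and $R\bm{w}=-e^{i\theta}\bm{w}$, so $R^2=I$ forces $e^{2i\theta}=1$ and $\cos\theta=\pm1$; this is exactly where the exclusion of $\{-1,1\}$ enters. Your route buys a transparent explanation of the exceptional values (they are the $\sin\theta=0$ boundary of the spectral circle of $\Omega$), at the cost of invoking complex eigenvectors and the simultaneous diagonalizability of $\Omega$ and $\Omega_S$; the paper's route is more elementary but hides the eigenvalue transfer inside the two ``facts'' about $BRB$ and $-B^\perp RB^\perp$.
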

\begin{proof}
We claim, and later prove, the following two facts:
\begin{enumerate}
\item \label{fact_one} $\omega \neq 0$ is an eigenvalue of $BRB$
if and only if it is an eigenvalue of $BR$. 
\item \label{fact_two} $\omega \notin \{-1,0,1\}$ is an eigenvalue
of $BRB$ if and only if it is an eigenvalue of $-B^\perp R B^\perp$.
\end{enumerate}

We first prove that if 
$\omega$ is an eigenvalue of  $\Omega_S$, 
then $\omega$ is also an eigenvalue of $BR$.
From \eqref{eq:V2}, and recalling that $\tB = 2B - I$ and
$B + B^\perp = I$, 
we can write 
\begin{equation}
\label{eq:newOmegaS}
\begin{split}
\Omega_S &= RB - B^\perp R  \\
& = (B+B^\perp)RB - B^\perp R(B+B^\perp)  \\
& = BRB - B^\perp R B^\perp,
\end{split}
\end{equation}
where $B$ and $B^\perp$ are projectors onto orthogonal 
subspaces. From 
\eqref{eq:newOmegaS}
and using  the identity $\vv = B \vv + B^\perp \vv$, the eigenvalue
equation 
$\Omega_S \vv = \omega \vv$ (where $\vv \neq {\bf 0}$) is equivalent to 
\begin{equation}
\label{eq:brbperp}
BR (B \vv) = \omega (B \vv), \qquad
-B^\perp R (B^\perp \vv) = \omega (B^\perp \vv).
\end{equation}
Since $\vv \neq 0$, either $B  \vv \neq {\bf 0}$ or
$B^\perp  \vv \neq {\bf 0}$ (or both). Thus, if 
$\omega$ is an eigenvalue of 
$\Omega_S$, then $\omega$ is an eigenvalue of $BRB$ or 
an eigenvalue of $-B^\perp R B^\perp$ (or both). 
Assuming $\omega \notin \{-1,0,1\}$, by the fact \ref{fact_two} above
the operators $BRB$ and $-B^\perp R B^\perp$
have the same eigenvalues. Therefore, if 
$\omega$ is an eigenvalue of  $\Omega_S$, then it is also an eigenvalue 
of $BRB$, and by fact \ref{fact_one} it is also an eigenvalue of $BR$.

Now we prove the reverse.
If $\omega' \neq 0$ is an eigenvalue of $BR$, then
by fact \ref{fact_one} it is also an eigenvalue of $BRB$, i.e. 
$BRB  \vv' = \omega' \vv'$  
for some $\vv' \neq {\bf 0}$.
Acting on this equality with $B$ on both sides
we conclude that $B \vv' = \vv'$. 
Hence, using \eqref{eq:newOmegaS} and $B^\perp \vv' = {\bf 0}$ we obtain
$\Omega_S \vv' = BRB  \vv' = \omega' \vv'$, i.e.
$\omega'$ is an eigenvalue of $\Omega_S$.

The above two paragraphs proves the claim, now we finally
finally show that the above two facts hold.

\paragraph*{Proof of Fact \ref{fact_one}.} Let $\omega \neq 0$ be such that
$BRB \vv = \omega \vv$ for some $\vv \neq  0$.
Dividing this expression by $\omega$ we conclude 
that $\vv$ is in the range of $B$. 
Since $B$ is an orthogonal projection, $B\vv = \vv$.
The same argument holds if
$\omega \neq 0$ is an eigenvalue of $BR$.
Therefore, $BRB \vv = BR \vv= \omega \vv$, as claimed.

\paragraph*{Proof of Fact \ref{fact_two}.} 
We first argue that if $\omega \notin \{-1,0,1\}$ 
is an eigenvalue of $BRB$, then $\omega$ is
an eigenvalue of $B^\perp R B^\perp$. The argument for the other 
direction is the same with $B$ and $B^\perp$ switched.
Let $BRB \vv = \omega \vv$ for some $\vv \neq {\bf 0}$.
Since $\omega \neq 0$, we have that $B \vv = \vv$. 
Let $\vu  \equiv B^\perp  R \vv$.
We show that $\vu$ is an eigenvector of $-B^\perp R B^\perp $ 
with eigenvalue $\omega$.
We have $B^\perp R B^\perp  \vu  = B^\perp R B^\perp R \vv = 
B^\perp R (I - B) R \vv = B^\perp \vv - B^\perp R B R \vv = 
- B^\perp R B R \vv  = - B^\perp R B R B\vv = - \omega (B^\perp R \vv)
= -\omega \vu$. 
In addition, $\vu = B^\perp R \vv = R \vv - BR\vv = R \vv - BRB\vv = 
(R  -\omega I)\vv$. The eigenvalues of $R$ are $\pm1$, 
thus $(R  -\omega I)$ is non singular and it 
follows that $\vu \neq 0$.
\end{proof}

\begin{lemma}\label{th:singular_Omega_S_singular_W}
The transition matrix $\W$ is singular if and only if the operator 
$\Omega_S$ is singular.
\end{lemma}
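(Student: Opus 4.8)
The plan is to build everything on the identity $BRS = S\W$ from \eqref{eq:relation_W_BR}, on the fact that $S$ has independent columns (hence $S$ is injective and $\range(B) = \range(S)$), and on the expression $\Omega_S = BRB - B^\perp R B^\perp$ from \eqref{eq:newOmegaS}. Note that this statement concerns precisely the eigenvalue $0$, which Lemma~\ref{th:spec_BR_spec_Omega_S} explicitly excludes, so it needs a separate but closely analogous argument.

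For the direction ``$\W$ singular $\Rightarrow$ $\Omega_S$ singular'', I would start from $\W\bm{\omega}=\bm{0}$ with $\bm{\omega}\neq\bm{0}$, multiply \eqref{eq:relation_W_BR} by $\bm{\omega}$ to get $BR(S\bm{\omega}) = S\W\bm{\omega} = \bm{0}$, and set $\vv\equiv S\bm{\omega}$; this $\vv$ is nonzero by injectivity of $S$ and lies in $\range(B)$, so $B\vv = \vv$, $B^\perp\vv = \bm{0}$, and therefore $\Omega_S\vv = BRB\vv - B^\perp R B^\perp\vv = BR\vv = \bm{0}$, i.e. $\Omega_S$ is singular.

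For the converse, suppose $\Omega_S\vv = \bm{0}$ with $\vv\neq\bm{0}$. Since $BRB\vv\in\range(B)$ and $B^\perp R B^\perp\vv\in\range(B^\perp)$ sit in orthogonal subspaces, the equation $BRB\vv - B^\perp R B^\perp\vv = \bm{0}$ forces $BRB\vv = \bm{0}$ and $B^\perp R B^\perp\vv = \bm{0}$ separately. The goal is then to exhibit a nonzero $\vu\in\range(B)$ with $BR\vu = \bm{0}$; once this is done, writing $\vu = S\bm{\xi}$ with $\bm{\xi}\neq\bm{0}$ and applying \eqref{eq:relation_W_BR} gives $S\W\bm{\xi} = BRS\bm{\xi} = BR\vu = \bm{0}$, whence $\W\bm{\xi} = \bm{0}$ by injectivity of $S$. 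When $B\vv\neq\bm{0}$ the choice $\vu = B\vv$ works immediately, since $\vu\in\range(B)$ and $BR\vu = BRB\vv = \bm{0}$. The case $B\vv = \bm{0}$ is the only real obstacle: here $\vv = B^\perp\vv$, so $B^\perp R\vv = B^\perp R B^\perp\vv = \bm{0}$, meaning $R\vv\in\range(B)$; then $\vu\equiv R\vv$ is nonzero because $R$ is invertible, lies in $\range(B)$, and satisfies $BR\vu = BR^2\vv = B\vv = \bm{0}$. In either case we land on $\W\bm{\xi} = \bm{0}$, so $\W$ is singular, completing the proof.
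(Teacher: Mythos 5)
Your proof is correct and follows essentially the same route as the paper's: both directions rest on the identity $BRS = S\W$, the decomposition $\Omega_S = BRB - B^\perp R B^\perp$, the injectivity of $S$, and the same case split on whether $B\vv$ vanishes. The only differences are cosmetic — you use the witness $S\bm{\omega}$ where the paper uses $RS\bm{\omega}$ in the forward direction, and you package the converse as producing a nonzero $\vu \in \range(B)$ annihilated by $BR$.
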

\begin{proof}
From the proof of Lemma~\ref{th:spec_BR_spec_W} we know that
$\W = S^+ R S$. 
Suppose $\W$ is singular, i.e. there is 
$\vu \neq \bm{0}$ such that $\W \vu = S^+ R S \vu=\bm{0}$.
Since the columns of $S$ are independent and $R$ 
is invertible,
$\vv \equiv R S \vu \neq \bm{0}$, and therefore $S^+  \vv  = \bm{0}$.
Using \eqref{eq:V2} we can write $\Omega_S = BR - R + RB$,
and noticing that 
$R^2 = I$ and $B S = SS^+S = S$, we obtain
$\Omega_S  \vv = RB \vv= RS(S^+\vv) = \bm{0}$,
implying that $\Omega_S$ is also singular.

Suppose $\Omega_S$ is singular, i.e. there is $\vv \neq \bm{0}$ 
such that $\Omega_S \vv  = \bm{0}$.
From \eqref{eq:newOmegaS}
and noticing that $B$ and $B^\perp$ project onto orthogonal subspaces 
we have  
\begin{equation}
\label{eq:orth_cond_for_null_eign_of_Omega_S}
BRB \vv  = \bm{0} , \qquad B^\perp R B^\perp \vv  = \bm{0}.
\end{equation}
Consider two separate  cases. 
First, if $B \vv = \bm{0}$ then $B^\perp \vv = \vv \neq \bm{0}$,
and from 
equation \eqref{eq:orth_cond_for_null_eign_of_Omega_S} we have
$B^\perp R B^\perp \vv = B^\perp R \vv  = R\vv - BR \vv= \bm{0}$,
or $R \vv = S \vu$ where $\vu = S^+ R \vv \ne \bm{0}$.
Therefore, $\W \vu = S^+ R S \vu = S^+ \vv = \bm{0}$ showing
that $\W$ is singular, 
where we have 
used $R^2 = I$ and $B \vv = \bm{0}$ if and only if
$S^+\vv = \bm{0}$.
Second, suppose $B \vv \neq \bm{0}$, and let
$\vu  \equiv S^+ \vv \ne \bm{0}$. 
From the first equation in \eqref{eq:orth_cond_for_null_eign_of_Omega_S} 
we have $S \W \vu = SS^+ R SS^+ \vv = B R B \vv = \bm{0}$, 
but since $S$ has independent columns we must have
$\W \vu = \bm{0}$, which shows that $\W$ is singular.
\end{proof}

\begin{lemma}
\label{th:omegas_w}
We have that $\omega \in (-1,1]$ is an eigenvalue of the transition matrix
$\W$ if and only if it is
an eigenvalue of the symmetric operator $\Omega_S$. 
\end{lemma}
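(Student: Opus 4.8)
\textbf{Proof plan for Lemma~\ref{th:omegas_w}.}
The plan is to chain together the previously established lemmas that separately handle the eigenvalue $0$ and the nonzero eigenvalues, and to be careful about the endpoints $\pm 1$ of the interval. I would split the argument according to whether $\omega = 0$, $\omega \in (-1,1) \setminus \{0\}$, or $\omega = 1$, since each regime is governed by a different auxiliary result and the hypotheses of Lemma~\ref{th:spec_BR_spec_W} and Lemma~\ref{th:spec_BR_spec_Omega_S} explicitly exclude certain values. The point $\omega = -1$ is deliberately outside the claimed interval $(-1,1]$, so it need not be addressed.

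First, for $\omega \notin \{-1,0,1\}$: combining Lemma~\ref{th:spec_BR_spec_W} (which says $\omega \neq 0$ is an eigenvalue of $\W$ iff it is an eigenvalue of $BR$) with Lemma~\ref{th:spec_BR_spec_Omega_S} (which says $\omega \notin\{-1,0,1\}$ is an eigenvalue of $BR$ iff it is an eigenvalue of $\Omega_S$) immediately gives the equivalence on the punctured interval. Second, the case $\omega = 0$ is exactly Lemma~\ref{th:singular_Omega_S_singular_W}: $\W$ is singular iff $\Omega_S$ is singular, i.e.\ $0 \in \operatorname{spec}(\W)$ iff $0 \in \operatorname{spec}(\Omega_S)$. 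Third, the case $\omega = 1$: here I would argue directly. Since $\G$ is connected, $\lambda_1(\W) = 1$ is always an eigenvalue of $\W$ with the all-ones eigenvector $\vone$, so I only need to show $1$ is always an eigenvalue of $\Omega_S$. Using the identity $BRS = S\W$ from \eqref{eq:relation_W_BR}, the vector $\vv \equiv S\vone$ satisfies $B\vv = \vv$ (as $\vv$ lies in the range of $B=SS^+$) and, by the computation in \eqref{eq:newOmegaS} giving $\Omega_S = BRB - B^\perp R B^\perp$, we get $\Omega_S \vv = BRB\vv = BR S\vone = S\W\vone = S\vone = \vv$; since the columns of $S$ are independent, $\vv \neq \vzero$, so $1 \in \operatorname{spec}(\Omega_S)$. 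Thus $\omega=1$ lies in both spectra unconditionally, and the equivalence holds trivially at this endpoint.

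The main obstacle I anticipate is bookkeeping at the boundary values rather than any deep new idea: the cited lemmas each carve out a slightly different exclusion set ($\{0\}$ for Lemma~\ref{th:spec_BR_spec_W}, $\{-1,0,1\}$ for Lemma~\ref{th:spec_BR_spec_Omega_S}), so one must verify that together they cover all of $(-1,1]$ without gaps, and separately patch $\omega = 0$ via singularity and $\omega = 1$ via the explicit eigenvector. One subtlety worth double-checking is that the $\omega = 1$ argument does not secretly need $1 \notin\{-1,0,1\}$-type hypotheses: it does not, because it is a direct construction of an eigenvector, and connectedness of $\G$ guarantees multiplicity one on the $\W$ side, which is all that is needed for the ``if and only if'' at that endpoint. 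Assembling these three cases yields the full equivalence on $(-1,1]$.
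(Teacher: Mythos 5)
Your proposal is correct and follows essentially the same route as the paper: the punctured case $\omega\notin\{-1,0,1\}$ by chaining Lemma~\ref{th:spec_BR_spec_W} with Lemma~\ref{th:spec_BR_spec_Omega_S}, the case $\omega=0$ via Lemma~\ref{th:singular_Omega_S_singular_W}, and $\omega=1$ by noting both operators always admit the all-ones eigenvector (your $S\vone$ is exactly that vector in $\mathbb{R}^{|\bE|}$, since each row of $S$ has a single unit entry). Your treatment of the endpoint is slightly more explicit than the paper's one-line remark, but the argument is the same.
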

\begin{proof}
Combining Lemma \ref{th:spec_BR_spec_W} 
and Lemma \ref{th:spec_BR_spec_Omega_S} it follows that  
$\omega \notin \{-1,0,1\}$ is an eigenvalue of
$\W$ if and only if it is an eigenvalue of $\Omega_S$.
By Lemma \ref{th:singular_Omega_S_singular_W} we can 
extend this to $\omega = 0$.
Finally, $\W$ and $\Omega_S$ always have an eigenvalue $\omega=1$
with eigenvector being the all-ones vector.
\end{proof}

Finally, we are ready to show one of our main results, which
relates the spectrum of ADMM to the spectrum of random walks on $\G$.
We first repeat the statement of Theorem~\ref{th:omega_w} in the main text
for convenience.

\begin{theorem}[ADMM and random walks on $\G$]
\label{thA:omega_w}
Let $\W = \D^{-1} \AA$ be the probability transition matrix
of a random walk on
the graph $\G$, where $\D$
is the degree matrix and $\AA$ the adjacency matrix.
For each eigenvalue $\lambda(\W) \in (-1,1)$ the matrix
$T_A$ in \eqref{eq:Ta_new} has a pair of eigenvalues given by
\begin{equation}
\label{eqA:EigenTA_random}
\lambda^{\pm}(T_A) = \left( 1 - \dfrac{\gamma}{2}\right)
+ \dfrac{\gamma}{2+\rho}\left(\lambda(\W) \pm i 
\sqrt{1-\dfrac{\rho^2}{4} - \lambda^2(\W)}\right).
\end{equation}
Conversely,
any eigenvalue $\lambda(T_A)$ is of the
form \eqref{eqA:EigenTA_random} for some
$\lambda(\W)$.
\end{theorem}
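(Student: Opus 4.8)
The plan is to assemble Theorem~\ref{thA:omega_w} from the intermediate lemmas already established in this appendix, using the decomposition of $T_A$ from Lemma~\ref{thA:simplified_TA} as the starting point. Recall that $T_A = (1-\gamma/2)I + \tfrac{\gamma}{\rho+2}U$ with $U = \Omega + \tfrac{\rho}{2}\widetilde B$, so the eigenvalues of $T_A$ are in affine bijection with those of $U$: if $u$ is an eigenvalue of $U$, then $(1-\gamma/2) + \tfrac{\gamma}{\rho+2}u$ is the corresponding eigenvalue of $T_A$, and conversely. Hence it suffices to characterize the spectrum of $U$ and then apply this affine map; substituting $u = \lambda(\W) \pm i\sqrt{1-\rho^2/4 - \lambda^2(\W)}$ into the affine map yields exactly formula \eqref{eqA:EigenTA_random}.

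\textbf{Key steps.} First I would invoke Lemma~\ref{th:inverse_of_U_and_relation_to_V}, which shows $U$ is nonsingular (for $\rho \ne 2$; for $\rho = 2$ one handles the degenerate case separately, or notes $\eta = 0$) and that its symmetric-part companion satisfies $\Omega_S = \tfrac12(U + \eta U^{-1})$ with $\eta = 1 - \rho^2/4$. Second, apply Lemma~\ref{th:useful} with $V = \Omega_S$: every eigenvalue of $U$ is of the form $v \pm i\sqrt{\eta - v^2}$ for some eigenvalue $v$ of $\Omega_S$, and conversely each eigenvalue $v$ of $\Omega_S$ gives at least one such eigenvalue of $U$. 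Third, strengthen this using Corollary~\ref{th:pairs}: when $v = w_S \in (-1,1)$, \emph{both} signs $u^{\pm}$ occur as eigenvalues of $U$. Fourth, invoke Lemma~\ref{th:omegas_w} to identify the eigenvalues of $\Omega_S$ lying in $(-1,1]$ with the eigenvalues of the transition matrix $\W$ in $(-1,1]$; restricting to the open interval $(-1,1)$ gives exactly the pairs appearing in the theorem. Finally, push all of this through the affine map $u \mapsto (1-\gamma/2) + \tfrac{\gamma}{\rho+2}u$ to obtain \eqref{eqA:EigenTA_random}, and for the converse direction observe that every eigenvalue of $T_A$ pulls back to an eigenvalue of $U$, hence (by Lemma~\ref{th:useful}) to some $v = \lambda(\Omega_S)$, hence (by Lemma~\ref{th:omegas_w}, covering the endpoint cases $\pm 1$ and $0$ as well) to some $\lambda(\W)$; one checks separately that $\lambda(\W) = 1$ reproduces $\lambda_1(T_A) = 1$ via the negative sign, and that $\lambda(\W) = -1$, if present, is handled consistently.

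\textbf{Main obstacle.} The bulk of the technical work is already absorbed into the preceding lemmas, so the remaining difficulty is mostly bookkeeping about the endpoints. The subtlest point is the exact correspondence of \emph{multiplicities} and the treatment of the boundary eigenvalues $\lambda(\W) \in \{-1, 0, 1\}$: Lemma~\ref{th:useful} only guarantees one of $u^+, u^-$ for a general eigenvalue $v$, and when $\eta - v^2 \le 0$ the roots $u^{\pm}$ are real rather than a conjugate pair, so the ``pair of eigenvalues'' language of the theorem must be read as applying to $\lambda(\W) \in (-1,1)$ with $\rho$ small enough that the square root is real — which is consistent with the statement restricting to $\lambda(\W) \in (-1,1)$ and with the remark in the main text that eigenvalues become real once $\rho^2/4 + \lambda^2(\W) \ge 1$. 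I would phrase the converse direction carefully so that it covers every eigenvalue of $T_A$ (real or complex) by tracing back through $U$ and $\Omega_S$ to $\W$, using Lemma~\ref{th:singular_Omega_S_singular_W} for the $\omega = 0$ case and the all-ones eigenvector for $\omega = 1$.
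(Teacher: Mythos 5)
Your proposal is correct and follows essentially the same route as the paper, which proves the forward direction by combining Corollary~\ref{th:pairs} with Lemma~\ref{th:omegas_w} and the converse via Lemma~\ref{th:useful}, all pushed through the affine map from Lemma~\ref{thA:simplified_TA}. One minor note: Corollary~\ref{th:pairs} actually guarantees both roots $u^{\pm}$ for \emph{any} $\rho>0$ (via the polynomial-coefficient argument), so you need not restrict to $\rho$ small enough that the square root is real.
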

\begin{proof}
The first part is an immediate consequence of 
Corollary~\ref{th:pairs} and Lemma~\ref{th:omegas_w}.
The second part is a consequence of Lemma~\ref{th:useful}.
\end{proof}

\section{Proof of Theorem~\ref{th:ADMM_convergence_rate}}
\label{sec:proof_tuning}

To establish Theorem~\ref{th:ADMM_convergence_rate}, which involves
graphs with even length cycles and low conductance, several intermediate
results will be needed.

\begin{lemma}
\label{th:remark_about_simul_diag} 
The operator $U$, defined in \eqref{eq:U1}, and also the symmetric operator
$\Omega_S$ are diagonalizable.
Moreover, $U$ and $\Omega_S$ commute and have a common eigenbasis.
\end{lemma}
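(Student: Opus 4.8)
The plan is to split the statement into three claims and establish them in order: (1) $\Omega_S$ is diagonalizable; (2) $U$ commutes with $\Omega_S$; (3) $U$ is diagonalizable. The asserted common eigenbasis is then just the standard fact that two commuting diagonalizable operators are simultaneously diagonalizable — concretely, decompose the space into eigenspaces of $\Omega_S$ and diagonalize the restriction of $U$ on each one. Claim (1) is immediate from Lemma~\ref{th:inverse_of_U_and_relation_to_V}: there $\Omega_S = \tfrac12(\OO + \OO^\top)$ is manifestly symmetric, so the spectral theorem gives an orthonormal eigenbasis and a decomposition $\mathbb{R}^{|\bE|} = \bigoplus_\mu E_\mu$ into the real eigenspaces $E_\mu$ of $\Omega_S$.

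For claim (2) I would use the relation $2\,\Omega_S = U + \eta\,U^{-1}$ from \eqref{eq:V2}, where $\eta = 1 - \rho^2/4$ (I assume $\eta \ne 0$; the value $\rho = 2$, where $U$ is singular, is excluded or handled separately). Multiplying this identity by $U$ on the left and on the right gives $2\,\Omega_S U = U^2 + \eta I = 2\, U\,\Omega_S$, so $U$ and $\Omega_S$ commute — in fact $\Omega_S$ is a Laurent polynomial in $U$. Hence $U$ preserves each $E_\mu$, and restricting the rearranged identity $U^2 - 2\,\Omega_S U + \eta I = 0$ to $E_\mu$, where $\Omega_S$ acts as $\mu I$, shows that $U|_{E_\mu}$ is annihilated by the quadratic $q_\mu(x) = x^2 - 2\mu x + \eta$.

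Claim (3) I would then handle eigenspace by eigenspace, inspecting $q_\mu$, whose discriminant is $4(\mu^2-\eta)$. When $\mu^2 \ne \eta$ the polynomial $q_\mu$ has two distinct roots, the minimal polynomial of $U|_{E_\mu}$ divides it and is therefore square-free, and $U|_{E_\mu}$ is diagonalizable; summing over these $\mu$ makes $U$ diagonalizable, and the common eigenbasis follows as above. The delicate case — and I expect this to be the main obstacle — is $\mu^2 = \eta$, where $q_\mu(x) = (x-\mu)^2$ and a priori $U|_{E_\mu}$ could carry a nilpotent part; by Lemma~\ref{th:range_omega_s} and Lemma~\ref{th:omegas_w} such a $\mu$ lies in $(-1,1)$ and is then an eigenvalue of the transition matrix $\W$, and one must rule out a nonzero $(U-\mu I)|_{E_\mu}$. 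The route I would attempt is to observe that $\OO$ and $\tB$ each commute with $\Omega_S$ (a short computation from $\OO^\top\OO = I$ and \eqref{eq:newOmegaS}), hence preserve $E_\mu$; on $E_\mu$ one gets $\OO + \OO^\top = 2\mu I$ together with $\OO^\top\OO = I$, so $(\OO-\mu I)^2|_{E_\mu} = (\mu^2-1)I$, and feeding $\eta = \mu^2$ (equivalently $1-\mu^2 = \rho^2/4$) into the expansion $(U-\mu I)^2 = (\OO-\mu I)^2 + \tfrac{\rho}{2}\big[(\OO-\mu I)\tB + \tB(\OO-\mu I)\big] + \tfrac{\rho^2}{4}I$ forces the remaining cross term to vanish. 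Since this bookkeeping is exactly what separates the borderline tunings (notably $\rho = \rho^\star$) from the rest, I would either state the lemma for parameter values with $\mu^2 \ne \eta$ and recover the borderline values by a continuity/perturbation argument in $\rho$, or push the restricted computation through in full.
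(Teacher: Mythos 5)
Your reduction to the eigenspaces $E_\mu$ of $\Omega_S$, the commutation identity $2\Omega_S U = U^2+\eta I = 2U\Omega_S$, and the minimal-polynomial argument on each $E_\mu$ correctly handle every non-degenerate eigenvalue, and this is essentially a rephrasing of what the paper does with Jordan forms (the paper writes $U=PJP^{-1}$ and asserts that each Jordan block of $J$ yields a block of the \emph{same size} in $\tfrac12(J+\eta J^{-1})$, which must then be trivial because $\Omega_S$ is symmetric). The case you flag as delicate, $\mu^2=\eta$, is however a genuine gap in your write-up, and — this is the important point — your own computation, carried one step further, shows that it cannot be closed. Suppose $E_\mu\neq\{0\}$ with $\mu^2=\eta=1-\rho^2/4$ and suppose, for contradiction, that $U|_{E_\mu}$ were diagonalizable; since $(U-\mu I)^2=0$ on $E_\mu$, this forces $U=\mu I$ there, hence $\OO-\mu I=-\tfrac{\rho}{2}\tB$ on $E_\mu$ and $(\OO-\mu I)^2=\tfrac{\rho^2}{4}\tB^2=+\tfrac{\rho^2}{4}I$. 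But you already established $(\OO-\mu I)^2=(\mu^2-1)I=-\tfrac{\rho^2}{4}I$ on $E_\mu$. For $\rho>0$ this is a contradiction, so $U$ is \emph{defective} at exactly these parameter values. They are not exotic: $\rho=2\sqrt{1-\mu^2}$ for $\mu=\lambda(\W)\in(-1,1)$ includes the optimal tuning $\rho^\star$ of Theorem~\ref{th:ADMM_convergence_rate}, which is precisely where a conjugate pair collides on the real axis (the standard "critical damping" degeneracy). Consequently neither of your escape routes works: "pushing the computation through" proves the opposite of what you want, and a continuity argument in $\rho$ cannot help because diagonalizability is not preserved under limits.

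The honest conclusion is that the lemma, read for all $\rho\in(0,2)$, requires the non-degeneracy hypothesis $\mu^2\neq\eta$ that you contemplated adding, and your proof should simply state it. (The paper's own proof has the same blind spot: the map $\lambda\mapsto\tfrac12(\lambda+\eta\lambda^{-1})$ preserves Jordan block sizes only where its derivative $\tfrac12(1-\eta\lambda^{-2})$ is nonzero, i.e.\ where $\lambda^2\neq\eta$, so the claimed block-for-block correspondence between $J$ and $J+\eta J^{-1}$ fails in exactly your delicate case.) This restriction is harmless downstream: the lemma is only invoked, in Lemma~\ref{th:no_odd_cycles_implies_eigen_values_not_present}, for the eigenvalue $w_S=-1$ of $\Omega_S$ and the corresponding eigenvalue $-1-\rho/2$ of $U$, and $(-1)^2=1\neq\eta$ for every $\rho>0$, so the restricted statement suffices. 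So: right skeleton, correct identification of the crux, but the delicate case is not a loose end to be tidied — it is a counterexample, and the fix is to exclude it rather than to prove it.
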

\begin{proof}
It is obvious that $\Omega_S = \frac{\Omega + \Omega^\top}{2}$ 
is diagonalizable since
it is symmetric and real.
Let 
$U = P J P^{-1}$ be a decomposition in terms of a Jordan 
canonical form $J = \diag(J_1, J_2,\dotsc)$, where $J_i$ is the Jordan
block associated to eigenvalue $\lambda_i$.
From Lemma~\ref{th:inverse_of_U_and_relation_to_V}
we have
$\Omega_S = \tfrac{1}{2}(U + \eta U^{-1}) = 
\tfrac{1}{2}P(J + \eta J^{-1})P^{-1}$.
For every Jordan block $J_i$  
there is a corresponding Jordan block of the 
same dimension in $J + \eta J^{-1}$ with corresponding 
eigenvalue $\lambda_i  + \eta \lambda_i^{-1}$. 
Therefore, we can decompose $J + \eta J^{-1} = F Z F^{-1}$, 
where $Z$ is in Jordan form and has the same set of 
Jordan-block-dimensions as $J$, 
except that the diagonal values are different.
To mention an example, consider
\begin{equation}
J =  \left[
\begin{array}{cccccc}
 \lambda_1  & 0 & 0 & 0 & 0 & 0 \\
 0 & \lambda_2  & 1 & 0 & 0 & 0 \\
 0 & 0 & \lambda_2  & 0 & 0 & 0 \\
 0 & 0 & 0 & \lambda_3  & 1 & 0 \\
 0 & 0 & 0 & 0 & \lambda_3  & 1 \\
 0 & 0 & 0 & 0 & 0 & \lambda_3  \\
\end{array}\right],
\quad Z =  
\left[
\begin{array}{cccccc}
 \mu_1 & 0 & 0 & 0 & 0 & 0 \\
 0 & \mu_2 & 1 & 0 & 0 & 0 \\
 0 & 0 & \mu_2 & 0 & 0 & 0 \\
 0 & 0 & 0 & \mu_3 & 1 & 0 \\
 0 & 0 & 0 & 0 & \mu_3 & 1 \\
 0 & 0 & 0 & 0 & 0 & \mu_3 \\
\end{array}
\right], \quad \mu_i = \lambda_i + \eta \lambda_i^{-1} .
\nonumber
\end{equation}
Thus, we can write $\Omega_S =\tfrac{1}{2} (HF) Z (HF)^{-1}$.
The Jordan form of a matrix is unique, and $\Omega_S$ is diagonalizable, 
therefore,
all blocks in $Z$ must have dimension $1$, and so
does $J$, which means that $U$ is diagonalizable.

It is obvious that $U$ and $\Omega_S$ commute
due to \eqref{eq:V2}. Two diagonalizable
matrices that commute can be simultaneous diagonalizable, thus they
share a common eigenbasis.
\end{proof}

\begin{lemma}
\label{th:form_of_eig_of_omega_s}
If $w_S \in \{-1,1\}$ is an eigenvalue of $\Omega_S$ with corresponding
eigenvector $\vv$, then
\begin{itemize}
\item if $B\vv \neq \bm{0}$, 
then $B\vv$ is also an eigenvector of $\Omega_S$ and of
$R$ with eigenvalue $w_S$, i.e.
$\Omega_S (B\vv) = w_S (B\vv)$ 
and $R (B\vv) = w_S (B\vv)$. 
\item if $B^\perp\vv \neq \bm{0}$, then $B^\perp \vv$ is also an eigenvector
of $\Omega_S$ with eigenvalue $w_S$ and of $R$ with eigenvalue $-w_S$, i.e.
$\Omega_S (B^\perp \vv) = w_S (B^\perp \vv)$ 
and $R (B^\perp \vv) = -w_S (B^\perp \vv)$. 
\end{itemize}
\end{lemma}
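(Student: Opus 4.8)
The plan is to work directly from the characterization $\Omega_S = BRB - B^\perp R B^\perp$ established in \eqref{eq:newOmegaS}, together with the fact that $B$ and $B^\perp$ project onto orthogonal subspaces. First I would take an eigenvector $\vv$ of $\Omega_S$ with eigenvalue $w_S \in \{-1,1\}$ and decompose it as $\vv = B\vv + B^\perp\vv$. Applying $\Omega_S$ and using orthogonality of the ranges of $B$ and $B^\perp$, the eigenvalue equation splits into the two independent relations $BRB(B\vv) = w_S(B\vv)$ and $-B^\perp R B^\perp(B^\perp\vv) = w_S(B^\perp\vv)$, exactly as in \eqref{eq:brbperp}. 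This immediately gives the first halves of both bullet points: $B\vv$ (when nonzero) is an eigenvector of $\Omega_S$ with eigenvalue $w_S$, and similarly $B^\perp\vv$ with eigenvalue $w_S$.

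The key new content is promoting these to eigenvector statements for $R$ itself, and this is where the hypothesis $w_S \in \{-1,1\}$ is essential. Consider the case $B\vv \neq \bm{0}$; write $\vw \equiv B\vv$, so $B\vw = \vw$ and $BR\vw = w_S \vw$. The idea is a norm/Cauchy--Schwarz argument: since $R$ is an involution ($R^2 = I$) and symmetric, it is orthogonal, so $\|R\vw\| = \|\vw\|$. On the other hand, $B$ is an orthogonal projection, so $\|BR\vw\| \le \|R\vw\| = \|\vw\|$, with equality if and only if $R\vw$ lies in the range of $B$, i.e. $BR\vw = R\vw$. But $\|BR\vw\| = |w_S|\,\|\vw\| = \|\vw\|$ since $|w_S| = 1$. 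Hence equality holds, $R\vw = BR\vw = w_S\vw$, which is precisely $R(B\vv) = w_S(B\vv)$, and then $\Omega_S(B\vv) = BRB(B\vv) = BR\vw = w_S \vw = w_S(B\vv)$ as needed.

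For the case $B^\perp\vv \neq \bm{0}$, set $\vw \equiv B^\perp\vv$, so $B^\perp\vw = \vw$ and $-B^\perp R\vw = w_S\vw$, i.e. $B^\perp R\vw = -w_S\vw$. The identical norm argument — $\|R\vw\| = \|\vw\|$, $\|B^\perp R\vw\| \le \|\vw\|$ with equality iff $R\vw \in \range(B^\perp)$, and $\|B^\perp R\vw\| = |w_S|\,\|\vw\| = \|\vw\|$ — forces $R\vw = B^\perp R\vw = -w_S\vw$, giving $R(B^\perp\vv) = -w_S(B^\perp\vv)$. Then $\Omega_S(B^\perp\vv) = -B^\perp R B^\perp(B^\perp\vv) = -B^\perp R\vw = -(-w_S\vw) = w_S(B^\perp\vv)$.

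The main obstacle is recognizing that the saturation of the projection inequality $\|P\vx\| \le \|\vx\|$ (which is an equality exactly when $\vx$ is already in the range of $P$) is the right tool, and that it is available precisely because $|w_S| = 1$ makes $\|\Omega_S\vv\|$-type quantities equal to $\|\vv\|$-type quantities on each orthogonal component. One should also double-check the degenerate sub-cases: if $B\vv = \bm{0}$ the first bullet is vacuous, and if $B^\perp\vv = \bm{0}$ the second is vacuous, so no separate treatment of $\vv$ being entirely in one subspace is needed beyond what the bullets already scope. An alternative to the norm argument, worth mentioning, is to use $R^2 = I$ directly: from $BR\vw = w_S\vw$ with $B\vw = \vw$ one computes $R\vw - w_S\vw = R\vw - BR\vw = B^\perp R\vw$, and showing this vanishes amounts to the same orthogonality bookkeeping; I would present whichever is shorter, but the Cauchy--Schwarz version seems cleanest.
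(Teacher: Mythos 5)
Your proposal is correct and follows essentially the same route as the paper's proof: split the eigenvalue equation via $\Omega_S = BRB - B^\perp R B^\perp$ into the two relations \eqref{eq:brbperp}, then use $|w_S|=1$ to saturate the projection norm inequality and force $R(B\vv)$ (resp. $R(B^\perp\vv)$) to lie in the range of $B$ (resp. $B^\perp$). The only cosmetic difference is that you invoke orthogonality of $R$ to get $\|R\vw\|=\|\vw\|$ where the paper uses $\|R\|\le 1$; the argument is otherwise identical.
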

\begin{proof}
Let $\Omega_S \vv = w_S \vv$ 
where $w_S \in\{-1,1\}$ and $\vv \neq {\bf 0}$. From
\eqref{eq:newOmegaS}
we have
\begin{equation}
\label{eq:BRBeqs}
BR(B\vv) = w_S (B\vv), \qquad B^\perp R (B^\perp \vv) = - w_S (B^\perp \vv).
\end{equation}
Assuming $B \vv \ne \bm{0}$ we have
$\Omega_S (B \vv) = BR(B\vv) = w_S (B\vv)$, which shows that $B\vv$ is
an eigenvector of $\Omega_S$ with eigenvalue $w_S$. Taking the norm on each
side of this equation and using $|w_S| = 1$ implies
\begin{equation}
\label{eq:BRB_norm}
\| BRB \vv \| = \| B \vv \|.
\end{equation}
Since $B$ is a projection operator, 
if $RB{\vv}$ is not in the span of $B$ 
then we must have 
$\|B(RB{\vv})\| < \|RB{\vv}\| \leq \|B{\vv}\|$, where the last 
inequality follows by using $\|R\| \leq 1$. However, this contradicts
\eqref{eq:BRB_norm}. Therefore,  $RB\vv$ must be in the span of $B$ and
as a consequence 
$R(B{\vv}) = BRB{\vv} = w_S (B{\vv})$, where we used the first
equation in \eqref{eq:BRBeqs}. This shows that
$B{\vv}$ is an eigenvector of $R$ with eigenvalue $w_S$ and completes the
proof of the first claim.

The proof of the second claim
is analogous. Assuming $B^\perp \vv \ne \bm{0}$ we obtain
$\Omega_S(B^\perp \vv) = -B^\perp R B^\perp \vv = w_S (B^\perp \vv)$, where
in the last passage we used the second equation in \eqref{eq:BRBeqs}.
This shows that $B^\perp \vv$ is an eigenvector of $\Omega_S$ with
eigenvalue $w_S$.
Taking the norm of this last equality 
yields 
\begin{equation}
\label{eq:BperpRBperp_norm}
\| B^\perp R B^\perp \vv\| = \| B^\perp \vv\|.
\end{equation}
Assuming that $RB^\perp \vv$ is not in the span of $B^\perp$
we conclude that
$\| B^\perp RB^\perp \vv\| < \| B^\perp \vv\|$, which contradicts
\eqref{eq:BperpRBperp_norm}. 
Therefore, we must have $B^\perp R B^\perp \vv = 
R(B^\perp \vv) = -w_S (B^\perp \vv)$, where we used \eqref{eq:BRBeqs}. 
This shows 
that $B^\perp \vv$ is an eigenvector of $R$ with eigenvalue $-w_S$.
\end{proof}

\begin{lemma}\label{th:eigen_prop_implies_graphs_prop}
If $B \vv \neq \bm{0}$ is an eigenvector of $R$ with eigenvalue
$-1$, then the graph $\G$ does not have odd-length cycles.
If $B^\perp \vv \neq \bm{0}$ is an eigenvector of $R$ with eigenvalue
$1$, then the graph $\G$ has cycles.
\end{lemma}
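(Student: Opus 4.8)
The plan is to unpack both hypotheses into purely combinatorial statements about $\G$, using the explicit forms of $R$ and $S$ recalled in \eqref{eq:BR}. The correspondence is: each function node $a\in\bF$ with $N_a=\{b,c\}$ is an edge $\{b,c\}$ of $\G$ and owns the two coordinates indexed $(a,b)$ and $(a,c)$; the block $R_a=\left(\begin{smallmatrix}0&1\\1&0\end{smallmatrix}\right)$ swaps these two coordinates, $\range(B)=\range(S)=\{S\vz:\vz\in\mathbb{R}^{|\bV|}\}$ is the space of ``consistent'' vectors with $x_{ab}=z_b$, and $\range(B^\perp)=\nullspace(S^\top)$. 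Since $S^\top S=\D$ is nonsingular, $S$ has independent columns and is injective; I use this throughout.

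For the first claim, set $\vx\equiv B\vv=S\vz$; then $\vx\neq\bm{0}$ forces $\vz\neq\bm{0}$. Reading $R\vx=-\vx$ block by block gives exactly $z_c=-z_b$ for every edge $\{b,c\}\in\E$. If $z_b=0$ for some $b$, then $z_{b'}=-z_b=0$ at every neighbor $b'$, so by connectivity of $\G$ this would force $\vz=\bm{0}$, a contradiction; hence $\vz$ vanishes nowhere. Now suppose $\G$ had an odd cycle $b_0\sim b_1\sim\dots\sim b_{2m}\sim b_0$; composing the sign-flip relations around it yields $z_{b_0}=(-1)^{2m+1}z_{b_0}=-z_{b_0}$, hence $z_{b_0}=0$ — contradiction. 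Therefore $\G$ has no odd-length cycle.

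For the second claim, set $\vy\equiv B^\perp\vv\neq\bm{0}$, so $S^\top\vy=\bm{0}$. Reading $R\vy=\vy$ block by block gives $y_{ab}=y_{ac}$ for every edge $a$ with $N_a=\{b,c\}$; call this common value $y_a$, so $\vy$ descends to a function on $\E$. The constraint $S^\top\vy=\bm{0}$ becomes $\sum_{a\in N_b}y_a=0$ at every vertex $b$ (that entry of $S^\top\vy$ sums $y_{ab}$ over the function nodes $a\in N_b$ incident on $b$). Suppose $\G$ has no cycle; being connected it is a tree, and I peel leaves: a leaf $b$ has a single incident edge $a$, so the vertex equation at $b$ forces $y_a=0$; deleting $b$ and $a$ yields a smaller tree whose remaining vertex equations still hold (the deleted term was $0$), and iterating forces $y_a=0$ on all edges, i.e.\ $\vy=\bm{0}$ — a contradiction. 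Hence $\G$ contains a cycle. (Equivalently, this is the fact that the unsigned incidence matrix of a tree has trivial kernel.)

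The computations are short and I do not anticipate a genuine obstacle. The only points demanding care are: matching the factor-graph coordinates $(a,b)$ correctly to the blocks of $R$ and the columns of $S$; invoking connectivity of $\G$ to spread the non-vanishing of $\vz$ in the first claim; and making the leaf-peeling induction precise (or citing the incidence-matrix rank fact) in the second. Each of these is routine.
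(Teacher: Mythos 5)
Your proof is correct and follows essentially the same route as the paper's: you unpack the $R$- and $B$-eigenvector conditions into local sign constraints on edge values, derive a sign contradiction by propagating around an odd cycle for the first claim, and peel leaves of a tree for the second. Your parametrization $B\vv = S\vz$ is a slightly tidier packaging of the paper's conditions \eqref{eq:ca}--\eqref{eq:cb}, but the substance is identical.
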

\begin{proof}
Define $\vx \equiv B \vv$ and $\vy \equiv B^\perp \vv$. We
index the components of $\vx,\vy \in \mathbb{R}^{|\bE|}$ 
by the edges of the factor graph. For instance,  $x_e$ and $y_e$ refers
to 
the respective component of $\vx$ and $\vy$ over the edge $e\in \bE$.
We look at
$B,R \in \mathbb{R}^{|\bE|\times|\bE|}$ as operators on edge
values.
Recall that $R$ in \eqref{eq:BR} has blocks in the form
$R_a = \left( \begin{smallmatrix} 0 & 1 \\ 1 & 0\end{smallmatrix} \right)$
for $a \in \bF$, thus each block has 
eigenvalues $\pm 1$ with eigenvectors $(1,\pm1)^\top$,
respectively.
Recall also that $B$ in \eqref{eq:BR} 
replaces the value of a given edge $e=(a,b)$ by the
average of the values over all the edges $(c,b)$ incident on 
$b\in \bV$, i.e. $(B\vv)_e = \tfrac{1}{|N_b|}\sum_{e'\sim b} v_{e}$, 
where $e' \sim b$ denotes that $e'$ is incident on node $b$.

Let us consider the first statement.
From the eigenvalues and eigenvectors of $R$, 
for every pair of edges $e_i,e_j \in \bE$ incident on a given 
function node $a \in \bF$ we have
\begin{equation}
\label{eq:ca}
x_{e_i} = -x_{e_j} = c_a \qquad \mbox{for $e_i,e_j \sim a\in\bF$},
\end{equation}
for some constant $c_a$.  
Now $B \vx = \vx \neq {\bf 0}$, 
thus for every set of edges $\{e_1,e_2,\dotsc,e_k\}$ 
incident on a variable node $b \in \bV$ we
have 
\begin{equation}
\label{eq:cb}
x_{e_1}=x_{e_2}=\dotsm=x_{e_k}= c_b,
\end{equation}
where $c_b$ is a constant.
Since the graph $\G$, and consequently its corresponding factor graph $\bG$,
is connected, 
we must have 
\begin{equation}
\label{eq:cacb}
|c_a|= |c_b| \qquad \mbox{for all $(a,b)\in \bE$}.
\end{equation}
Now assume that $\G$ has an odd cycle. This cycle must
traverses an odd number of 
variable nodes $a \in \bF$, but each pair of edges incident on $a \in \bF$ 
must have the same absolute value and opposite signs due to \eqref{eq:ca}, 
while each pair of edges incident on
a variable node $b \in \bV$ must have equal signs due to \eqref{eq:cb}. 
This implies that
$c_a = c_b$ and $c_a = -c_b$ for some $(a,b) \in \bE$, whose solution
is $c_a = c_b = 0$. From \eqref{eq:cb} this implies that for all
$e_i \sim b$ we have $x_{e_i}=0$, which in turn by \eqref{eq:cacb}
implies that $c_a=0$ for all $a$ incident upon these edges $e_i$, 
$i=1,\dotsc,k$, and so
forth. This yields
$\vx = \bm{0}$, which contradicts our assumption. Therefore,
$\G$ cannot have odd-length cycles. See Fig.~\ref{fig:ring_chain}a for
an example.

\begin{figure}
\begin{minipage}{.49\textwidth}
\vspace{2em}
\includegraphics{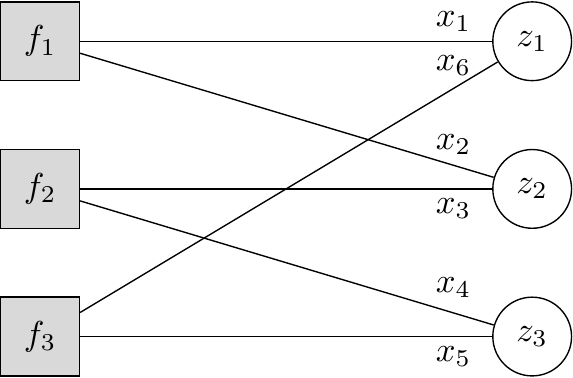}\\[.9em]
(a)
\end{minipage}
\begin{minipage}{.49\textwidth}
\includegraphics{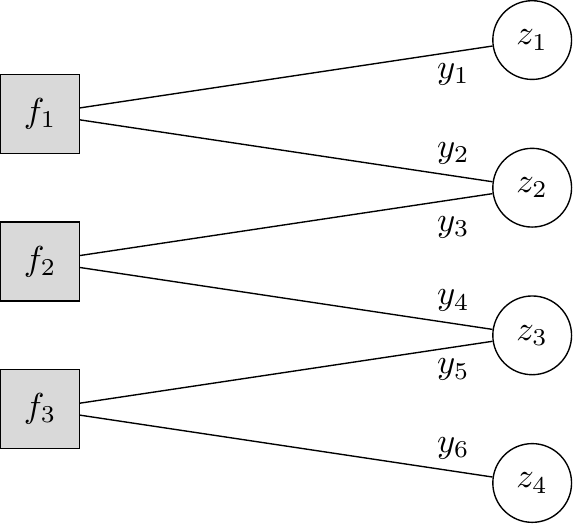}\\[-1em]
(b)
\end{minipage}
\caption{
\label{fig:ring_chain}
(a) $\bG$ with an odd-length cycle. Condition \eqref{eq:ca} 
requires $x_1=-x_2$, $x_3=-x_4$, $x_5=-x_6$, while \eqref{eq:cb}
requires
$x_1=x_6$, $x_2=x_3$, $x_4=x_5$. The only possible solution is
$x_i=0$ for $i=1,\dotsc,6$.
(b) $\bG$ having no cycle. Condition \eqref{eq:yij} requires
$y_1=y_2$, $y_3=y_4$, $y_5=y_6$, while \eqref{eq:yplus} requires
$y_1=0$, $y_2+y_3=0$, $y_4+y_5=0$, $y_6=0$. The only possible
solution is $y_i=0$ for $i=1,\dotsc,6$.
}
\end{figure}

Now we consider the second statement.
Assume that $\G$ has no cycles, since $\G$ and $\bG$ are connected, 
both must be trees.
Notice that $R \vy = \vy \neq {\bf 0}$ implies 
that for every pair of edges $e_i,e_j$
incident on $a \in \bF$ we have 
\begin{equation}
\label{eq:yij}
y_{e_i} = y_{e_j}
\end{equation}
On the other hand $B \vy = \bm{0}$, which requires that for every
set of edges $\{e_1, e_2,\dotsc,e_k\}$ incident on $b \in \bV$ we have
\begin{equation}
\label{eq:yplus}
y_{e_1} + y_{e_2} + \dotsm + y_{e_k} = 0
\end{equation}
The tree $\bG$ must have leaf nodes which are 
variable nodes because all function nodes have
degree $2$ and thus cannot be leaves. Consider a leaf node
$b \in \bV$ which must have only one incident 
edge $e_i=(a,b)$ for some $a\in \bF$.
Due to \eqref{eq:yplus}, we must have $y_{e_i} = 0$.
Denote the other edge incident on $a\in\bF$ by $e_j = (a,c)$ for
some $c\in \bV$. By \eqref{eq:yij} we 
also have $y_{e_j} = 0$. This implies
that the components of $\vy$ incident on $c \in \bV$ will also vanish. 
Since the graph is connected, and
propagating this argument for all nodes of the graph, we get
$\vy = \bm{0}$, which contradicts the assumption. Therefore,
$\G$ must have a cycle. See Fig.~\ref{fig:ring_chain}b for an example.
\end{proof}

\begin{lemma}\label{th:eigen_value_pert}
Let $u(\rho)$ be an eigenvalue of a matrix 
$U(\rho)$, depending on parameter
$\rho \in \mathbb{R}$, and such that $U(0) = \Omega$ where
$\Omega$ is orthogonal.
If $u(\rho)$ and $U(\rho)$ are differentiable at $\rho = 0$,
then
\begin{equation}
\label{eq:perturb}
\dfrac{{\rm d} u(0)}{{\rm d} \rho} = 
\bm{\omega}^\dagger \dfrac{{\rm d} U(0)}{{\rm d} \rho} \, \bm{\omega} 
\end{equation}
for some normalized eigenvector $\bm{\omega}$ of $\Omega$ 
with corresponding eigenvalue $u(0)$. Here
$\bm{\omega}^\dagger$ denotes the conjugate transpose of $\bm{\omega}$.
\end{lemma}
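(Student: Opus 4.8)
The plan is to run standard first-order eigenvalue perturbation theory, but to sidestep the fact that we are \emph{not} given differentiability of the eigenvector of $U(\rho)$ (only of $U(\rho)$ and of the branch $u(\rho)$). The one structural fact I would isolate first is a property of $\Omega$: being real and orthogonal, each of its eigenvalues $\lambda$ satisfies $|\lambda|=1$ and $\Omega^{-1}=\Omega^\top$, so taking the conjugate transpose of $\Omega\bm{\omega}=\lambda\bm{\omega}$ and using $\bar\lambda=\lambda^{-1}$ gives $\bm{\omega}^\dagger\Omega=\lambda\bm{\omega}^\dagger$. Thus the conjugate transpose of a right eigenvector of $\Omega$ is a \emph{left} eigenvector for the same eigenvalue; equivalently $\bm{\omega}^\dagger(\Omega-\lambda I)=\bm{0}^\top$. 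This is precisely what collapses the general perturbation formula $u'(0)=\bm{w}^\dagger U'(0)\bm{v}/(\bm{w}^\dagger\bm{v})$, with $\bm{w}$ an a priori unrelated left eigenvector, into the quadratic form $\bm{\omega}^\dagger U'(0)\bm{\omega}$.

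Next I would produce the vector $\bm{\omega}$ by compactness rather than by differentiating a putative smooth family of eigenvectors. For each small $\rho$ pick a unit right eigenvector $\bm{v}_\rho$ with $U(\rho)\bm{v}_\rho=u(\rho)\bm{v}_\rho$; since the unit sphere is compact there is a sequence $\rho_k\to 0$ with $\bm{v}_{\rho_k}\to\bm{\omega}$ for some unit vector $\bm{\omega}$. Passing to the limit in the eigenvalue equation and using $U(0)=\Omega$ together with continuity of $u$ yields $\Omega\bm{\omega}=u(0)\bm{\omega}$, so $\bm{\omega}$ is a normalized eigenvector of $\Omega$ with eigenvalue $u(0)$, as the statement requires.

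To extract the derivative I would write $U(\rho_k)-u(\rho_k)I=(\Omega-u(0)I)+\rho_k\big(U'(0)-u'(0)I\big)+E_k$, where differentiability of $U$ and $u$ at $0$ gives $\|E_k\|=o(\rho_k)$ in operator norm; apply this to $\bm{v}_{\rho_k}$ (the left side vanishes since $\bm{v}_{\rho_k}$ is an eigenvector of $U(\rho_k)$), left-multiply by $\bm{\omega}^\dagger$ so the $(\Omega-u(0)I)$ term drops by the identity above, and note $|\bm{\omega}^\dagger E_k\bm{v}_{\rho_k}|\le\|E_k\|=o(\rho_k)$ because both vectors have unit norm. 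Dividing by $\rho_k$ and letting $k\to\infty$, using $\bm{v}_{\rho_k}\to\bm{\omega}$, leaves $0=\bm{\omega}^\dagger\big(U'(0)-u'(0)I\big)\bm{\omega}=\bm{\omega}^\dagger U'(0)\bm{\omega}-u'(0)$, which is \eqref{eq:perturb}.

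I expect the main obstacle to be conceptual rather than computational: when $u(0)$ is a repeated eigenvalue of $\Omega$, only one direction inside the eigenspace ``continues'' into the perturbed eigenvectors, and an arbitrary eigenvector of $\Omega$ for eigenvalue $u(0)$ will in general fail \eqref{eq:perturb}. Defining $\bm{\omega}$ as a subsequential limit of genuine eigenvectors $\bm{v}_{\rho_k}$ handles this degeneracy and the absence of a differentiable eigenvector branch at once, at the price of only proving the identity for \emph{some} admissible $\bm{\omega}$ --- which is exactly what is claimed. Since in our application $U(\rho)=\Omega+\tfrac{\rho}{2}\tB$ is affine in $\rho$ and $\Omega$ is normal, one could alternatively invoke analytic perturbation theory to choose the eigenvector differentiably and differentiate the eigenvalue equation directly; I would nonetheless prefer the self-contained compactness route.
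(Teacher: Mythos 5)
Your proposal is correct and takes essentially the same route as the paper's own proof: both produce $\bm{\omega}$ as a subsequential limit of unit eigenvectors of $U(\rho_k)$, both use the orthogonality of $\Omega$ to identify $\bm{\omega}^\dagger$ as a left eigenvector (killing the zeroth-order term), and both divide by $\rho_k$ and pass to the limit using differentiability of $u$ and $U$ at $0$. The only difference is presentational (your explicit first-order expansion with an $o(\rho_k)$ remainder versus the paper's $\delta u$, $\delta U$ notation).
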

\begin{proof}
Since by assumption $u(\rho)$ and $U(\rho)$ are differentiable 
at $\rho =0$, they are well defined in a neighborhood of $\rho = 0$,
and therefore the following right- and left-eigenvalue
equations hold in such a neighborhood:
\begin{equation}
\label{eq:eigen_rho}
U(\rho) \bm{x}(\rho) = u(\rho) \bm{x}(\rho)
\end{equation}
where $\bm{x}(\rho)$ is some normalized eigenvector, i.e.
$\bm{x}(\rho)^\dagger \bm{x}(\rho) = 1$, and
\begin{equation}
\label{eq:eigen_rho_left}
\bm{y}(\rho)^\dagger U(\rho) = u(\rho) \bm{y}(\rho)^\dagger,
\end{equation}
where $\bm{y}(\rho)$ is some normalized eigenvector, 
i.e. $\bm{y}(\rho)^\dagger \bm{y}(\rho) = 1$.
Note that for each $\rho$ these equations might hold
for infinitely many $\bm{y}(\rho)$ and $\bm{x}(\rho)$.
We do not commit to any particular choice yet, but later we will
make a specific choice for certain values of $\rho$.

Define 
$\delta u(\rho) = u(\rho) - u(0)$,
$\delta U(\rho) = U(\rho) - U(0)$, and 
$\delta \bm{x}(\rho) = \bm{x}(\rho)-\bm{x}(0)$. 
From \eqref{eq:eigen_rho} we have 
\begin{equation}
\left[ U(0) + \delta U(\rho)\right]\bm{x}(\rho) = \left[ u(0)+\delta
u(\rho) \right]\bm{x}(\rho).
\end{equation}
Multiplying this equation on the left by $\bm{y}(0)^\dagger$ and using
\eqref{eq:eigen_rho_left} we obtain
\begin{equation}
\label{eq:gen_perb_eig_form} 
\delta u (\rho) \bm{y}(0)^\dagger \bm{x}(\rho)= 
\bm{y}(0)^\dagger \delta U(\rho) \bm{x}(\rho).
\end{equation}

Let $\{\rho_k\}$ be a sequence that converges to $0$.
For each $\rho_k$ fix a vector for the corresponding $\bm{x}(\rho_k)$
out of the potential infinitely many that might
satisfy \eqref{eq:eigen_rho}.
From $\bm{x}(\rho_k)^\dagger \bm{x}(\rho_k) = 1$ we know
that $\{ \bm{x}(\rho_k) \}$ is bounded. Therefore, 
there is a subsequence $\{\rho_{k_i}\}$ such 
that $\{\bm{x}(\rho_{k_i})\}$ converges to some limit vector 
$\bm{x}(0)$, which is also normalized. 
At this point we define $\bm{w} = \bm{x}(0)$.
From \eqref{eq:eigen_rho} and the continuity of $u(\rho)$
and $U(\rho)$ at $\rho=0$, we know that $\bm{w}$ 
satisfies $U(0)\bm{w} = u(0) \bm{w}$, i.e. $\bm{w}$ is an 
eigenvector of $U(0)=\Omega$ with eigenvalue $u(0)$. 
Since $U(0)$ is orthonormal, its left and
right eigenvectors are equal. 
Therefore, we also choose $\bm{y}(0) = \bm{x}(0) = \bm{w}$.

Dividing 
\eqref{eq:gen_perb_eig_form} 
by $\rho_{k_i}$
we can write 
\begin{equation} 
\dfrac{\delta u(\rho_{k_i})}{\rho_{k_i}} 
\bm{y}(0)^\dagger \bm{x}(\rho_{k_i}) =
\bm{y}(0)^\dagger 
\dfrac{\delta U(\rho_{k_i})}{\rho_{k_i}} 
\bm{x}(\rho_{k_i}).
\end{equation}
Taking the limit as $i \rightarrow \infty$ and using
differentiability of $u$ and $U$ at the origin, and also
the fact that $\bm{x}(\rho_{k_i}) \rightarrow \bm{x}(0) = \bm{w}$, we 
finally obtain \eqref{eq:perturb}.
\end{proof}

\begin{lemma}\label{th:no_odd_cycles_implies_eigen_values_not_present}
If the graph $\G$ does not have even length cycles, either $\Omega_S$, defined
in \eqref{eq:V2}, does not
have eigenvalue $-1$, or $- 1 - \rho/2$ is not an eigenvalue of $U$, 
defined in \eqref{eq:U1}.
\end{lemma}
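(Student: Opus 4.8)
The plan is to argue by contradiction and, as a first move, to collapse the disjunction. Suppose $\G$ has no even‑length cycle but that $-1$ is an eigenvalue of $\Omega_S$ \emph{and} $-1-\rho/2$ is an eigenvalue of $U$, say $U\bm{w}=(-1-\rho/2)\bm{w}$ with $\bm{w}\neq\bm{0}$. Using the identity $\Omega_S=\tfrac12(U+\eta\,U^{-1})$ with $\eta=1-\rho^2/4$ from Lemma~\ref{th:inverse_of_U_and_relation_to_V} (see \eqref{eq:V2}), and noting that $u=-1-\rho/2$ gives $u^2+\eta=2+\rho$ and hence $u+\eta u^{-1}=-2$, the \emph{same} vector satisfies $\Omega_S\bm{w}=-\bm{w}$. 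Thus the second alternative already forces the first, and it suffices to prove: if $\G$ has no even cycle, then $-1-\rho/2$ is not an eigenvalue of $U$.

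Second, I would pin down the structure forced on $\bm{w}$. Writing $U=\Omega+\tfrac\rho2\widetilde{B}$ with $\widetilde{B}=2B-I$ (Lemma~\ref{th:simplified_TA}), the eigenvalue equation rearranges to $\Omega\bm{w}=-(I+\rho B)\bm{w}$. Taking norms, using orthogonality of $\Omega$ and the orthogonal splitting $\bm{w}=B\bm{w}+B^\perp\bm{w}$, gives $\|\bm{w}\|^2=(1+\rho)^2\|B\bm{w}\|^2+\|B^\perp\bm{w}\|^2=\|B\bm{w}\|^2+\|B^\perp\bm{w}\|^2$, which forces $B\bm{w}=\bm{0}$ since $\rho>0$. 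Substituting back yields $\Omega\bm{w}=-\bm{w}$, i.e. $\widetilde{B}R\bm{w}=-\bm{w}$; applying $\widetilde{B}$ and using $\widetilde{B}^2=I$ with $B\bm{w}=\bm{0}$ gives $R\bm{w}=\bm{w}$. So $\bm{w}=B^\perp\bm{w}\neq\bm{0}$ is a $(+1)$‑eigenvector of $R$, and Lemma~\ref{th:eigen_prop_implies_graphs_prop} already guarantees that $\G$ contains a cycle.

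Third, the crux: upgrade ``$\G$ has a cycle'' to ``$\G$ has an \emph{even} cycle'', which contradicts the hypothesis. The conditions on $\bm{w}$ become purely combinatorial: $R\bm{w}=\bm{w}$ says $\bm{w}$ takes equal values on the two factor‑graph edges at each function node, so it descends to an edge function $c:\E\to\mathbb{R}$, and $B\bm{w}=\bm{0}$ says $\sum_{e\ni v}c(e)=0$ at every vertex $v$. I would then prune, in the spirit of the proof of Lemma~\ref{th:eigen_prop_implies_graphs_prop}: a degree‑one vertex forces $c$ to vanish on its edge, and deleting that edge reduces $\G$ without creating new constraints, so $c$ is supported on the $2$‑core; along a cycle the alternating sign of $c$ forced by $\sum_{e\ni v}c(e)=0$ at degree‑two vertices cannot close up around an odd cycle. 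Equivalently, $c$ lies in the kernel of the unsigned $|\V|\times|\E|$ incidence matrix, which for connected $\G$ is trivial exactly when $\G$ is a tree or unicyclic with an odd cycle.

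I expect this last step to be the real obstacle. The leaf‑pruning argument cleanly disposes of trees and of unicyclic graphs with an odd cycle (hence of all the graphs in the paper's examples), but it does not by itself kill a nonzero $c$ when $\G$ contains two edge‑disjoint odd cycles: already for the bowtie graph (two triangles glued at a vertex), which has no even cycle, the equations $\sum_{e\ni v}c(e)=0$ $(v\in\V)$ admit a nonzero solution $c$, so lifting it produces $\bm{w}$ with $R\bm{w}=\bm{w}$, $B\bm{w}=\bm{0}$, and hence $-1-\rho/2\in\operatorname{spec}(U)$. Completing the proof therefore seems to need either a strengthened hypothesis on $\G$ (no two edge‑disjoint odd cycles, which still covers the relevant regime) or a more careful combinatorial analysis of how the incident‑sum constraints propagate; the linear‑algebra reductions in the first two steps, by contrast, are routine.
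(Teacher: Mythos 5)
Your first two reductions are correct and in fact cleaner than the paper's own argument. The paper collapses the disjunction via simultaneous diagonalizability of $U$ and $\Omega_S$ (its Lemma~\ref{th:remark_about_simul_diag}) and then extracts $B\vv=\bm{0}$ from a perturbation identity for $\tfrac{d}{d\rho}u^+(-1)$ (its Lemma~\ref{th:eigen_value_pert}); your direct computation $u+\eta u^{-1}=-2$ for $u=-1-\rho/2$, followed by the norm identity $\|\bm{w}\|^2=(1+\rho)^2\|B\bm{w}\|^2+\|B^\perp\bm{w}\|^2$, reaches the same conclusions ($\Omega_S\bm{w}=-\bm{w}$, $B\bm{w}=\bm{0}$, $R\bm{w}=\bm{w}$) with none of that machinery. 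Both routes then reduce the lemma to the same combinatorial question: does a connected graph with no even cycle admit a nonzero edge function $c$ with $\sum_{e\ni v}c(e)=0$ at every vertex, i.e.\ a nonzero kernel vector of the unsigned incidence matrix?

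The obstacle you flag at that point is not a defect of your argument but of the lemma itself, and your bowtie counterexample is genuine. For two triangles glued at a vertex one checks directly that $c=(1,-1,-1)$ on one triangle and $(1,1,-1)$ on the other (signs arranged so each vertex sum vanishes) lifts to $\vv\neq\bm{0}$ with $R\vv=\vv$ and $B\vv=\bm{0}$, hence $\Omega_S\vv=-\vv$ and $U\vv=-(1+\rho/2)\vv$, even though the bowtie has no even \emph{simple} cycle. The paper's proof of this lemma hides the same issue: it traces a closed trail with non-repeating edges and alternating signs and declares it ``an even length cycle,'' but such a trail may revisit vertices (in the bowtie it is the length-$6$ circuit through the cut vertex), so it need not be a cycle in the standard sense. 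The correct dichotomy is the one you state: for connected $\G$ the kernel of the unsigned incidence matrix is trivial if and only if $\G$ is a tree or is unicyclic with an odd cycle; equivalently, the lemma becomes true if ``no even length cycle'' is either read as ``no even closed trail'' or replaced by ``tree or odd-unicyclic.'' Under that repaired hypothesis your pruning-plus-sign-alternation argument closes the proof, so your attempt is as complete as the statement permits.
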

\begin{proof}
From Lemmas~\ref{th:useful}~and~\ref{th:inverse_of_U_and_relation_to_V}
we know that all eigenvalues of $U$ must have the form \eqref{eq:eigen_u1}
for one of the sign choices and some eigenvalue $w_S$ of $\Omega_S$.
Since, by Lemma~\ref{th:range_omega_s}, we have $w_S \in [-1,1]$, 
the only way to obtain 
the eigenvalue $-1-\rho/2$ from
\eqref{eq:eigen_u1} 
is with $w_S = -1$ and a plus sign, 
which we denote by $u^+(-1) = -1-\rho/2$.
From Lemma~\ref{th:remark_about_simul_diag} we also 
know that $\Omega_S$ and $U$ are
both diagonalizable and commute, therefore, using a common eigenbasis,
any eigenvector $\vv$ of $U$ with eigenvalue $u^+(-1)$ must
also be an eigenvector of $\Omega_S$ with eigenvalue $w_S=-1$.

Henceforth, assume that $\G$ does not have even length cycles. Moreover,
assume that the following two eigenvalue equations hold:
\begin{equation}
\label{eq:noteigen2}
\Omega_S \vv = -\vv, \qquad
U\vv = -(1+\rho/2) \vv \qquad (\vv \neq \bm{ 0}) ,
\end{equation}
where $\vv$ is any normalized common eigenvector of $U$ and $\Omega_S$,
with respective eigenvalues $-(1+\rho/2)$ and $-1$,
and it does not depend on $\rho$.
We will show that these
assumptions lead to a contradiction, which proves the claim.

If \eqref{eq:noteigen2} holds, then 
$\tfrac{d}{d\rho}u^+(-1) = -\tfrac{1}{2}$, and by 
Lemma~\ref{th:eigen_value_pert}
we must have $\bm{\omega}^\dagger \tB \bm{\omega} = -1$ for some normalized 
eigenvector $\bm{\omega}$ of $U(0) = \Omega$ with eigenvalue $-1$. 
Now
\eqref{eq:noteigen2} is valid for $\rho = 0$, i.e. $U(0) \vv = - \vv$ for
any eigenvector $\vv$ with eigenvalue $-1$, therefore 
it is also valid for the vector $\bm{\omega}$. Thus, 
let us choose $\vv = \bm{\omega}$.
Using $\tB \equiv 2B - I = B - B^\perp$
we have 
\begin{equation}
\label{eq:vbv}
\vv^\dagger(B - B^\perp) \vv = -1.
\end{equation}
Note that $\| \vv^\dagger B \vv\| \le \| \vv \|^2 \| B\| = 1$, since
$\| \vv \|=1$ and $\| B \|=1$. 
Here $\|\cdot\|$ is the Euclidean norm for complex vectors. 
Moreover, $B$ is a symmetric (and thus Hermitian)
positive semidefinite matrix, which
means that $\vz^\dagger B \vz$ is real and non-negative 
for any non-zero complex
vector $\vz$. We thus have 
$\vv^\dagger B \vv \in [0,1]$, and  
analogously $\vv^\dagger B^\perp \vv \in
[0,1]$. From these facts and \eqref{eq:vbv} we conclude that
$\vv^\dagger B^\perp \vv = 1 $, 
which is equivalent to $B^\perp \vv = \vv \ne
\bm{0}$.
Furthermore, this immediately gives $B \vv = \bm{0}$.
Now, from the second item in 
Lemma~\ref{th:form_of_eig_of_omega_s} we have that 
$R (B^\perp \vv) = (B^\perp \vv)$, and upon using
Lemma~\ref{th:eigen_prop_implies_graphs_prop} we conclude that
$\G$ must have cycles.

\begin{figure}
\includegraphics{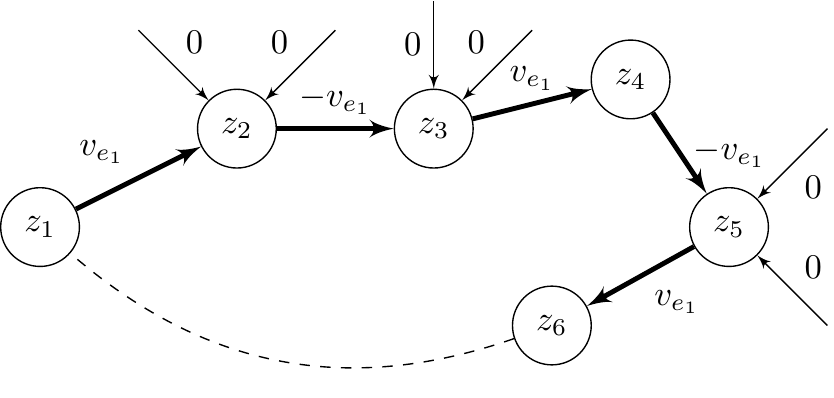}
\caption{
\label{fig:path_even}
The path $\mathcal{P}$ must be a cycle of even length otherwise
$\vv = \bm{0}$.
}
\end{figure}

To summarize, by assuming \eqref{eq:noteigen2} we concluded
that for $\vv \ne \bm{0}$ we have
\begin{equation}
\label{eq:concl} 
R\vv = \vv, \qquad 
B\vv = \bm{0},
\end{equation}
and that $\G$, and thus also $\bG$, has cycles.
The first eigenvalue equation in \eqref{eq:concl} requires
that pairs of edges incident in every function node $a \in \bF$ obey
\begin{equation}
\label{eq:concl3}
v_{e_i} = v_{e_j} \qquad 
\mbox{for $e_i,e_j \sim a\in\bF$},
\end{equation}
while the second equation in \eqref{eq:concl} requires
that all edges incident on variable nodes $b\in \bV$ add up to zero,
\begin{equation}
\label{eq:concl4}
\sum_{e\sim b} v_e = 0 \qquad \mbox{for all $b\sim \bV$}.
\end{equation}
We now construct a path $\mathcal{P}\subseteq\G$
while
obeying equations \eqref{eq:concl}. This obviously induces a path
$\bar{\mathcal{P}}$ on the associated factor graph $\bar{\G}$.
The
edges of $\G$ assume the values of the components of $\vv$, and 
we require that all  
edges in $\mathcal{P}$ are nonzero. 
First, note that \eqref{eq:concl3} imply that incoming and outgoing edges
of a function node must have the same value, thus if one edge is
nonzero it assures that the other edge is also nonzero. This
means that  $\bar{\mathcal{P}}$
cannot end on a function node. Therefore, we can remove 
function nodes 
altogether
from the picture and just think about edges and
variable nodes from the base graph $\G$. In this case, the only difference
compared to $\bG$ is that every edge in $\G$ will be duplicated in $\bG$.
Let us construct $\mathcal{P} \subseteq \G$ demanding that it has only nonzero
and non-repeating edges, 
and when we encounter a node which has an incident nonzero edge
we must move through this node. Furthermore, all the other edges which are
not part of $\mathcal{P}$ are set to zero.
Since $\vv\ne \bm{0}$ there exists at least one component
$v_{e_1} \ne 0$ over some edge $e_1=(z_1, z_2)$.
We start on $z_1 \in \bV$ and move to $z_2\in\bV$.
Because of \eqref{eq:concl4} the node $z_1$ cannot be a leaf
node, since this would require $v_{e_1} = 0$. Therefore, there exist
another edge $e_2=(z_2,z_3)$ with value $v_{e_2} = -v_{e_1}\ne 0$.
We thus move from $z_2$ to $z_3$, 
which again requires that over $e_3=(z_3,z_4)$ we have
$v_{e_3} = -v_{e_2} \ne 0$, and so on. See Fig.~\ref{fig:path_even} for
an illustration. Following this procedure,  
every edge in $\mathcal{P}$ has a nonzero
value, thus  $\mathcal{P}$ cannot end on any node, which implies that
it must be a cycle. Since all the edges in $\mathcal{P}$ have the
same value but alternating signs, $v_{e_1} = -v_{e_2} = v_{e_3} = -v_{e_4} =
\dotsm$, there must be
an even number of nodes in $\mathcal{P}$, otherwise we would have
$\vv = \bm{0}$. Therefore, we conclude that $\mathcal{P}$ must be
an even length cycle, which contradicts our original assumption.
This means that if $\G$ does not have even length cycles, both equations
\eqref{eq:noteigen2} cannot simultaneously hold.
\end{proof}

\begin{lemma}
\label{th:Uum}
If the graph $\G$ has an even length cycle, then the operator $\Omega_S$ 
has eigenvalue $-1$, and correspondingly $u^+(-1)=-1-\rho/2$ 
is an eigenvalue of $U$.
\end{lemma}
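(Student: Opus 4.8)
The plan is to exhibit an explicit nonzero vector $\vv \in \mathbb{R}^{|\bE|}$ that is simultaneously an eigenvector of $\Omega_S$ with eigenvalue $-1$ and of $U$ with eigenvalue $-1-\rho/2$; morally this is the converse of the path-following argument used in Lemma~\ref{th:no_odd_cycles_implies_eigen_values_not_present}. First I would fix a simple even cycle $C = (b_1, b_2, \dots, b_{2k}, b_1)$ in $\G$, which exists by hypothesis and has $2k \ge 4$ since $\G$ is simple. Writing $a_\ell \in \bF$ for the function node attached to the edge $(b_\ell, b_{\ell+1})$ of $C$ (indices mod $2k$), so that $N_{a_\ell} = \{b_\ell, b_{\ell+1}\}$, I would define $\vv$ by $v_{a_\ell b_\ell} = v_{a_\ell b_{\ell+1}} = (-1)^\ell$ for $\ell = 1,\dots,2k$, and $v_e = 0$ for every factor-graph edge $e$ not arising from an edge of $C$. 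Plainly $\vv \neq \vzero$.

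Next I would verify the two defining identities $R\vv = \vv$ and $B\vv = \vzero$, i.e. conditions \eqref{eq:concl3}--\eqref{eq:concl4}. The first is immediate from the block form $R_a = \left(\begin{smallmatrix}0&1\\1&0\end{smallmatrix}\right)$ in \eqref{eq:BR}: $R$ swaps the two components of $\vv$ sitting on a common function node, and by construction those two components are always equal. For the second, recall that $(B\vv)_e = \tfrac{1}{|N_b|}\sum_{e'\sim b}v_{e'}$ whenever $e$ is incident on the variable node $b$, so it suffices to show $\sum_{e'\sim b}v_{e'} = 0$ for every $b \in \bV$. If $b \notin C$, every incident factor-graph edge carries value $0$; if $b = b_\ell \in C$, the only nonzero contributions come from the two cycle edges at $b_\ell$, which carry $(-1)^{\ell-1}$ and $(-1)^\ell$ and cancel, while at $b_1$ they carry $(-1)^{2k} = 1$ and $(-1)^1 = -1$, which cancel precisely because $2k$ is even. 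Hence $R\vv = \vv$ and $B\vv = \vzero$.

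Finally I would read off the eigenvalues. Since $B\vv = \vzero$ we have $B^\perp\vv = \vv$, so using $\Omega_S = BRB - B^\perp R B^\perp$ from \eqref{eq:newOmegaS} together with $R\vv = \vv$ gives $\Omega_S\vv = -B^\perp R B^\perp \vv = -B^\perp \vv = -\vv$, so $-1$ is an eigenvalue of $\Omega_S$. For $U$, recall from \eqref{eq:U1} that $U = \Omega + \tfrac{\rho}{2}\tB$ with $\Omega = \tB R$ and $\tB = 2B - I$; then $\tB\vv = -\vv$ and $\Omega\vv = \tB R\vv = \tB\vv = -\vv$, whence $U\vv = -\bigl(1 + \tfrac{\rho}{2}\bigr)\vv$. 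Substituting $w_S = -1$ into \eqref{eq:eigen_u1} gives $u^+(-1) = -1 + i\sqrt{1-\rho^2/4-1} = -1-\rho/2$, matching the eigenvalue just found, so $u^+(-1) = -1-\rho/2$ is an eigenvalue of $U$, as claimed.

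The computations are all short; the only genuinely delicate point is the parity bookkeeping in the second step — the alternating $\pm1$ labels close up consistently around $C$ exactly when $|C|$ is even, and this is the sole place the hypothesis enters. A minor secondary subtlety is that $w_S = -1$ lies on the boundary of the interval $[-1,1]$ of Lemma~\ref{th:range_omega_s}, so Corollary~\ref{th:pairs} does not apply directly; producing the explicit eigenvector $\vv$ is what certifies that the occurring eigenvalue of $U$ is the ``$+$'' root $-1-\rho/2$ rather than $-1+\rho/2$.
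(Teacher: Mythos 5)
Your proposal is correct and follows essentially the same route as the paper's proof: both construct an explicit vector supported on the factor-graph image of an even cycle, with the two edges at each function node equal and the two at each variable node opposite, so that $R\vv=\vv$ and $B\vv=\vzero$, whence $\Omega_S\vv=-\vv$ and $U\vv=-(1+\rho/2)\vv$. Your alternating-sign formula $v_{a_\ell b_\ell}=v_{a_\ell b_{\ell+1}}=(-1)^\ell$ is in fact a cleaner and more self-consistent statement of the assignment the paper describes, and your closing remark about why the boundary case $w_S=-1$ requires the explicit eigenvector (rather than Corollary~\ref{th:pairs}) is exactly the right observation.
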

\begin{proof}
In what follows we index the entries of $\vv \in \mathbb{R}^{|\bE|}$ 
by the edges in $\bE$, thus $v_e$ is the 
value of edge $e\in \bE$.
We look at $B,R \in \mathbb{R}^{|\bE|\times|\bE|}$ as operators on 
edge values.
We will explicitly  construct an eigenvector $\vv\in \mathbb{R}^{|\bE|}$ 
such that $\Omega_S \vv = -\vv$ and $U \vv = -(1+\rho/2)\vv$.

If $\G$ has an even length cycle, then $\bG$ has a cycle,
which we denote by $\bar{\mathcal{C}}$ and it must 
cross an even number of function nodes $a\in\bF$.
For every $a \in \bF$ which is also part of the cycle $\bar{\mathcal{C}}$, 
let $e_i = (a,\,\cdot\,)$ and $e_j = (a,\,\cdot\,)$ be the two different
edges incident on $a$, and let $v_{e_i} = v_{e_j} = 1$.
For every $b \in \bV$ which is also part of $\bar{\mathcal{C}}$, pick
two different edges incident on $b$ and let
$v_{e_k} = -v_{e_\ell} = 1$, where $e_k = (\,\cdot\,, b)$ 
and $e_\ell=(\,\cdot\,, b)$.
For the remaining edges $e \in \bE$ which are not part of the cycle
$\bar{\mathcal{C}}$, let $v_e = 0$. With these requirements we satisfy
$R\vv = \vv$, 
since each function node has incident
edges of equal values $+1$ or $0$,
and $B \vv = \bm{0}$,  since each variable node have pairs of
incident edges with opposite signs $\pm 1$ or $0$. 
See Fig.~\ref{fig:even_length} for an example.

\begin{figure}
\includegraphics{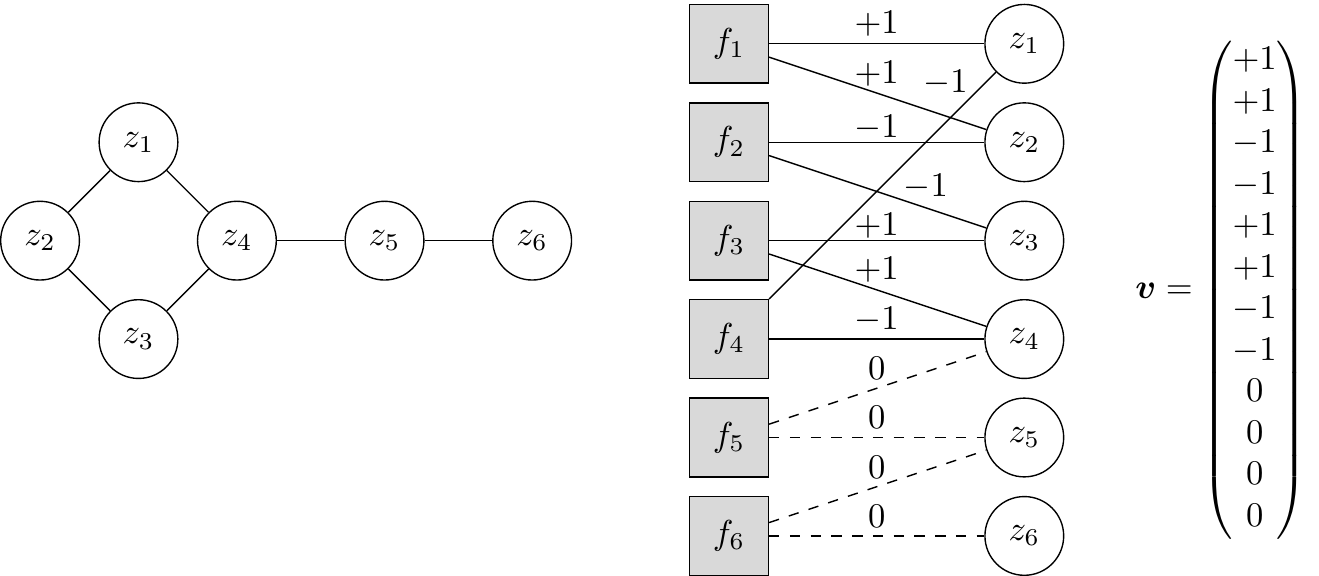}
\caption{
\label{fig:even_length}
Example of $\G$ having even length cycle, where 
$R\vv = \vv$ and $B\vv = \bm{0}$. The solid lines on the factor graph
$\bG$ indicate edges on the cycle $\mathcal{C}$, and the dashed lines edges
not in $\mathcal{C}$
}
\end{figure}

We explicitly constructed $\vv$ such that $R \vv = \vv$ and $B\vv = \bm{0}$.
This last equation immediately implies that $B^\perp \vv = \vv$.
From \eqref{eq:newOmegaS} we have
$\Omega_S = BRB - B^\perp R B^\perp$, therefore $\Omega_S \vv = -\vv$.
From \eqref{eq:U1} we have $U = (2B - I)R + \tfrac{\rho}{2}(2B - I)$, hence
$U \vv = -(1+\rho/2) \vv$, as claimed.
\end{proof}

We are now ready to prove Lemma~\ref{th:one_gamma_eigen} from the main
text, which is restated for convenience.

\begin{lemma}
\label{thA:one_gamma_eigen}
The matrix $T_A$ has eigenvalue $\lambda(T_A) = 1-\gamma$
if and only if the graph $\G$ has a cycle of even length.
\end{lemma}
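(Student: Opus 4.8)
The plan is to reduce the statement about $T_A$ to an equivalent one about the operator $U$ of Lemma~\ref{thA:simplified_TA}, and then quote the two structural lemmas already established above. From $T_A = \left(1-\tfrac{\gamma}{2}\right)I + \tfrac{\gamma}{\rho+2}U$, a scalar $\lambda$ is an eigenvalue of $T_A$ exactly when $\tfrac{\rho+2}{\gamma}\left(\lambda - 1 + \tfrac{\gamma}{2}\right)$ is an eigenvalue of $U$, with the same eigenvectors. Taking $\lambda = 1-\gamma$ turns this quantity into $-1-\tfrac{\rho}{2}$. Since $U$ does not depend on $\gamma$, the lemma is therefore equivalent, for every admissible $\gamma\in(0,2)$ and $\rho>0$, to the claim: $U$ has the eigenvalue $-1-\tfrac{\rho}{2}$ if and only if $\G$ has a cycle of even length.

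For the direction ``$\G$ has an even cycle $\Rightarrow$ $\lambda(T_A)=1-\gamma$'', I would invoke Lemma~\ref{th:Uum} directly, which asserts that $u^+(-1) = -1-\tfrac{\rho}{2}$ is an eigenvalue of $U$. Substituting back into the affine relation gives $\lambda(T_A) = 1-\tfrac{\gamma}{2} + \tfrac{\gamma}{\rho+2}\bigl(-\tfrac{\rho+2}{2}\bigr) = 1-\gamma$, a one-line check.

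For the converse, assume $\G$ has no even cycle. Lemma~\ref{th:no_odd_cycles_implies_eigen_values_not_present} yields a disjunction: either $\Omega_S$ does not have eigenvalue $-1$, or $U$ does not have eigenvalue $-1-\tfrac{\rho}{2}$. The first alternative still has to be converted into the second. Here I would use that, by Lemmas~\ref{th:useful} and~\ref{th:inverse_of_U_and_relation_to_V} together with $\lambda(\Omega_S)\in[-1,1]$ (Lemma~\ref{th:range_omega_s}), every eigenvalue of $U$ has the form $w_S \pm i\sqrt{1-\rho^2/4-w_S^2}$ for some eigenvalue $w_S$ of $\Omega_S$, and the value $-1-\tfrac{\rho}{2}$ occurs only for $w_S = -1$; combined with the simultaneous diagonalizability and common eigenbasis of $U$ and $\Omega_S$ (Lemma~\ref{th:remark_about_simul_diag}), any eigenvector of $U$ belonging to $-1-\tfrac{\rho}{2}$ is an eigenvector of $\Omega_S$ belonging to $-1$. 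Hence, if $\Omega_S$ lacks the eigenvalue $-1$, then $U$ lacks $-1-\tfrac{\rho}{2}$; in both branches of the disjunction $U$ has no eigenvalue $-1-\tfrac{\rho}{2}$, and therefore $T_A$ has no eigenvalue $1-\gamma$.

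I expect the main obstacle to be precisely this last step: turning the ``or'' produced by Lemma~\ref{th:no_odd_cycles_implies_eigen_values_not_present} into a clean statement, which forces one to use the exact spectral parametrization of $U$ in terms of $\Omega_S$ (Corollary~\ref{th:pairs}) together with the common eigenbasis of the two operators. Everything else is bookkeeping with the affine change of variables.
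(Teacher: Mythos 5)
Your proposal is correct and follows essentially the same route as the paper: the affine reduction to ``$U$ has eigenvalue $-1-\rho/2$'', Lemma~\ref{th:Uum} for the forward direction, and Lemma~\ref{th:no_odd_cycles_implies_eigen_values_not_present} combined with the fact (from Lemma~\ref{th:useful}) that $u=-1-\rho/2$ forces the corresponding eigenvalue of $\Omega_S$ to be $-1$, which collapses the disjunction. The only difference is cosmetic: you additionally invoke the common eigenbasis of $U$ and $\Omega_S$, which is not needed for this step since the converse part of Lemma~\ref{th:useful} already gives the implication at the level of eigenvalues.
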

\begin{proof}
We know from Lemma \ref{thA:simplified_TA} that 
$T_A$ has eigenvalue $1-\gamma$ if and only
if $U$ has eigenvalue $-1 - \frac{\rho}{2}$.
In addition, from Lemma~\ref{th:useful},
for $U$ to have eigenvalue $-1 - \frac{\rho}{2}$ 
it must be that $\Omega_S$ has eigenvalue $-1$.
With this in mind the remainder of the proof 
follows directly from
Lemma~\ref{th:no_odd_cycles_implies_eigen_values_not_present} and 
Lemma~\ref{th:Uum}.
\end{proof}

Finally, we show another important result from the main text,
Theorem~\ref{th:ADMM_convergence_rate}, which provides optimal parameter
tuning for ADMM when the graph $\G$ has even length cycles and low conductance.
The formulas for the parameters depend explicitly on the second largest
eigenvalue of the transition matrix $\W$.
We first restate the theorem for convenience.

\begin{theorem}[Optimal convergence rate for ADMM]
\label{thA:ADMM_convergence_rate}
Assume that the graph $\G$ has at least one cycle of even length, and 
conductance $\Phi \leq 1/2$.
Let $\W = \D^{-1} \AA$ be the transition matrix of a random walk on $\G$,
and denote its second largest
eigenvalue by $\omega^\star = \lambda_2(\W) \in (0,1)$.
Let $\lambda_2(T_A)$ be the second largest, in absolute value,
eigenvalue of $T_A$.
The best possible convergence rate of ADMM is thus given by
\begin{equation}
\label{eqA:optimal_rate_ADMM}
\tau^\star_A \equiv \min_{\gamma, \rho} |\lambda_2(T_A)| = \gamma^\star - 1,
\end{equation}
where
\begin{equation}
\label{eqA:optimal_gamma_rho_ADMM}
\gamma^\star = \dfrac{4}{3-\sqrt{(2-\rho^\star)/(2+\rho^\star)}} 
\qquad \mbox{ and } \qquad
\rho^\star = 2 \sqrt{1 - ({\omega^{\star}})^2}.
\end{equation}
\end{theorem}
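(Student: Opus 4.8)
The plan is to turn the two-parameter minimization $\min_{\gamma,\rho}|\lambda_2(T_A)|$ into a short list of explicit candidate moduli via Theorem~\ref{thA:omega_w} and Lemma~\ref{thA:one_gamma_eigen}, produce a matching lower and upper bound, and read off $\gamma^\star,\rho^\star$ from the point where the relevant candidates coincide.

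\textbf{Candidate eigenvalues.} Put $c:=1-\gamma/2$, which is positive since $\gamma\in(0,2)$, and $r_0:=\gamma/(2+\rho)$. For a pair \eqref{eqA:EigenTA_random} that is genuinely complex, a one-line expansion gives
\[
|\lambda^\pm(T_A)|^2 \;=\; c^2 \;+\; \frac{2\gamma c}{2+\rho}\,\lambda(\W) \;+\; \frac{\gamma^2}{(2+\rho)^2}\Big(1-\tfrac{\rho^2}{4}\Big),
\]
which is \emph{affine and strictly increasing} in $\lambda(\W)$; geometrically, the complex eigenvalues of $T_A$ lie on the circle of Theorem~\ref{thA:omega_w} centred at $c$, and their moduli grow with the real part. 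Hence, among the pairs coming from $\lambda(\W)\in(-1,1)$ the largest modulus is attained at $\lambda(\W)=\omega^\star=\lambda_2(\W)$, the largest eigenvalue of $\W$ below $1$ (which lies in $(0,1)$ by $\Phi\le 1/2$); call this dominant modulus $M(\gamma,\rho)$. Together with the eigenvalue $1-\gamma$ supplied by Lemma~\ref{thA:one_gamma_eigen} (available because $\G$ has an even cycle), these are \emph{genuine} eigenvalues of $T_A$, so for every admissible $(\gamma,\rho)$
\[
|\lambda_2(T_A)| \;\ge\; \max\big\{\,M(\gamma,\rho),\ |1-\gamma|\,\big\}.
\]

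\textbf{Lower bound via a one-dimensional reduction.} I would then show that for fixed $\gamma$ the map $\rho\mapsto M(\gamma,\rho)$ is strictly decreasing on $[0,\rho^\star]$ and strictly increasing on $[\rho^\star,\infty)$, where $\rho^\star:=2\sqrt{1-(\omega^\star)^2}$: on $[0,\rho^\star]$ the pair at $\omega^\star$ is complex and every $\rho$-dependent term in the displayed $|\lambda^\pm|^2$ decreases; on $[\rho^\star,\infty)$ the pair is real, $M$ equals the larger root $(1-\tfrac{\gamma}{2})+\tfrac{\gamma}{2+\rho}\big(\omega^\star+\sqrt{(\omega^\star)^2-1+\rho^2/4}\big)$, and a short derivative computation reduces the sign of $\mathrm{d}M/\mathrm{d}\rho$ to the positivity of $(1-(\omega^\star)^2)(2+\rho)^2$. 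Thus $\min_\rho M(\gamma,\rho)=M(\gamma,\rho^\star)=(1-\tfrac{\gamma}{2})+\tfrac{\gamma}{2+\rho^\star}\,\omega^\star$, an affine function of $\gamma$ that is strictly decreasing because $\tfrac{2\omega^\star}{2+\rho^\star}=\sqrt{(2-\rho^\star)/(2+\rho^\star)}<1$. Since $|1-\gamma|$ equals $1-\gamma$ on $(0,1]$ and $\gamma-1$ on $[1,2)$, the function $\gamma\mapsto\max\{M(\gamma,\rho^\star),|1-\gamma|\}$ decreases up to the unique point $\gamma^\star$ where $M(\gamma^\star,\rho^\star)=\gamma^\star-1$ and increases afterwards; solving that equation gives exactly \eqref{eqA:optimal_gamma_rho_ADMM} with minimal value $\gamma^\star-1$. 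Combined with the previous display, $\min_{\gamma,\rho}|\lambda_2(T_A)|\ge\gamma^\star-1$, and one checks $\gamma^\star\in(4/3,2)$, $\rho^\star\in(0,2)$ are admissible.

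\textbf{Matching upper bound and the main obstacle.} It remains to show that at $(\gamma^\star,\rho^\star)$ \emph{every} eigenvalue of $T_A$ other than the Perron value $1$ has modulus $\le\gamma^\star-1$. At $\rho=\rho^\star$ one has $1-(\rho^\star)^2/4=(\omega^\star)^2$, so the pair at $\omega^\star$ collapses to the double real eigenvalue $c+r_0\omega^\star$, which by the definition of $\gamma^\star$ equals $\gamma^\star-1$ and is precisely the rightmost point of the circle of radius $r^\star=r_0\omega^\star$ carrying all complex eigenvalues; since $c>0$ every complex eigenvalue therefore has modulus in $[\,|c-r^\star|,\ \gamma^\star-1\,]$. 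For the remaining real eigenvalues --- those arising from eigenvalues $\lambda(\W)\le-\omega^\star$ of $\W$, the value $1-\gamma^\star$ of Lemma~\ref{thA:one_gamma_eigen}, and (when $\G$ is bipartite) the $\lambda(\W)=-1$ contribution --- one uses that at $\rho=\rho^\star$ each is of the form $c+r_0 t$ with $t\in[-(2+\rho^\star)/2,\,0)$, whence it lies in $[\,1-\gamma^\star,\ 1-\gamma^\star/2\,)$ and so has modulus $\le\gamma^\star-1$ (using $\gamma^\star>4/3$). Hence $|\lambda_2(T_A)|=\gamma^\star-1$ there, which with the lower bound proves \eqref{eqA:optimal_rate_ADMM}. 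The hard part is exactly this last verification: one must juggle several eigenvalue families simultaneously and rule out that the real eigenvalues generated by the negative part of $\mathrm{spec}(\W)$ --- or the bipartite $\lambda(\W)=-1$ eigenvalue, which the ``for each'' clause of Theorem~\ref{thA:omega_w} does not cover directly --- overtakes $\gamma^\star-1$; it is the affine-in-$\lambda(\W)$ structure of the squared modulus, together with the collapse of the $\omega^\star$-pair onto the circle's rightmost point at $\rho=\rho^\star$, that keeps this bookkeeping tractable.
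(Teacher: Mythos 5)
Your proposal is correct and follows essentially the same route as the paper: identify the two competing candidates --- the complex pair generated by $\omega^\star=\lambda_2(\W)$ and the real eigenvalue $1-\gamma$ from Lemma~\ref{thA:one_gamma_eigen} --- shrink the circle of complex eigenvalues by increasing $\rho$ up to $\rho^\star=2\sqrt{1-(\omega^\star)^2}$, where that pair collapses onto the real axis, and then choose $\gamma^\star$ so that $|1-\gamma|$ matches the collapsed value. Your write-up is in fact tidier than the paper's: you separate a genuine lower bound (the minimax over the two unavoidable eigenvalue moduli) from a matching upper bound at $(\gamma^\star,\rho^\star)$, you verify monotonicity of the dominant modulus on both sides of $\rho^\star$, and your algebra produces the correct sign $3-\sqrt{(2-\rho^\star)/(2+\rho^\star)}$ in the denominator of $\gamma^\star$, consistent with the theorem statement. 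One small omission in your final bookkeeping: the real eigenvalues of $T_A$ are generated by the eigenvalues of the symmetric operator $\Omega_S$, not only by those of $\W$, and whenever $\G$ has a cycle $\Omega_S$ acquires an additional eigenvalue $+1$ whose ``$u^+$'' branch yields the positive real eigenvalue $1-\tfrac{\gamma}{2}+\tfrac{\gamma}{2}\tfrac{2-\rho}{2+\rho}$ of $T_A$; this is the eigenvalue the paper explicitly bounds as lying at distance $\tfrac{\gamma}{2}\tfrac{2-\rho}{2+\rho}$ from the centre, and it does not appear in your list ``$\lambda(\W)\le-\omega^\star$, the value $1-\gamma^\star$, and the bipartite $\lambda(\W)=-1$ contribution.'' The omission is harmless --- since $\tfrac{2-\rho}{2+\rho}<\sqrt{(2-\rho)/(2+\rho)}$ this point lies strictly inside the circle on the positive real axis, so its modulus is below $\gamma^\star-1$ --- but it must be included for your upper-bound verification to be complete.
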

\begin{proof}
First we need to determine the second largest eigenvalue of $T_A$ in absolute
value, denoted by $\lambda_2(T_A)$. From Theorem~\ref{thA:omega_w}
all the complex eigenvalues are centered at $1-\gamma/2$.
The real eigenvalue $\lambda(T_A) = 1 -\gamma$ 
is a distance $\gamma/2$ apart from the center, and so does 
$\lambda_1(T_A)$, and we know these are points on the extremes of the
interval where all real eigenvalues can lie. Since we are
not interested in $\lambda_1(T_A)=1$, the eigenvalue
$\lambda(T_A) = 1-\gamma$ can potentially be the
second largest since $0< \gamma < 2$. However, it
does not depend on $\rho$ so we can control its magnitude
by choosing $\gamma$ appropriately. 

Thus let us focus on the remaining
eigenvalues.
Every real eigenvalue of $T_A$ is at a smaller distance
than $\gamma/2$ from the center.
The second largest real eigenvalue of $T_A$ is obtained
from $u^-(1)$
which is at a distance
\begin{equation}
\label{eq:second_real_distance}
\dfrac{\gamma}{2}\left(\dfrac{2-\rho}{2+\rho}\right)
\end{equation}
from the center of the circle.
On the other hand, recall that any complex eigenvalue
is at a distance
\begin{equation}
\label{eq:distance_complex}
\dfrac{\gamma}{2}\sqrt{\dfrac{2-\rho}{2+\rho}}
\end{equation}
from the center, which is larger than \eqref{eq:second_real_distance}.
Therefore, besides $\lambda(T_A) = 1 - \gamma$ that can be controlled, 
$\lambda_2(T_A)$ must come from a complex conjugate pair
for some $0 < \lambda(\W) < 1$ in \eqref{eqA:EigenTA_random}. We
have
\begin{equation}
\label{eq:eigen_abs}
|\lambda^{\pm}(T_A)|^2 = \left(1-\dfrac{\gamma}{2}\right)^2
+ 2\left( 1-\dfrac{\gamma}{2} \right)\dfrac{\gamma}{2+\rho} \lambda(\W)
+ \dfrac{\gamma^2}{4}\dfrac{2-\rho}{2+\rho}.
\end{equation}
The first and third terms 
in \eqref{eq:eigen_abs}
do not depend on $\lambda(\W)$ and are the same for any eigenvalue.
Thus,
we must choose  the second largest eigenvalue
$\omega^\star = \lambda_2(\W)$, 
since we already excluded $\lambda_1(\W)=1$.
Thus
\begin{equation}\label{eq:lambda_2_gen}
\lambda_2(T_A) = \left(1-\dfrac{\gamma}{2}\right) +
\dfrac{\gamma}{2+\rho}\left( \omega^\star \pm i
\sqrt{1-\rho^2/4-(\omega^\star)^2} \right)
\qquad (\rho < 2).
\end{equation}

Notice that $\lambda_2(T_A)$ has smallest absolute value
when its imaginary part vanishes. Thus we can set
\begin{equation}\label{eq:optimal_rho}
\rho^\star = 2 \sqrt{1-(\omega^\star)^2}
\end{equation}
which gives
\begin{equation}\label{eq:optimal_TA}
\lambda^\star_2(T_A)=1-\dfrac{\gamma}{2} + 
\dfrac{\gamma \omega^\star}{2+\rho}
\end{equation}
Now we can make the remaining eigenvalue $|\lambda(T_A)| = |1-\gamma| $
match
\eqref{eq:optimal_TA}.
Writing $\omega^\star = \tfrac{1}{2}\sqrt{(2-\rho^\star)(2+\rho^\star)}$ 
and solving for $\gamma$ yields
\begin{equation}\label{eq:optimal_gamma}
\gamma^\star = \dfrac{4}{3+\sqrt{\tfrac{2-\rho^\star}{2+\rho^\star}}}.
\end{equation}
Finally,
$\tau_A^\star = \min |\lambda_2(T_A)| = |1-\gamma^\star|$ with parameters
given by \eqref{eq:optimal_rho} and \eqref{eq:optimal_gamma}.
\end{proof}

\section{Proof of Theorem~\ref{th:main_theorem_about_TA_and_TG}}
\label{sec:proof_conjecture}

Our last result is Theorem~\ref{th:main_theorem_about_TA_and_TG} from
the main text which proves conjecture \eqref{eq:conjecture}, 
proposed based 
on an analogy with lifted Markov chains \cite{FrancaBentoMarkov}.
Let us first restate the theorem.

\begin{theorem}[ADMM speedup]
\label{thA:main_theorem_about_TA_and_TG}
Assume that the graph $\G$ has an even length cycle and conductance
$\Phi \leq 1/2$, such that
Theorem~\ref{thA:ADMM_convergence_rate} holds.
Then, there is
$C = 1 - \mathcal{O}\big(\sqrt{\delta}\big)$ such
that
\begin{equation}
\label{eqA:ADMM_GD_Square}
C \big(1-\tau_G^\star\big) 
\le 
\big(1-\tau_A^\star\big)^2 \le
2 \Delta C \big(1-\tau^\star_G\big),
\end{equation}
where $\Delta = d_{\textnormal{max}}/d_{\textnormal{min}}$ is the ratio of
the maximum to the minimum degree of $\G$.
Here $\delta = 1 - \omega^\star$ is the spectral gap.
\end{theorem}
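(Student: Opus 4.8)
The plan is to reduce both sides of \eqref{eqA:ADMM_GD_Square} to explicit functions of the normalized spectral gap $\delta = 1-\omega^\star$ and the degree ratio $\Delta$, and then compare the two. I would start from the two closed forms already in hand: by Theorem~\ref{thA:ADMM_convergence_rate}, $1-\tau_A^\star = 2-\gamma^\star$ with $\gamma^\star,\rho^\star$ as in \eqref{eqA:optimal_gamma_rho_ADMM}; and by \eqref{eq:tau_g_star}, $1-\tau_G^\star = 2\ell^\star/(\bell+\ell^\star)$, where $\ell^\star$ is the smallest nonzero eigenvalue and $\bell$ the largest eigenvalue of the Laplacian $\L$.

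\emph{The gradient-descent side.} Using $\lambda(\W) = 1-\lambda(\LL)$, the second smallest eigenvalue of the normalized Laplacian $\LL$ is exactly $\delta$. The comparison bounds \eqref{eq:bound1}--\eqref{eq:bound2} then give $d_{\textnormal{min}}\,\delta \le \ell^\star \le d_{\textnormal{max}}\,\delta$ and $d_{\textnormal{max}} \le \bell \le 2 d_{\textnormal{max}}$, hence $1/\delta \le \bell/\ell^\star \le 2\Delta/\delta$. Writing $1-\tau_G^\star = 2\big/(\bell/\ell^\star + 1)$ and inserting these bounds yields
\[
\frac{2\delta}{2\Delta+\delta} \;\le\; 1-\tau_G^\star \;\le\; \frac{2\delta}{1+\delta}.
\]

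\emph{The ADMM side.} Substitute $\rho^\star = 2\sqrt{1-(\omega^\star)^2} = 2\sqrt{\delta(2-\delta)}$ into \eqref{eqA:optimal_gamma_rho_ADMM} and set $s \equiv \sqrt{(2-\rho^\star)/(2+\rho^\star)}$, so that $1-\tau_A^\star = 2(1-s)/(3-s)$. Using the identity $1-s^2 = 2\rho^\star/(2+\rho^\star)$ to rewrite $(1-s)^2$, one gets
\[
(1-\tau_A^\star)^2 \;=\; \frac{16\,(\rho^\star)^2}{(3-s)^2(2+\rho^\star)^2(1+s)^2}
\;=\; 2\delta\, C_0, \qquad
C_0 \equiv \frac{32\,(2-\delta)}{(3-s)^2(2+\rho^\star)^2(1+s)^2}.
\]
Since $s\to1$ and $\rho^\star\to0$ as $\delta\to0$, we have $C_0\to1$; expanding about $\omega^\star=1$, and noting that $\rho^\star$ and $1-s$ are of order $\sqrt\delta$, shows $C_0 = 1 - \mathcal{O}(\sqrt\delta)$.

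\emph{Combining.} Take $C \equiv C_0\big(1+\tfrac{\delta}{2\Delta}\big)$, which is still $1 - \mathcal{O}(\sqrt\delta)$ and strictly positive. Since $\Delta\ge1$ we have $1+\tfrac{\delta}{2\Delta}\le1+\delta$, so the upper estimate for $1-\tau_G^\star$ gives $\big(1+\tfrac{\delta}{2\Delta}\big)(1-\tau_G^\star)\le 2\delta$; multiplying by $C_0$ yields $C(1-\tau_G^\star)\le 2\delta C_0 = (1-\tau_A^\star)^2$, which is the left inequality of \eqref{eqA:ADMM_GD_Square} and in particular proves conjecture \eqref{eq:conjecture}. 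The lower estimate for $1-\tau_G^\star$ gives $(2\Delta+\delta)(1-\tau_G^\star)\ge 2\delta$; multiplying by $C_0$ and using $C_0(2\Delta+\delta) = 2\Delta C$ yields $2\Delta C(1-\tau_G^\star)\ge 2\delta C_0 = (1-\tau_A^\star)^2$, the right inequality. The main obstacle is the last step that looks routine but is delicate: verifying rigorously that $C_0 = 1-\mathcal{O}(\sqrt\delta)$ — the relevant expansion is in powers of $\sqrt\delta$, not $\delta$, because $\rho^\star\sim 2\sqrt{2\delta}$ — and then threading the $\delta$- and $\Delta$-dependent corrections so that exactly the clean constant $2\Delta$ survives; everything else is elementary algebra together with the Zumstein comparison bounds.
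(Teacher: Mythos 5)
Your proposal is correct and follows essentially the same route as the paper: the Zumstein comparison bounds give $\tfrac{2\delta}{2\Delta+\delta}\le 1-\tau_G^\star\le\tfrac{2\delta}{1+\delta}$, the optimal-parameter formulas reduce $(1-\tau_A^\star)^2$ to $2\delta$ times a factor that is $1-\mathcal{O}(\sqrt\delta)$, and the two are combined in the same way. Your version is in fact slightly tighter in bookkeeping --- you keep the exact closed form $2\delta C_0$ instead of the paper's Taylor expansion $2\delta\bigl(1-2\sqrt{2\delta}+\mathcal{O}(\delta)\bigr)$, and you exhibit a single explicit $C=C_0\bigl(1+\tfrac{\delta}{2\Delta}\bigr)$ valid for both inequalities, whereas the paper implicitly uses two separate $\mathcal{O}(\sqrt\delta)$ corrections.
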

\begin{proof}
Using the bounds 
\eqref{eq:bounds} into \eqref{eq:tau_g_star}
we have
\begin{equation}
\label{eq:TauG_Delta}
\tau^\star_G \geq 
\dfrac{d_{\max} - \lambda_{|\E|-1}(\L)}{d_{\max} + \lambda_{|\E|-1}(\L)} \geq
\dfrac{d_{\max} - d_{\max}\lambda_{|\E|-1}(\LL)}{
d_{\max} + d_{\max}\lambda_{|\E|-1}(\LL)} = 
\dfrac{\lambda_2(\W)}{2 - \lambda_2(\W)} = 
\dfrac{1 - \delta}{1 + \delta}
\end{equation}
where we used $\lambda_2(\W) = 1 - \lambda_{|\E|-1}(\L)$.
Analogously, we also have the following upper bound:
\begin{equation}
\label{eq:TauG_Delta2}
\tau^\star_G \le 
\dfrac{ 
2d_{\textnormal{max}} - \lambda_{|\E|-1}(\L) }{  
2d_{\textnormal{max}} + \lambda_{|\E|-1}(\L) } 
\le
\dfrac{ 
2d_{\textnormal{max}} - d_{\textnormal{min}}\lambda_{|\E|-1}(\LL) }{  
2d_{\textnormal{max}} + d_{\textnormal{min}}\lambda_{|\E|-1}(\LL) } 
= \dfrac{2 \Delta - \delta}{2 \Delta + \delta}
\end{equation}
where we defined $\Delta \equiv 
d_{\textnormal{max}}/d_{\textnormal{min}} \ge 1$.

Let us consider the leading order behaviour of $\tau_A^\star$.
Writing in terms of the spectral gap $\delta = 1 - \lambda_2(\W)$,
from 
\eqref{eqA:optimal_gamma_rho_ADMM} and
\eqref{eqA:optimal_rate_ADMM} we have
\begin{align}
\label{eq:TauA_Delta0}
\tau_A^\star &= 1 - \sqrt{2\delta} + 2 \delta +  \mathcal{O}(\delta^{3/2}), \\
(1-\tau_A^\star)^2 &= 2\delta\big(
1 - 2\sqrt{2\delta} + \mathcal{O}(\delta)
\big).
\label{eq:TauA_Delta}
\end{align}
From inequality \eqref{eq:TauG_Delta} we obtain
\begin{equation}
1-\tau_G^{\star} \le \dfrac{2\delta}{1+\delta} \le 2\delta
\end{equation}
Using this into \eqref{eq:TauA_Delta} we obtain
the lower bound
\begin{equation}
(1-\tau_A^\star)^2  
\ge (1-\tau_G^\star)(1  - \mathcal{O}\big(\sqrt{\delta})\big)
\end{equation}
which is
conjecture \eqref{eq:conjecture}.
Analogously, from \eqref{eq:TauG_Delta2} obtain
\begin{equation} 
2\delta \le
2\Delta(1-\tau_G^\star)\left(1+\tfrac{\delta}{2\Delta}\right),
\end{equation}
which replaced into \eqref{eq:TauA_Delta}
gives 
\begin{equation}
(1-\tau_A^\star)^2 \le 2\Delta (1-\tau_G^\star)\big( 1 -
\mathcal{O}\big(\sqrt{\delta}\big) \big)
\end{equation}
and the proof is complete.
\end{proof}


\begin{thebibliography}{10}

\bibitem{Boyd}
S.~Boyd, N.~Parikh, E.~Chu, B.~Peleato, and J.~Eckstein.
\newblock {Distributed Optimization and Statistical Learning via the
  Alternating Direction Method of Multipliers}.
\newblock {\em Foundations and Trends in Machine Learning}, 3(1):1--122, 2010.

\bibitem{Hong2017}
Mingyi Hong and Zhi-Quan Luo.
\newblock {On the Linear Convergence of the Alternating Direction Method of
  Multipliers}.
\newblock {\em Mathematical Programming}, 162(1):165--199, 2017.

\bibitem{Wang}
Y.~Wang, W.~Yin, and J.~Zeng.
\newblock {Global Convergence of ADMM in Nonconvex Nonsmooth Optimization}.
\newblock {arXiv:1511.06324 [math.OC]}, 2016.

\bibitem{Bento1}
N.~Derbinsky, J.~Bento, V.~Elser, and J.~Yedidia.
\newblock {An Improved Three-Weight Message Passing Algorithm}.
\newblock {arXiv:1305.1961v1 [cs.AI]}, 2013.

\bibitem{Bento2}
J.~Bento, N.~Derbinsky, J.~Alonso-Mora, and J.~Yedidia.
\newblock {A Message-Passing Algorithm for Multi-Agent Trajectory Planning}.
\newblock {\em Advances in Neural Information Processing Systems 26}, pages
  521--529, 2013.

\bibitem{Tom}
Z.~Xu, S.~De, M.~Figueiredo, C.~Studer, and T.~Goldstein.
\newblock {An Empirical Study of ADMM for Nonconvex Problems}.
\newblock {\em NIPS 2016 Workshop on Nonconvex Optimization for Machine
  Learning: Theory and Practice}, 2016.
\newblock {arXiv:1612.03349 [math.OC]}.

\bibitem{FrancaBento}
G.~Fran{\c{c}}a and J.~Bento.
\newblock {An Explicit Rate Bound for Over-Relaxed ADMM}.
\newblock In {\em {IEEE} International Symposium on Information Theory, {ISIT}
  2016, Barcelona, Spain, July 10-15}, pages 2104--2108, 2016.

\bibitem{Jordan}
R.~Nishihara, L.~Lessard, B.~Recht, A.~Packard, and M.~I. Jordan.
\newblock {A General Analysis of the Convergence of ADMM}.
\newblock {\em Int. Conf. on Machine Learning}, 32, 2015.

\bibitem{giselsson2017linear}
Pontus Giselsson and Stephen Boyd.
\newblock Linear convergence and metric selection for douglas-rachford
  splitting and admm.
\newblock {\em IEEE Transactions on Automatic Control}, 62(2):532--544, 2017.

\bibitem{Deng2016}
Wei Deng and Wotao Yin.
\newblock On the global and linear convergence of the generalized alternating
  direction method of multipliers.
\newblock {\em Journal of Scientific Computing}, 66(3):889--916, 2016.

\bibitem{erseghe2011fast}
T.~Erseghe, D.~Zennaro, E.~Dall'Anese, and L.~Vangelista.
\newblock {Fast Consensus by the Alternating Direction Multipliers Method}.
\newblock {\em IEEE Transactions on Signal Processing}, 59(11):5523--5537,
  2011.

\bibitem{Ghadimi2014averaging}
E.~Ghadimi, A.~Teixeira, M.~G. Rabbat, and M.~Johansson.
\newblock The admm algorithm for distributed averaging: Convergence rates and
  optimal parameter selection.
\newblock In {\em 2014 48th Asilomar Conference on Signals, Systems and
  Computers}, pages 783--787, Nov 2014.

\bibitem{zhu2003semi}
X.~Zhu, Z.~Ghahramani, and J.~Lafferty.
\newblock Semi-supervised learning using gaussian fields and harmonic
  functions.
\newblock In {\em International Conference on Machine learning}, pages
  912--919, 2003.

\bibitem{FrancaBentoMarkov}
G.~Fran{\c{c}}a and Jos{\'{e}} Bento.
\newblock {Markov Chain Lifting and Distributed ADMM}.
\newblock {\em IEEE Signal Processing Letters}, 24:294--298, 2017.

\bibitem{chen1999lifting}
F.~Chen, L.~Lov{\'a}sz, and L.~Pak.
\newblock {Lifting Markov Chains to Speed up Mixing}.
\newblock In {\em Proceedings of the thirty-first annual ACM symposium on
  Theory of computing}, pages 275--281, 1999.

\bibitem{shi2014linear}
W.~Shi, Q.~Ling, K.~Yuan, G.~Wu, and W.~Yin.
\newblock {On the Linear Convergence of the ADMM in Decentralized Consensus
  Optimization}.
\newblock {\em IEEE Transactions on Signal Processing}, 62(7):1750--1761, 2014.

\bibitem{giselsson2014diagonal}
Pontus Giselsson and Stephen Boyd.
\newblock Diagonal scaling in douglas-rachford splitting and admm.
\newblock In {\em Decision and Control (CDC), 2014 IEEE 53rd Annual Conference
  on}, pages 5033--5039. IEEE, 2014.

\bibitem{davis2014convergence}
D.~Davis and W.~Yin.
\newblock {Convergence Rate Analysis of Several Splitting Schemes}.
\newblock arXiv:1406.4834, 2014.

\bibitem{davis2017faster}
Damek Davis and Wotao Yin.
\newblock Faster convergence rates of relaxed peaceman-rachford and admm under
  regularity assumptions.
\newblock {\em Mathematics of Operations Research}, 2017.

\bibitem{wei2012distributed}
E.~Wei and A.~Ozdaglar.
\newblock {Distributed Alternating Direction Method of Multipliers}.
\newblock In {\em 51st IEEE Conference on Decision and Control}, pages
  5445--5450, 2012.

\bibitem{ling2015dlm}
Qing Ling, Wei Shi, Gang Wu, and Alejandro Ribeiro.
\newblock Dlm: Decentralized linearized alternating direction method of
  multipliers.
\newblock {\em IEEE Transactions on Signal Processing}, 63(15):4051--4064,
  2015.

\bibitem{makhdoumi2014broadcast}
Ali Makhdoumi and Asuman Ozdaglar.
\newblock Broadcast-based distributed alternating direction method of
  multipliers.
\newblock In {\em Communication, Control, and Computing (Allerton), 2014 52nd
  Annual Allerton Conference on}, pages 270--277. IEEE, 2014.

\bibitem{makhdoumi2017convergence}
Ali Makhdoumi and Asuman Ozdaglar.
\newblock Convergence rate of distributed admm over networks.
\newblock {\em IEEE Transactions on Automatic Control}, 2017.

\bibitem{ling2016communication}
Qing Ling, Yaohua Liu, Wei Shi, and Zhi Tian.
\newblock Communication-efficient weighted admm for decentralized network
  optimization.
\newblock In {\em Acoustics, Speech and Signal Processing (ICASSP), 2016 IEEE
  International Conference on}, pages 4821--4825. IEEE, 2016.

\bibitem{TomConsensus}
Z.~Xu, G.~Taylor, H.~Li, M.~Figueiredo, X.~Yuan, and T.~Goldstein.
\newblock Adaptive consensus admm for distributed optimization.
\newblock In {\em International Conference on Machine learning}, 2017.

\bibitem{olfati2004consensus}
Reza Olfati-Saber and Richard~M Murray.
\newblock Consensus problems in networks of agents with switching topology and
  time-delays.
\newblock {\em IEEE Transactions on automatic control}, 49(9):1520--1533, 2004.

\bibitem{tron2008distributed}
Roberto Tron, Ren{\'e} Vidal, and Andreas Terzis.
\newblock Distributed pose averaging in camera networks via consensus on se
  (3).
\newblock In {\em Distributed Smart Cameras, 2008. ICDSC 2008. Second ACM/IEEE
  International Conference on}, pages 1--10. IEEE, 2008.

\bibitem{tron2013riemannian}
Roberto Tron, Bijan Afsari, and Ren{\'e} Vidal.
\newblock Riemannian consensus for manifolds with bounded curvature.
\newblock {\em IEEE Transactions on Automatic Control}, 58(4):921--934, 2013.

\bibitem{teixeira2013optimal}
Andr{\'e} Teixeira, Euhanna Ghadimi, Iman Shames, Henrik Sandberg, and Mikael
  Johansson.
\newblock {Optimal Scaling of the ADMM Algorithm for Distributed Quadratic
  Programming}.
\newblock In {\em 52nd IEEE Conference on Decision and Control}, pages
  6868--6873. IEEE, 2013.

\bibitem{ghadimi2015optimal}
E.~Ghadimi, A.~Teixeira, I.~Shames, and M.~Johansson.
\newblock {Optimal Parameter Selection for the Alternating Direction Method of
  Multipliers (ADMM): Quadratic Problems}.
\newblock {\em IEEE Transactions on Automatic Control}, 60(3):644--658, 2015.

\bibitem{iutzeler2014linear}
Franck Iutzeler, Pascal Bianchi, Ph~Ciblat, and Walid Hachem.
\newblock Linear convergence rate for distributed optimization with the
  alternating direction method of multipliers.
\newblock In {\em Decision and Control (CDC), 2014 IEEE 53rd Annual Conference
  on}, pages 5046--5051. IEEE, 2014.

\bibitem{iutzeler2016explicit}
F.~Iutzeler, P.~Bianchi, P.~Ciblat, and W.~Hachem.
\newblock {Explicit Convergence Rate of a Distributed Alternating Direction
  Method of Multipliers}.
\newblock {\em IEEE Transactions on Automatic Control}, 61(4):892--904, 2016.

\bibitem{mokhtari2015decentralized}
Aryan Mokhtari, Wei Shi, Qing Ling, and Alejandro Ribeiro.
\newblock Decentralized quadratically approximated alternating direction method
  of multipliers.
\newblock In {\em Signal and Information Processing (GlobalSIP), 2015 IEEE
  Global Conference on}, pages 795--799. IEEE, 2015.

\bibitem{raghunathan2014optimal}
Arvind~U Raghunathan and Stefano Di~Cairano.
\newblock Optimal step-size selection in alternating direction method of
  multipliers for convex quadratic programs and model predictive control.
\newblock In {\em Proceedings of Symposium on Mathematical Theory of Networks
  and Systems}, pages 807--814, 2014.

\bibitem{raghunathan2014admm}
Arvind~U Raghunathan and Stefano Di~Cairano.
\newblock Admm for convex quadratic programs: Q-linear convergence and
  infeasibility detection.
\newblock {\em arXiv preprint arXiv:1411.7288}, 2014.

\bibitem{boley2013local}
Daniel Boley.
\newblock Local linear convergence of the alternating direction method of
  multipliers on quadratic or linear programs.
\newblock {\em SIAM Journal on Optimization}, 23(4):2183--2207, 2013.

\bibitem{han2013local}
Deren Han and Xiaoming Yuan.
\newblock Local linear convergence of the alternating direction method of
  multipliers for quadratic programs.
\newblock {\em SIAM Journal on numerical analysis}, 51(6):3446--3457, 2013.

\bibitem{he2000alternating}
BS~He, Hai Yang, and SL~Wang.
\newblock Alternating direction method with self-adaptive penalty parameters
  for monotone variational inequalities.
\newblock {\em Journal of Optimization Theory and applications},
  106(2):337--356, 2000.

\bibitem{xu2016adaptive}
Zheng Xu, M{\'a}rio~AT Figueiredo, and Tom Goldstein.
\newblock Adaptive admm with spectral penalty parameter selection.
\newblock {\em arXiv preprint arXiv:1605.07246}, 2016.

\bibitem{xu2017adaptive}
Zheng Xu, Mario~AT Figueiredo, Xiaoming Yuan, Christoph Studer, and Tom
  Goldstein.
\newblock Adaptive relaxed admm: Convergence theory and practical
  implementation.
\newblock {\em arXiv preprint arXiv:1704.02712}, 2017.

\bibitem{jung2007fast}
K.~Jung, D.~Shah, and J.~Shin.
\newblock {Fast Gossip Through Lifted Markov Chains}.
\newblock In {\em Proc. Allerton Conf. on Comm., Control, and Computing,
  Urbana-Champaign, IL}, 2007.

\bibitem{li2010location}
W.~Li, H.~Dai, and Y.~Zhang.
\newblock {Location-Aided Fast Distributed Consensus in Wireless Networks}.
\newblock {\em IEEE Transactions on Information Theory}, 56(12):6208--6227,
  2010.

\bibitem{jung2010distributed}
K.~Jung, D.~Shah, and J.~Shin.
\newblock {Distributed Averaging via Lifted Markov Chains}.
\newblock {\em IEEE Transactions on Information Theory}, 56(1):634--647, 2010.

\bibitem{MarkovChainsPeres}
D.~A. Levin, Y.~Peres, and E.~L. Wilmer.
\newblock {\em {Markov Chains and Mixing Times}}.
\newblock American Mathematical Society, Providence, Rhode Island, 2009.

\bibitem{Cheeger}
J.~Cheeger.
\newblock {A lower bound for the smallest eigenvalue of the Laplacian}.
\newblock In {\em {Proceedings of the Princeton conference in honor of
  Professor S. Bochner}}, pages 195--199, 1969.

\bibitem{lovasz1999faster}
L{\'a}szl{\'o} Lov{\'a}sz and Ravi Kannan.
\newblock Faster mixing via average conductance.
\newblock In {\em Proceedings of the thirty-first annual ACM symposium on
  Theory of computing}, pages 282--287. ACM, 1999.

\bibitem{zumstein2005comparison}
Philipp Zumstein.
\newblock Comparison of spectral methods through the adjacency matrix and the
  laplacian of a graph.
\newblock {\em TH Diploma, ETH Z{\"u}rich}, 2005.

\end{thebibliography}

\end{document}